\def\MODE{1} 
\DeclarePairedDelimiterX{\inp}[2]{\langle}{\rangle}{#1, #2}
\newtheorem{theorem}{Theorem}
\newtheorem{remark}{Remark}
\newtheorem{corollary}[theorem]{Corollary}
\newtheorem{lemma}[theorem]{Lemma}
\newtheorem{definition}{Definition}
\newcommand{\eat}[1]{}
\newcommand{\ie}{{\emph i.e.,} }
\newcommand{\eg}{{\it e.g.}, }
\DeclarePairedDelimiter{\ceil}{\lceil}{\rceil}
\numberwithin{equation}{section}
\newcommand{\draco}{\textsc{Draco}}
\newcommand{\WH}[1]{}
\newcommand{\ST}[1]{}
\newcommand{\XP}[1]{}
\title{ \draco{}:\\ Byzantine-resilient Distributed Training via  Redundant Gradients}
\author{
Lingjiao Chen, Hongyi Wang, Zachary Charles, Dimitris Papailiopoulos\\
University of Wisconsin-Madison
} 
\date{}
\begin{document}

\maketitle

\begin{abstract}
Distributed model training is vulnerable to byzantine system failures and adversarial compute nodes, \ie nodes that use malicious updates to corrupt the global model stored at a parameter server (PS). 
To guarantee some form of robustness, recent work suggests using variants of the geometric median as an aggregation rule, in place of gradient averaging. 
Unfortunately,   median-based  rules can incur a prohibitive computational overhead in large-scale settings, and their convergence guarantees often require  strong assumptions. In this work, we present \draco{}, a scalable framework for robust distributed training that uses ideas from coding theory.
In \draco{}, each compute node evaluates redundant gradients that are  used by the parameter server to eliminate the effects of adversarial updates.
\draco{} comes with problem-independent robustness guarantees, and  the model that it trains is   identical to the one trained in the adversary-free setup. 
We provide extensive experiments on real datasets and distributed setups across a variety of large-scale models, where we show that \draco{} is several times, to orders of magnitude faster than median-based approaches.
\end{abstract}

\section{Introduction}\label{Sec:Intro}
Distributed and parallel implementations of stochastic optimization algorithms have become the de facto standard in large-scale model training~\cite{ParameterServer,recht2011hogwild,Syn_DMLSGD_12,Syn_DML_10,tensorflow, MXNET, PyTorch, MS_Adam}.
Due to increasingly common malicious attacks, hardware and software errors \cite{ByzantineError_1999,Byzantine2007,ByzantineML_NIPS17,ByzantineML_SIGMETRICS18}, protecting distributed machine learning against adversarial attacks and failures has become increasingly important.
Unfortunately, even a single adversarial node in a distributed setup can introduce arbitrary bias and inaccuracies to the end model\cite{ByzantineML_NIPS17}.

A recent line of work \cite{ByzantineML_NIPS17,ByzantineML_SIGMETRICS18} studies this problem under a synchronous training setup, where compute nodes evaluate gradient updates and ship them to a parameter server (PS) which stores and updates the global model.
Many of the aforementioned work use median-based aggregation, including the geometric median (GM) instead of averaging in order to make their computations more robust.
The advantage of median-based approaches is that they can be robust to up to a constant fraction of the compute nodes being adversarial \cite{ByzantineML_SIGMETRICS18}. 
However, in large data settings, the cost of computing the geometric median can dwarf the cost of computing a batch of gradients \cite{ByzantineML_SIGMETRICS18}, rendering it impractical.
Furthermore, proofs of convergence for such systems require restrictive assumptions such as convexity, and need to be re-tailored to each different training algorithm.
A scalable distributed training framework that is robust against adversaries and can be applied to a large family of training algorithms (\eg mini-batch SGD, GD, coordinate descent, SVRG, etc.) remains an open problem.

\begin{figure}[t]
	\centering
	\includegraphics[width=0.55\linewidth]{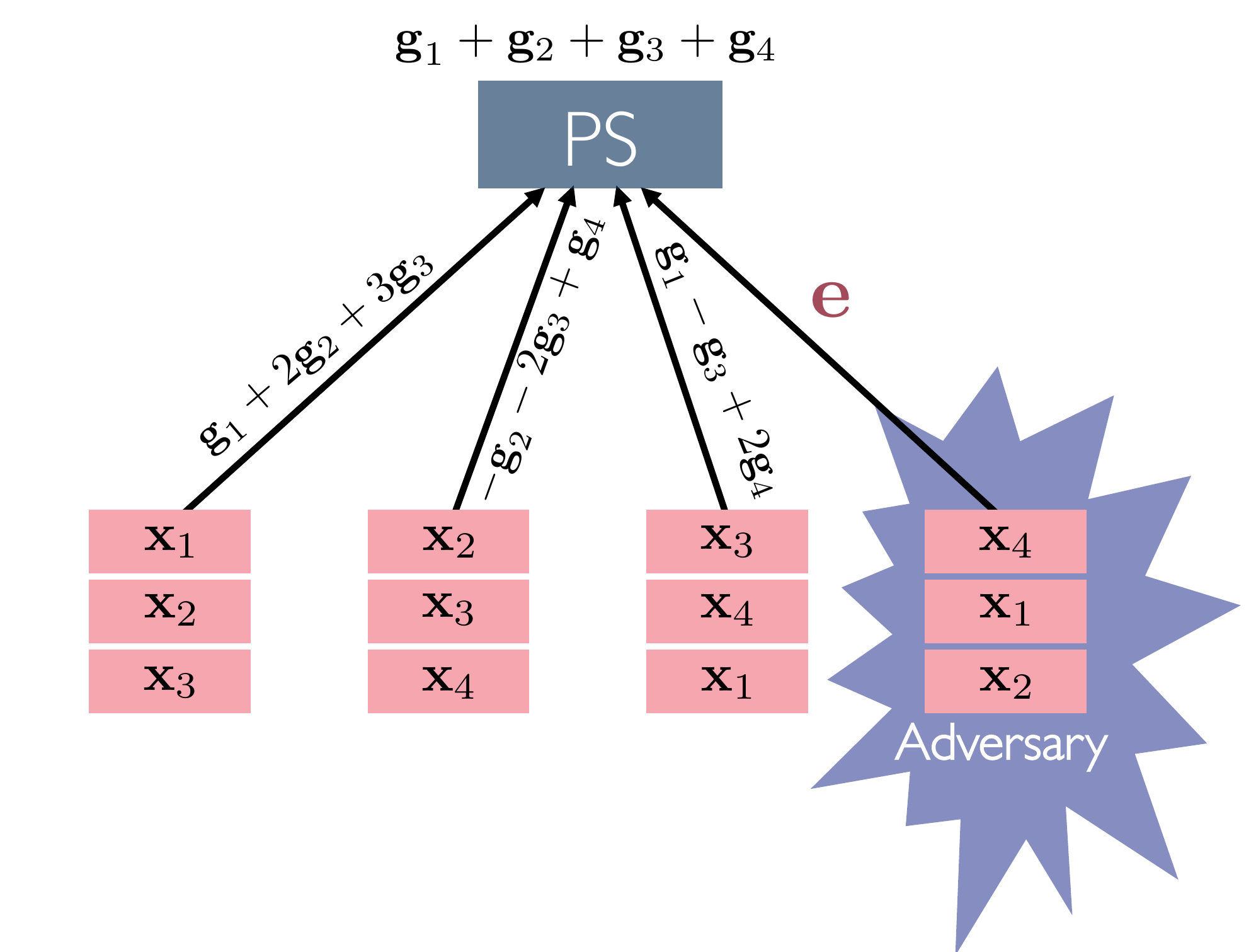}
	\vspace{-0.3cm}
	\caption{The high level idea behind \draco{}'s algorithmic redundancy. Suppose we have $4$ data points ${\bf x}_1, \ldots, {\bf x}_4$, and let ${\bf g}_i$ be  the gradient of the model with respect to data point ${\bf x}_i$.
		Instead of having each compute node $i$ evaluate a single gradient ${\bf g}_i$, \draco{} assigns each node redundant gradients. 
		In this example, the replication ratio is $3$, and the parameter server can recover the sum of the gradients from any $2$ of the encoded gradient updates. Thus, the PS can still recover the sum of gradients in the presence of an adversary. This can be done through a majority vote on all $6$ pairs of encoded gradient updates. This intuitive idea does not scale to a large number of compute nodes. \draco{} implements a more systematic and efficient encoding and decoding mechanism that scales to any number of machines.}
	\label{fig:RunningExample}
\end{figure}

In this paper, we instead use ideas from coding theory to ensure robustness during distributed training.
We present \draco{}, a general distributed training framework that is robust against  adversarial  nodes and worst-case compute errors. 
We show that \draco{} can resist any $s$ adversarial compute nodes during training and returns a model {\it identical} to the one trained in the adversary-free setup.
This allows \draco{} to come with ``black-box'' convergence guarantees, \ie proofs of convergence in the adversary-free setup carry through to the adversarial setup with no modification, unlike prior median-based approaches \cite{ByzantineML_NIPS17,ByzantineML_SIGMETRICS18}.
Moreover, in median-based approaches such as \cite{ByzantineML_NIPS17,ByzantineML_SIGMETRICS18}, the median computation may dominate the overall training time. In \draco{}, most of the computational effort is carried through by the compute nodes. This key factor allows our framework to offer up to orders of magnitude faster convergence in real distributed setups. 

To design \draco{}, we borrow ideas from coding theory and algorithmic redundancy.
In standard adversary-free distributed computation setups, during each distributed round, each of the $P$ compute nodes processes $B/P$ gradients and ships their sum to the parameter server. 
In \draco{}, each compute node processes $rB/P$ gradients and sends a linear combination of those to the PS. Thus, \draco{} incurs a computational redundancy ratio of $r$.
While this may seem sub-optimal, we show that under a worst-case adversarial setup, it is information--theoretically impossible to design a system that obtains identical models to the adversary--free setup  with less redundancy.
Upon receiving the $P$ gradient sums, the PS uses a ``decoding'' function to remove the effect of the adversarial nodes and reconstruct the original desired sum of the $B$ gradients. 
With redundancy ratio $r$, we show that \draco{} can tolerate up to $(r-1)/2$ adversaries, which is information--theoretically {\it tight}. See Fig.~\ref{fig:RunningExample} for a toy example of \draco{}'s functionality.

We present two encoding and decoding techniques for \draco{}. 
The encoding schemes are based on the fractional repetition code and cyclic repetition code presented in \cite{Gradient_Coding,raviv2017gradient}. 
In contrast to previous work on stragglers and gradient codes \cite{Gradient_Coding,raviv2017gradient,charles2017approximate}, our decoders are tailored to the adversarial setting and use different methods.
Our decoding schemes utilize an efficient majority vote decoder and a novel Fourier decoding technique.

Compared to median-based techniques that can tolerate approximately a constant fraction of ``average case'' adversaries, \draco{}'s  $(r-1)/2$ bound on the number of ``worst-case'' adversaries may be significantly smaller.
However, in realistic regimes where only a constant number of nodes are malicious, \draco{} is significantly faster as we show in experiments in Section \ref{Sec:Experiment}.

We implement \draco{} in PyTorch and deploy it on distributed setups on Amazon EC2, where we compare against median-based training algorithms on several real world datasets and various ML  models.
We show that \draco{} is up to orders of magnitude faster compared to GM-based approaches across a range of neural networks, \eg
LeNet,
VGG-19,
AlexNet,
ResNet-18, and
ResNet-152, and always converges to the correct adversary-free model, while in some cases median-based approaches do not converge. 

\paragraph{Related Work}\label{Sec:Related}
The large-scale nature of modern machine learning has spurred a great deal of novel research on distributed and parallel training algorithms and systems \cite{recht2011hogwild, dean2012large, alistarh2017qsgd, jaggi2014communication, liu2014asynchronous1, mania2015perturbed, chen2016revisiting}.
Much of this work focuses on developing and analyzing efficient distributed training algorithms. This work shares ideas with {\it federated learning}, in which training is distributed among a large number of compute nodes without centralized training data \cite{konevcny2015federated, FederatedLearning, FederatedLearn_Aggregate}.

Synchronous training can suffer from straggler nodes~\cite{zaharia2008improving}, where a few compute nodes are significantly slower than average. While early work on straggler mitigation used techniques such as job replication \cite{shah2016redundant}, more recent work has employed coding theory to speed up distributed machine learning systems \cite{MatrixCode_2016,li2015coded,dutta2016short,dutta2017coded,reisizadeh2017coded,YangGK2017CodedInverse}. 
One notable technique is {\it gradient coding}, a straggler mitigation method proposed in \cite{Gradient_Coding}, which uses codes to speed up synchronous distributed first-order methods \cite{raviv2017gradient,charles2017approximate,cotter2011better}. Our work builds on and extends this work to the adversarial setup \cite{Draco2018,chendraco}. Mitigating adversaries can often be more difficult than mitigating stragglers since in the adversarial setup we have no knowledge as to which nodes are the adversaries.

The topic of byzantine fault tolerance has been extensively studied since the early 80s~\cite{lamport1982byzantine}.
There has been substantial amounts of work recently on byzantine fault tolerance in distributed training which shows that while average-based gradient methods are susceptible to adversarial nodes~\cite{ByzantineML_NIPS17,ByzantineML_SIGMETRICS18}, median-based update methods can achieve good convergence while being robust to adversarial nodes. Both \cite{ByzantineML_NIPS17} and \cite{ByzantineML_SIGMETRICS18} use variants of the geometric median to improve the tolerance of first-order methods against adversarial nodes. Unfortunately, convergence analyses of median approaches often require restrictive assumptions and algorithm-specific proofs of convergence. Furthermore, the geometric median aggregation may dominate the training time in large-scale settings.

The idea of using redundancy to guard against failures in computational systems has existed for decades. Von Neumann used redundancy and majority vote operations in boolean circuits to achieve accurate computations in the presence of noise with high probability \cite{von1956probabilistic}. These results were further extended in work such as \cite{pippenger1988reliable} to understand how susceptible a boolean circuit is to randomly occurring failures. Our work can be seen as an application of the aforementioned concepts to the context of distributed training in the face of adversity.
\section{Preliminaries}\label{Sec:Preli}

\paragraph{Notation}
In the following, we denote matrices and vectors in bold, and scalars and functions in standard script. We let $\mathbf{1}_m$ denote the $m\times 1$ all ones vector, while $\mathbf{1}_{n\times m}$ denotes the all ones $n\times m$ matrix. We define $\mathbf{0}_{m}, \mathbf{0}_{n\times m}$ analogously.
Given a matrix $\mathbf{A} \in \mathbb{R}^{n\times m}$, we let $\mathbf{A}_{i,j}$ denote its entry at location $(i,j)$, $\mathbf{A}_{i,\cdot} \in \mathbb{R}^{1\times m}$ denote its $i$th row, and $\mathbf{A}_{\cdot, j} \in \mathbb{R}^{n\times 1}$ denote its $j$th column. Given $S \subseteq \{1,\ldots, n\}$, $T \subseteq \{1,\ldots, m\}$, we let $\mathbf{A}_{S,T}$ denote the submatrix of $\mathbf{A}$ where we keep rows indexed by $S$ and columns indexed by $T$. Given matrices $\mathbf{A}, \mathbf{B} \in \mathbb{R}^{n\times m}$, their {\it Hadamard product}, denoted $\mathbf{A} \odot \mathbf{B}$, is defined as the $n \times m$ matrix where $(\mathbf{A}\odot \mathbf{B})_{i,j} = \mathbf{A}_{i,j}\mathbf{B}_{i,j}$.

\paragraph{Distributed Training}
The process of training a model from data can be cast as an optimization problem known as {\it empirical risk minimization} (ERM): 
$$\min_{{\bf w}} \frac{1}{n}\sum_{i=1}^n \ell({\bf w};\mathbf{x}_i)$$
where $\mathbf{x}_i\in\mathbb{R}^m$ represents the $i$th data point, $n$ is the total number of data points, ${\bf w}\in\mathbb{R}^d$ is a model, and $\ell(\cdot;\cdot)$ is a loss function that measures the accuracy of the predictions made by the model on each data point. 

One way to approximately solve the above ERM is through stochastic gradient descent (SGD), which operates as follows. We initialize the model at an initial point ${\bf w}_0$ and then iteratively update it according to
$${\bf w}_k = {\bf w}_{k-1} - \gamma \nabla\ell({\bf w}_{k-1}; \mathbf{x}_{i_k}),$$
where $i_k$ is a random data-point index sampled from $\{1,\ldots, n\}$, and $\gamma>0$ is the learning rate.

In order to take advantage of distributed systems and parallelism, we often use {\it mini-batch} SGD. At each iteration of mini-batch SGD, we select a random subset $S_k \subseteq \{1,\ldots, n\}$ of the data and update our model according to
$${\bf w}_k = {\bf w}_{k-1} - \dfrac{\gamma}{|S_k|} \sum_{i \in S_k}\nabla\ell({\bf w}_{k-1}; \mathbf{x}_{i}).$$
Many distributed versions of mini-batch SGD partition the gradient computations across the compute nodes. After computing and summing up their assigned gradients, each nodes sends their respective sum back to the PS. The PS aggregates these sums to update the model ${\bf w}_{k-1}$ according to the rule above.

In this work, we consider the question of how to perform this update method in a distributed and robust manner. Fix a batch (or set of points) $S_k$, which after relabeling we assume equals $\{1,\ldots, B\}$. We will denote $\nabla \ell({\bf w}_{k-1};\mathbf{x}_i)$ by $\mathbf{g}_i$. The fundamental question we consider in this work is how to compute $\sum_{i = 1}^B \mathbf{g}_i$ in a distributed and {\it adversary-resistant} manner. We present \draco{}, a framework that can compute this summation in a distributed manner, even under the presence of adversaries. 

\begin{remark}
	In contrast to previous works, our analysis and framework are applicable to any distributed algorithm which requires the sum of multiple functions. Notably, our framework can be applied to any first-order methods, including gradient descent, SVRG \cite{johnson2013accelerating}, coordinate descent, and projected or accelerated versions of these algorithms. 
	For the sake of simplicity, our discussion in the rest of the text will focus on mini-batch SGD.
\end{remark}

\paragraph{Adversarial Compute Node Model}
We consider the setting where a subset of size $s$ of the $P$ compute nodes act adversarially against the training process. The goal of an adversary can either be to completely mislead the end model, or bias it towards specific areas of the parameter space. 
A compute node is considered to be an adversarial node, if it does not return the prescribed gradient update given its allocated samples. Such a node can ship back to the PS any arbitrary update of dimension equal to that of the true gradient. Mini-batch SGD fails to converge even if there is only a single adversarial node \cite{ByzantineML_NIPS17}.

In this work, we consider the strongest possible adversaries. We assume that each adversarial node has access to infinite computational power, the entire data set, the training algorithm, and has knowledge of any defenses present in the system. Furthermore, all adversarial nodes may collaborate with each other.
\section{\draco{}: Robust Distributed Training via Algorithmic Redundancy}\label{Sec:Draco}
\label{sec:draco}

In this section we present our main results for \draco{}.
\iftoggle{tr}{The proofs are left to the appendix}{Due to space constraints, all proofs are left to the supplement}.

We generalize the scheme in Figure \ref{fig:RunningExample} to $P$ compute nodes and $B$ data samples. At each iteration of our training process, we assign the $B$ gradients to the $P$ compute nodes using a $P\times B$ {\it allocation matrix} $\mathbf{A}$. Here, $\mathbf{A}_{j,k}$ is 1 if node $j$ is assigned the $k$th gradient ${\bf g}_k$, and 0 otherwise. The support of $\mathbf{A}_{j,\cdot}$, denoted $\mathit{supp}\left(\mathbf{A}_{j,\cdot}\right)$, is the set of indices $k$ of gradients evaluated by the $j$th compute node. For simplicity, we will assume $B=P$ throughout the following.

\draco{} utilizes redundant computations, so it is worth formally defining the amount of redundancy incurred. This is captured by the following definition.

\begin{definition}$r\triangleq \frac{1}{P}  \|\mathbf{A}\|_0$ denotes the redundancy ratio. \end{definition}

In other words, the redundancy ratio is the average number of gradients assigned to each compute node.

We define a $d \times P$ matrix $\mathbf{G}$ by $\mathbf{G} \triangleq [\mathbf{g}_1,\mathbf{g}_2,\cdots, \mathbf{g}_P]$. Thus, $\mathbf{G}$ has all assigned gradients as its columns. The $j$th compute node first computes a $d\times P$ gradient matrix 
$\mathbf{Y}_j \triangleq \left( \mathbf{1}_d \mathbf{A}_{j,\cdot} \right)\odot \mathbf{G}$ using its allocated gradients. 
In particular, if the $k$th gradient $\mathbf{g}_k$ is allocated to the $j$th compute node, \ie $\mathbf{A}_{j,k} \not = 0$, then the compute node computes $\mathbf{g}_k$ as the $k$th column of $\mathbf{Y}_j$.
Otherwise, it sets the $k$-th column of $\mathbf{Y}_j$ to be $\mathbf{0}_d$.

The $j$th compute node is equipped with an encoding function $E_j$ that maps the $d\times P$ matrix $\mathbf{Y}_j$ of its assigned gradients to a single $d$-dimensional vector.
After computing its assigned gradients, the $j$th compute node sends $\mathbf{z}_j \triangleq E_j(\mathbf{Y}_j)$ to the PS.
If the $j$th node is adversarial then it instead sends $\mathbf{z}_j+\mathbf n_{j}$ to the PS, where $\mathbf n_{j}$ is an arbitrary $d$-dimensional vector.
We let $E$ be the set of local encoding functions, \ie $E = \{ E_1, E_2, \cdots, E_P\}$.

Let us define a $d\times P$ matrix $\mathbf{Z}^{\mathbf{A},E,\mathbf{G}}$ by $\mathbf{Z}^{\mathbf{A},E,\mathbf{G}} \triangleq [\mathbf{z}_1, \mathbf{z}_2,\cdots, \mathbf{z}_P]$, and a $d\times P$ matrix $\mathbf{N}$ by $\mathbf{N} \triangleq [\mathbf{n}_1, \mathbf{n}_2,\cdots, \mathbf{n}_P]$. Note that at most $s$ columns of $\mathbf{N}$ are non-zero.
Under this notation, after all updates are finished the PS receives a $d\times P$ matrix $\mathbf{R} \triangleq \mathbf{Z}^{\mathbf{A},E,\mathbf{G}} +\mathbf{N}$. The PS then computes a $d$-dimensional update gradient vector $\mathbf{u} \triangleq D(\mathbf{R})$ using a decoder function $D$.

The system in \draco{} is determined by the tuple $(\mathbf{A}, E, D)$. We decide how to assign gradients by designing $\mathbf{A}$, how each compute node should locally amalgamate its gradients by designing $E$, and how the PS should decode the output by designing $D$. The process of \draco{} is illustrated in Figure \ref{fig:DracoFramework}. 

\begin{figure}[h]
	\centering
	\includegraphics[width=0.55\columnwidth]{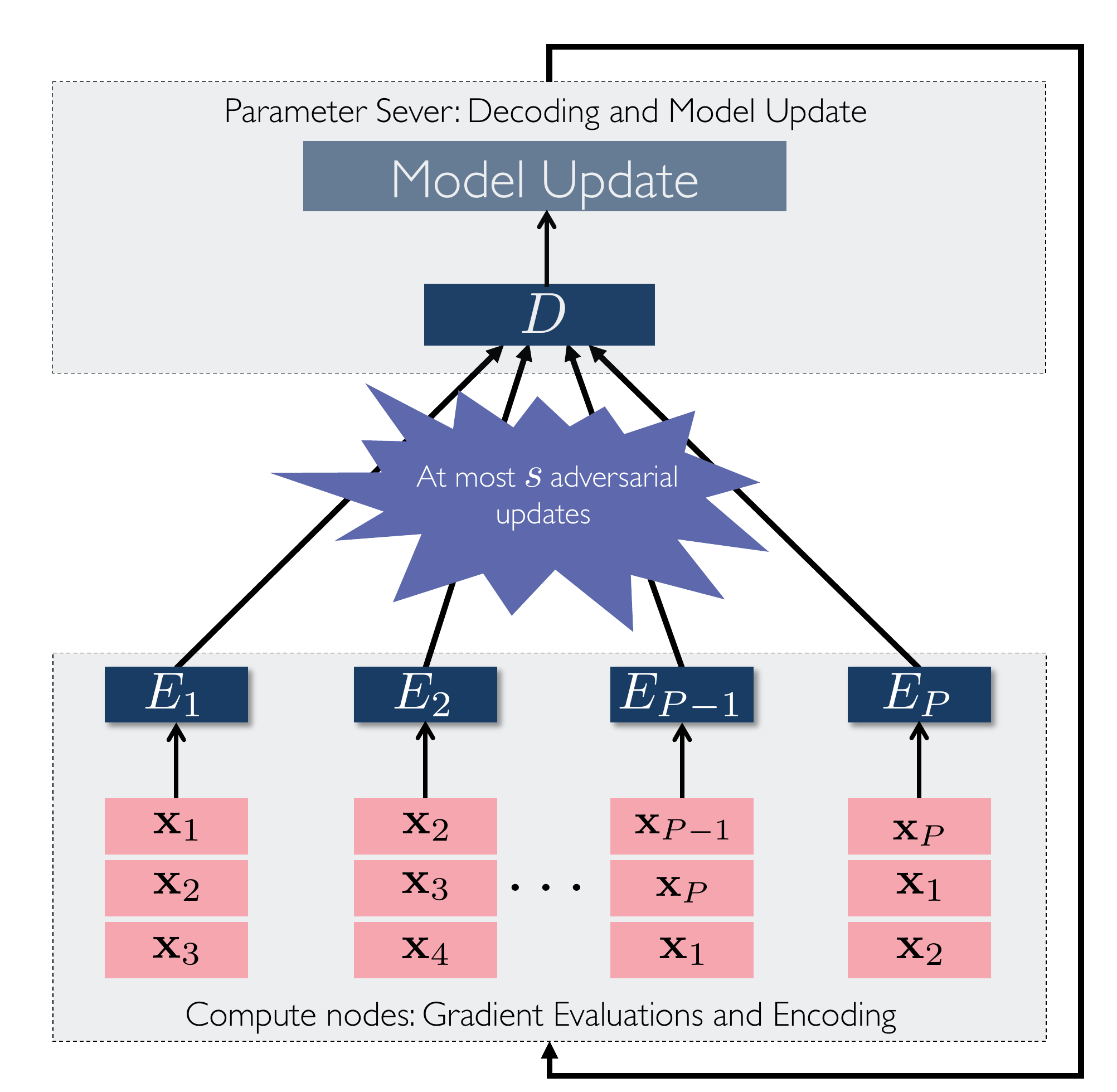}
	\caption{In \draco{}, each compute node is allocated a subset of the data set. Each compute node computes redundant gradients, encodes them via $E_i$, and sends the resulting vector to the PS. These received vectors then pass through a decoder that detects where the adversaries are and removes their effects from the updates. The output of the decoder is the true sum of the gradients. The PS applies the updates to the parameter model and we then continue to the next iteration.}
	\label{fig:DracoFramework}
\end{figure}

This framework of $(\mathbf{A}, E, D)$ encompasses both distributed SGD and the GM approach. In distributed mini-batch SGD, we assign 1 gradient to each compute node. After relabeling, we can assume that we assign $\mathbf{g}_i$ to compute node $i$. Therefore, $\mathbf{A}$ is simply the identity matrix $\mathbf{I}_P$. The matrix $\mathbf{Y}_j$ therefore contains $\mathbf{g}_j$ in column $j$ and 0 in all other columns. The local encoding function $E_j$ simply returns $\mathbf{g}_j$ by computing $E_j(\mathbf{Y}_j) = \mathbf{Y}_j\mathbf{1}_P = \mathbf{g}_j$, which it then sends to the PS. The decoding function now depends on the algorithm. For vanilla mini-batch SGD, the PS takes the average of the gradients, while in the GM approach, it takes a geometric median of the gradients.

In order to guarantee convergence, we want \draco{} to exactly recover the true sum of gradients, regardless of the behavior of the adversarial nodes. In other words, we want \draco{} to protect against {\it worst-case} adversaries.
Formally, we want the PS to always obtain the $d$-dimensional vector $\mathbf{G} \mathbf{1}_P$ via \draco{} with any $s$ adversarial nodes. 
Below is the formal definition.

\begin{definition}
	\draco{} with $(\mathbf{A}, E, D)$ can tolerate $s$ adversarial nodes, if for any $\mathbf{N} = [\mathbf{n}_1,\mathbf{n}_2,\cdots, \mathbf{n}_P]$ such that $ \left\vert{ \{j:\mathbf{n}_{j}\not=0\} }\right\vert \leq s$, we have $D(\mathbf{Z}^{\mathbf{A},E,\mathbf{G}}+\mathbf{N}) = \mathbf{G}\mathbf{1}_P$.
\end{definition}

\begin{remark}
	If we can successfully defend against $s$ adversaries, then the model update after each iteration is identical to that in the adversary-free setup. 
	This implies that any guarantees of convergence in the adversary-free case transfer to the adversarial case.
\end{remark}

\paragraph{Redundancy Bound}
We first study how much redundancy is required if we want to exactly recover the correct sum of gradients per iteration in the presence of $s$ adversaries. 
\begin{theorem}\label{Thm:RedundancyRatioBound}
	Suppose a selection of gradient allocation, encoding, and decoding mechanisms $(\mathbf{A}, E, D)$ can tolerate $s$ adversarial nodes.
	Then its redundancy ratio $r$ must satisfy $r \geq 2s+1$.
\end{theorem}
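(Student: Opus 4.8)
The plan is to prove the lower bound by a standard coding-theory indistinguishability (hybrid) argument: if some gradient is redundantly evaluated by too few nodes, then an adversary who controls half of them can force two genuinely different batches to produce the \emph{exact same} transcript at the parameter server, so no decoder can be correct on both. Concretely, I would assume toward a contradiction that $r < 2s+1$ and derive a pair of adversarial executions that the PS cannot tell apart.

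First I would locate a ``thin'' column of the allocation matrix by pigeonhole. Since $\|\mathbf{A}\|_0 = \sum_{k=1}^{P}\|\mathbf{A}_{\cdot,k}\|_0 = rP$, if every column had at least $2s+1$ nonzero entries we would get $rP \ge (2s+1)P$, contradicting $r < 2s+1$; hence there is an index $k$ with $|T| \le 2s$, where $T \triangleq \supp(\mathbf{A}_{\cdot,k})$ is the set of nodes evaluating $\mathbf{g}_k$. I would then split $T = T_1 \cup T_2$ into disjoint halves with $|T_1|,|T_2| \le s$. Next I would pick two gradient matrices $\mathbf{G}, \mathbf{G}'$ agreeing on every column except the $k$th, with $\mathbf{g}_k \ne \mathbf{g}_k'$ (possible since $d \ge 1$), so that the correct outputs $\mathbf{G}\mathbf{1}_P$ and $\mathbf{G}'\mathbf{1}_P$ differ by $\mathbf{g}_k - \mathbf{g}_k' \ne \mathbf{0}_d$. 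Writing $\mathbf{Y}_j,\mathbf{Y}_j'$ for the per-node matrices induced by $\mathbf{G},\mathbf{G}'$, the key observation is that for every $j \notin T$ we have $\mathbf{Y}_j = \mathbf{Y}_j'$ (such a node never sees gradient $k$), hence $E_j(\mathbf{Y}_j) = E_j(\mathbf{Y}_j')$.

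Then I would build two executions. Scenario A: the true batch is $\mathbf{G}$ and the adversaries are exactly the nodes in $T_2$; honest node $j$ sends $E_j(\mathbf{Y}_j)$, while each adversary $j \in T_2$ chooses its additive perturbation so as to send $E_j(\mathbf{Y}_j')$. Scenario B: the true batch is $\mathbf{G}'$ and the adversaries are exactly the nodes in $T_1$; honest node $j$ sends $E_j(\mathbf{Y}_j')$, while each adversary $j \in T_1$ sends $E_j(\mathbf{Y}_j)$. In both scenarios at most $s$ nodes misbehave. Comparing the columns of the received matrix $\mathbf{R}$: for $j \in T_1$ it is $E_j(\mathbf{Y}_j)$ in both; for $j \in T_2$ it is $E_j(\mathbf{Y}_j')$ in both; for $j \notin T$ it is $E_j(\mathbf{Y}_j) = E_j(\mathbf{Y}_j')$ in both. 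So the PS receives an identical $\mathbf{R}$ in Scenarios A and B. By the tolerance assumption, $D(\mathbf{R})$ must equal $\mathbf{G}\mathbf{1}_P$ (Scenario A) and also $\mathbf{G}'\mathbf{1}_P$ (Scenario B); since $D$ depends on $\mathbf{R}$ alone and these targets differ, this is a contradiction, giving $r \ge 2s+1$.

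The main obstacle is getting Scenario~A vs.\ Scenario~B exactly right: one must swap the roles ``honest $\leftrightarrow$ adversarial'' on the two halves $T_1,T_2$ in a way that leaves every transmitted vector unchanged while flipping which of $\mathbf{G}\mathbf{1}_P,\mathbf{G}'\mathbf{1}_P$ is the correct answer. This is precisely where $|T| \le 2s$ is used — it is what lets $T$ be cut into two pieces of size $\le s$ each — and it is what makes the bound $2s+1$ rather than $s+1$. A secondary point worth stating carefully is that an adversarial node is genuinely allowed to send $E_j(\mathbf{Y}_j')$ (it simply picks $\mathbf{n}_j = E_j(\mathbf{Y}_j') - E_j(\mathbf{Y}_j)$), so both scenarios are admissible under the adversary model of the paper.
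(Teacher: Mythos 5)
Your proposal is correct and follows essentially the same argument as the paper's proof: both locate a gradient replicated at most $2s$ times, split its support into two halves of size at most $s$, and construct two valid $s$-attacks (your choice $\mathbf{n}_j = E_j(\mathbf{Y}_j') - E_j(\mathbf{Y}_j)$ is exactly the paper's construction of $\mathbf{N}^1,\mathbf{N}^2$) that make the received matrix identical while the correct outputs differ, contradicting tolerance. The only cosmetic difference is that the paper states and proves the slightly stronger per-column bound $\|\mathbf{A}_{\cdot,i}\|_0 \ge 2s+1$ for every $i$, whereas you reach the same contradiction via a pigeonhole step on $\|\mathbf{A}\|_0 = rP$, which suffices for the theorem as stated.
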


The above result is information--theoretic, meaning that regardless of how the compute node encodes and how the PS decodes, each data sample has to be replicated at least $2s+1$ times to defend against $s$ adversarial nodes. 

\begin{remark}
	Suppose that a tuple $(\mathbf{A}, E, D)$ can tolerate any $s$ adversarial nodes. By Theorem \ref{Thm:RedundancyRatioBound}, this implies that on average, each compute node encodes at least $(2s+1)$ $d$-dimensional vectors. Therefore, if the encoding has linear complexity, then each encoder requires $(2s+1)d$ operations in the worst-case. If the decoder $D$ has linear time complexity, then it requires at least $Pd$ operations in the worst case, as it needs to use the $d$-dimensional input from all $P$ compute nodes. This gives a computational cost of $O(Pd)$ in general, which is significantly less than that of the median approach in \cite{ByzantineML_NIPS17}, which requires $\mathcal{O}(P^2 (d +\log P))$ operations.
\end{remark}

\paragraph{Optimal Coding Schemes}
A natural question is, \textit{can we achieve the optimal redundancy bound with linear-time encoding and decoding}? 
More formally, can we design a tuple $(\mathbf{A},E,D)$ that has redundancy ratio $r=2s+1$ and computation complexity $\mathcal{O}((2s+1)d)$ at the compute node and $\mathcal{O}(Pd)$ at the PS?
We give a positive answer by presenting two coding approaches that match the above bounds. The encoding methods are based on the fractional repetition code and the cyclic repetition codes in \cite{Gradient_Coding,raviv2017gradient}.

\paragraph{Fractional Repetition Code}
Suppose $2s+1$ divides $P$. 
The fractional repetition code (derived from \cite{Gradient_Coding}) works as follows.
We first partition the compute nodes into $r = 2s+1$ groups.
We assign the nodes in a group to compute the same sum of gradients. Let $\hat{\mathbf{g}}$ be the desired sum of gradients per iteration. In order to decode the outputs returned by the compute nodes in the same group, the PS uses majority vote to select one value.
This guarantees that as long as fewer than half of the nodes in a group are adversarial, the majority procedure will return the correct $\hat{\mathbf{g}}$.

Formally, the repetition code $(\mathbf{A}^{\mathit{Rep}}, E^{\mathit{Rep}}, D^{Rep})$ is defined as follows. 
The assignment matrix $\mathbf{A}^{\mathit{Rep}}$ is given by
	\[
	\mathbf{A}^{\mathit{Rep}} =
	\begin{bmatrix}
	\mathbf{1}_{r \times r} & \mathbf{0}_{r \times r} &  \mathbf{0}_{r \times r} &\cdots &  \mathbf{0}_{r \times r} &  \mathbf{0}_{r \times r}\\
	\mathbf{0}_{r \times r} & \mathbf{1}_{r \times r} &  \mathbf{0}_{r \times r} &\cdots &  \mathbf{0}_{r \times r} &  \mathbf{0}_{r \times r} \\
	\vdots & \vdots & \vdots & \ddots & \vdots & \vdots\\
	\mathbf{0}_{r \times r} &  \mathbf{0}_{r \times r} & \mathbf{0}_{r \times r} & \cdots &  \mathbf{0}_{r \times r} & \mathbf{1}_{r \times r} \\
	\end{bmatrix}.
	\\
	\]
The $j$th compute node first computes all its allocated gradients $\mathbf{Y}^{Rep}_j=\left( \mathbf{1}_d \mathbf{A}^{Rep}_{j,\cdot} \right)\odot \mathbf{G}$.
Its encoder function simply takes the summation of all the allocated gradients. That is, $E_j^{\mathit{Rep}}(\mathbf{Y}^{Rep}_j) = \mathbf{Y}^{Rep}_j\mathbf{1}_P$.
It then sends $\mathbf{z}_j = E_j^{\mathit{Rep}}(\mathbf{Y}^{Rep}_j)$ to the PS.

The decoder works by first finding the majority vote of the output of each compute node that was assigned the same gradients. For instance, since the first $r$ compute nodes were assigned the same gradients, it finds the majority vote of $[\mathbf{z}_1,\ldots, \mathbf{z}_r]$. It does the same with each of the blocks of size $r$, and then takes the sum of the $P/r$ majority votes. We note that our decoder here is different compared to the one used in the straggler mitigation setup of \cite{Gradient_Coding}. Our decoder follows the concept of majority decoding similarly to \cite{von1956probabilistic,pippenger1988reliable}.

Formally, $D^{\mathit{Rep}}$ is given by $D^{\mathit{Rep}}(\mathbf{R}) = \sum_{\ell = 1}^{\frac{P}{r}} \mathit{Maj}\left(\mathbf{R}_{\cdot,\left(\ell\cdot (r-1) + 1\right): \left(\ell\cdot r\right)}\right)$, where $\mathit{Maj}(\cdot)$ denotes the majority vote function and $\mathbf{R}$ is the $d \times P $  matrix received from all compute nodes.  
While a naive implementation of majority vote scales quadratically with the number of compute nodes $P$, we instead use a streaming version of majority vote \cite{MajorityVote}, the complexity of which is linear in $P$. 

\begin{theorem}\label{Thm:RepCodeOptimality}
	Suppose $2s+1$ divides $P$. Then the repetition code $(\mathbf{A}^{\mathit{Rep}}, E^{\mathit{Rep}}, D^{Rep})$ with $r=2s+1$ can tolerate any $s$ adversaries, achieves the optimal redundancy ratio, and has linear-time encoding and decoding.
\end{theorem}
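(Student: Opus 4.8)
The plan is to verify the three claimed properties of $(\mathbf{A}^{\mathit{Rep}}, E^{\mathit{Rep}}, D^{Rep})$ in turn, with the tolerance guarantee being the heart of the argument. First I would observe that the redundancy ratio is immediate from the structure of $\mathbf{A}^{\mathit{Rep}}$: the matrix is block-diagonal with $P/r$ blocks, each an $r\times r$ all-ones block, so $\|\mathbf{A}^{\mathit{Rep}}\|_0 = (P/r)\cdot r^2 = Pr$, and hence the redundancy ratio is $\frac{1}{P}\|\mathbf{A}^{\mathit{Rep}}\|_0 = r = 2s+1$, which matches the lower bound of Theorem~\ref{Thm:RedundancyRatioBound}. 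So the code is redundancy-optimal provided it can indeed tolerate $s$ adversaries.

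Next I would pin down what each compute node actually sends. For $j$ in the $\ell$th block (i.e.\ $(\ell-1)r < j \le \ell r$), the support of $\mathbf{A}^{\mathit{Rep}}_{j,\cdot}$ is exactly the index set $I_\ell \triangleq \{(\ell-1)r+1,\ldots,\ell r\}$, so $\mathbf{Y}^{Rep}_j$ has columns $\mathbf{g}_k$ for $k\in I_\ell$ and zeros elsewhere, and $\mathbf{z}_j = E_j^{\mathit{Rep}}(\mathbf{Y}^{Rep}_j) = \mathbf{Y}^{Rep}_j\mathbf{1}_P = \sum_{k\in I_\ell}\mathbf{g}_k \triangleq \hat{\mathbf{g}}_\ell$. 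In other words, every honest node in block $\ell$ sends the \emph{same} vector $\hat{\mathbf{g}}_\ell$, and $\sum_{\ell=1}^{P/r}\hat{\mathbf{g}}_\ell = \sum_{k=1}^P \mathbf{g}_k = \mathbf{G}\mathbf{1}_P$, the target. Now take any adversarial pattern $\mathbf{N}$ with at most $s$ nonzero columns. Within any fixed block of $r = 2s+1$ nodes, at most $s$ are corrupted, so at least $s+1 > r/2$ of the received vectors in that block equal $\hat{\mathbf{g}}_\ell$; since a strict majority of the $r$ entries agree on the value $\hat{\mathbf{g}}_\ell$, the majority-vote function returns $\mathit{Maj}(\mathbf{R}_{\cdot, I_\ell}) = \hat{\mathbf{g}}_\ell$ regardless of what the corrupted nodes send. (I would note this uses that the honest value in a block is unique — which holds because all honest nodes in a block compute the identical sum — so there is no ambiguity in ``the majority''.) Summing over the $P/r$ blocks, $D^{\mathit{Rep}}(\mathbf{R}) = \sum_{\ell=1}^{P/r}\hat{\mathbf{g}}_\ell = \mathbf{G}\mathbf{1}_P$, which is exactly the tolerance condition from the definition.

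Finally, for the complexity claim: each encoder sums at most $r = 2s+1$ vectors of dimension $d$, costing $O((2s+1)d)$ per node. For decoding, the streaming (Boyer--Moore) majority algorithm of \cite{MajorityVote} processes each of the $r$ length-$d$ vectors in a block with $O(d)$ work per vector, giving $O(rd)$ per block and $O((P/r)\cdot rd) = O(Pd)$ total, after which summing the $P/r$ partial results costs a further $O((P/r)d) = O(Pd)$; hence $O(Pd)$ overall, which is linear in the input size $Pd$. I expect the only real subtlety to be the majority-vote step — specifically making precise that a strict majority ($\ge s+1$ out of $2s+1$) of identical honest vectors forces the correct output of $\mathit{Maj}$ even when the adversarial vectors collude and could, say, all agree on a common wrong value (they are only $s < s+1$ of them, so they cannot tie or beat the honest plurality); everything else is bookkeeping on the block structure.
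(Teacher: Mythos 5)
Your proposal is correct and follows essentially the same argument as the paper: within each block of $r=2s+1$ nodes at least $s+1$ honest nodes agree on the block sum, so the per-block majority vote returns it, and the encoding/decoding cost accounting ($\mathcal{O}((2s+1)d)$ per node, $\mathcal{O}(Pd)$ at the PS via streaming majority) is identical. Your additional care — spelling out the redundancy-ratio computation $\|\mathbf{A}^{\mathit{Rep}}\|_0 = Pr$ and noting that even colluding adversaries (only $s < s+1$ of them) cannot defeat the honest plurality — is a nice tightening of details the paper leaves implicit, but it is not a different route.
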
  	

\paragraph{Cyclic Code}
Next we describe a cyclic code whose encoding method comes from \cite{Gradient_Coding} and is similar to that of \cite{raviv2017gradient}. We denote the cyclic code, with encoding and decoding functions, by $(\mathbf{A}^{\mathit{Cyc}}, E^{\mathit{Cyc}}, D^{Cyc})$. The cyclic code provides an alternative way to tolerate adversaries in distributed setups.
We will show that the cyclic code also achieves the optimal redundancy ratio and has linear-time encoding and decoding. Another difference compared to the repetition code is that in the cyclic code, the compute nodes will compute and transmit complex vectors, and the decoding function will take as input these complex vectors.

To better understand the cyclic code, imagine that all $P$ gradients we wish to compute are arranged in a circle. 
Since there are $P$ starting positions, there are $P$ possible ways to pick a sequence consisting of $2s+1$ clock-wise consecutive gradients in the circle. Assigning each sequence of gradients to each compute node leads to redundancy ratio $r=2s+1$. 
The allocation matrix for the cyclic code is $\mathbf{A}^{\mathit{Cyc}}$, where the $i$ row contains $r=2s+1$ consecutive ones, between position $(i-1)r+1$ to $i\cdot r$ modulo $B$.

In the cyclic code, each compute node computes a linear combination of its assigned gradients. This can be viewed as a generalization of the repetition code's encoder.
Formally, we  construct some ${P\times P}$ matrix $\mathbf{W}$ such that $\forall j, \ell, \mathbf{A}^{Cyc}_{j,\ell} \neq 0$ implies  $\mathbf{W}_{j,\ell} = 0 $.  
Let $\mathbf{Y}^{Cyc}_j=\left( \mathbf{1}_d \mathbf{A}^{Cyc}_{j,\cdot} \right)\odot \mathbf{G}$ denote the gradients computed at compute node $j$.
The local encoding function $E_j^{\mathit{Cyc}}$ is defined by
$E_j^{\mathit{Cyc}}(\mathbf{Y}_j^{Cyc})
= \mathbf{G} \mathbf{W}_{\cdot,j}.$
After performing this local encoding, the $j$th compute node then sends $\mathbf{z}^{Cyc}_j \triangleq E_j^{\mathit{Cyc}}(\mathbf{Y}_j^{Cyc})$ to the PS.
Let $\mathbf{Z}^{\mathbf{A}^{Cyc},E^{Cyc},\mathbf{G}} \triangleq [\mathbf{z}^{Cyc}_1, \mathbf{z}^{Cyc}_2, \cdots, \mathbf{z}^{Cyc}_P]$. Then one can verify from the definition of $E_j^{\mathit{Cyc}}$ that $\mathbf{Z}^{\mathbf{A}^{Cyc},E^{Cyc},\mathbf{G}} = \mathbf{G} \mathbf{W}$.
The received matrix at the PS now becomes $\mathbf{R}^{Cyc} = \mathbf{Z}^{\mathbf{A}^{Cyc},E^{Cyc},\mathbf{G}} + \mathbf{N} = \mathbf{G} \mathbf{W} +\mathbf{N}$.

In order to decode, the PS needs to detect which compute nodes are adversarial and recover the correct gradient summation from the non-adversarial nodes. Methods to do the latter alone in the presence of straggler nodes was presented in \cite{Gradient_Coding} and \cite{raviv2017gradient}. Suppose there is a function $\phi(\cdot)$ that can compute the adversarial node index set $V$. We will later construct $\phi$ explicitly.
Let $U$ be the index set of the non-adversarial nodes. 
Suppose that the span of $\mathbf{W}_{\cdot,U}$ contains $\mathbf{1}_P$.
Thus, we can obtain a vector $\mathbf{b}$ by solving $\mathbf{W}_{\cdot,U} \mathbf{b}  = \mathbf{1}_P$.
Finally, since $U$ is the index set of non-adversarial nodes, for any $j \in U$, we must have $\mathbf{n}_{j} = \mathbf{0}$. 
Thus, we can use $\mathbf{R}^{Cyc}_{\cdot, U} \mathbf{b} =  (\mathbf{G}\mathbf{W}+\mathbf{N})_{\cdot,U} \mathbf{b} = \mathbf{G}\mathbf{W}_{\cdot,U} \mathbf{b} = \mathbf{G} \mathbf{1}_P$.
The decoder function is given formally in Algorithm \ref{Alg:DecoderFunction.}.
\begin{algorithm}[htbp]
		\SetKwInOut{Input}{Input}
		\SetKwInOut{Output}{Output}
		\Input{Received $d \times P$ matrix $\mathbf{R}^{Cyc}$}
		\Output{Desired gradient summation $\mathbf{u}^{Cyc}$}
		$V = \phi(\mathbf{R})$ // Locate the adversarial node indexes.\\
		$U = \{1,2,\cdots,P\} - V$. // Non-adversarial node indexes \\
		Find $\mathbf{b}$ by solving $ \mathbf{W}_{\cdot,U} \mathbf{b} = \mathbf{1}_P$\\
		Compute and return $\mathbf{u}^{Cyc} = \mathbf{R}_{\cdot, U} \mathbf{b} $	\caption{Decoder Function $D^{Cyc}$.}
		\label{Alg:DecoderFunction.}
\end{algorithm}

To make this approach work, we need to design a matrix $\mathbf{W}$ and the index location function $\phi(\cdot)$ such that (i) For all $j,k$, $\mathbf{A}_{j,k} = 0 \implies \mathbf{W}_{j,k} = 0$ and the span of $\mathbf{W}_{\cdot,U}$ contains $\mathbf{1}_P$, and (ii) $\phi(\cdot)$ can locate the adversarial nodes.

Let us first construct $\mathbf{W}$.
Let $\mathbf{C}$ be a $P\times P$ inverse discrete Fourier transformation (IDFT) matrix, i.e.,
$$\mathbf{C}_{jk} = \frac{1}{\sqrt{P}}\exp\left({\frac{2 \pi i }{P} (j-1)(k-1)}\right),~~j,k=1,2,\cdots, P.$$
Let $\mathbf{C}_L$ be the first $P-2s$ rows of $\mathbf{C}$ and $\mathbf{C}_R$ be the last $2s$ rows.
Let $\mathbf{\alpha}_{j}$ be the set of row indices of the zero entries in $\mathbf{A}^{Cyc}_{\cdot,j}$, \ie $\mathbf{\alpha}_{j} = \{k: \mathbf{A}_{j,k}^{Cyc} = 0\}$. 
Note that $\mathbf{C}_L$ is a $(P-2s)\times P$ Vandermonde matrix and thus  any $P-2s$ columns of it are linearly independent. 
Since $|\alpha_j| = P-2s-1$, we can obtain a $P-2s-1$-dimensional vector $\mathbf{q}_{j}$ uniquely by solving $\mathbf{0} = \begin{matrix} \begin{bmatrix}
\mathbf{q}_j & 1
\end{bmatrix}\end{matrix} \cdot  \left[\mathbf{C}_{L}\right]_{\cdot,\alpha_j} $.
Construct a $P\times (P-2s-1)$ matrix $\mathbf{Q} \triangleq \begin{matrix} \begin{bmatrix}
\mathbf{q}_1 & \mathbf{q}_2 & \cdots & \mathbf{q}_P
\end{bmatrix}\end{matrix}$ and a $P\times P$ matrix $\mathbf{W} \triangleq \begin{matrix} \begin{bmatrix}
\mathbf{Q} & \mathbf{1}_P
\end{bmatrix}\end{matrix} \cdot  \mathbf{C}_L$.
One can verify that (i) each row of $\mathbf{W}$ has the same support as the allocation matrix $\mathbf{A}^{Cyc}$ and (ii) the span of any $P-2s+1$ columns of $\mathbf{W}$ contains $\mathbf{1}_P$, summarized as follows.
\begin{lemma}\label{Thm:CycCodeColumnSpan}
	For all $j,k$, $\mathbf{A}_{j,k} = 0 \Rightarrow \mathbf{W}_{j,k} = 0$.
	For any index set $U$ such that $|U|\geq P-(2s+1)$, the column span of $\mathbf{W}_{\cdot,U}$ contains $\mathbf{1}_P$.
\end{lemma}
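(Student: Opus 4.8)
The plan is to read both assertions straight off the factorization $\mathbf{W}=[\mathbf{Q}\;\mathbf{1}_P]\,\mathbf{C}_L$, where $\mathbf{C}_L\in\mathbb{C}^{(P-2s)\times P}$ is the top $P-2s$ rows of the IDFT matrix and $[\mathbf{Q}\;\mathbf{1}_P]\in\mathbb{C}^{P\times(P-2s)}$ has $j$th row $[\mathbf{q}_j,1]$. First I would make sure each $\mathbf{q}_j$ is genuinely well defined: $[\mathbf{C}_L]_{\cdot,\alpha_j}$ is a $(P-2s)\times(P-2s-1)$ matrix with linearly independent columns (any set of at most $P-2s$ columns of the Vandermonde matrix $\mathbf{C}_L$ is linearly independent), so its left null space is one-dimensional, and any nonzero vector in it has a nonzero last entry --- otherwise its first $P-2s-1$ entries would left-annihilate $[\mathbf{C}_L]_{\{1,\dots,P-2s-1\},\,\alpha_j}$, a square Vandermonde matrix whose nodes $\{\omega^{k-1}:k\in\alpha_j\}$ ($\omega=e^{2\pi i/P}$) are distinct, hence invertible. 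Thus $[\mathbf{q}_j,1]$ is the unique left null vector with last entry $1$, so $\mathbf{q}_j$ is uniquely determined. The first assertion is then immediate: if $\mathbf{A}_{j,k}=0$ then $k\in\alpha_j$, and $\mathbf{W}_{j,k}$ is the $k$th coordinate of the row vector $[\mathbf{q}_j,1]\,[\mathbf{C}_L]_{\cdot,\alpha_j}=\mathbf{0}$, so $\mathbf{W}_{j,k}=0$; ranging $j$ over $\{1,\dots,P\}$ gives $\mathbf{A}_{j,k}=0\Rightarrow\mathbf{W}_{j,k}=0$.

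For the column-span assertion I would argue for every index set $U$ with $|U|\ge P-2s$, which is exactly the range invoked downstream: in Algorithm~\ref{Alg:DecoderFunction.} the honest set $U$ has $|U|=P-s\ge P-2s$. Write $\mathbf{W}_{\cdot,U}=[\mathbf{Q}\;\mathbf{1}_P]\,[\mathbf{C}_L]_{\cdot,U}$ and combine two facts. (a) The last column of $[\mathbf{Q}\;\mathbf{1}_P]$ is $\mathbf{1}_P$, so $[\mathbf{Q}\;\mathbf{1}_P]\,\mathbf{e}=\mathbf{1}_P$, where $\mathbf{e}\in\mathbb{C}^{P-2s}$ is the last standard basis vector. (b) $[\mathbf{C}_L]_{\cdot,U}$ is a $(P-2s)\times|U|$ submatrix of $\mathbf{C}_L$; since any $P-2s$ of its columns are linearly independent and $|U|\ge P-2s$, it has full row rank $P-2s$, so its columns span all of $\mathbb{C}^{P-2s}$. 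Hence there is $\mathbf{b}\in\mathbb{C}^{|U|}$ with $[\mathbf{C}_L]_{\cdot,U}\,\mathbf{b}=\mathbf{e}$, and then $\mathbf{W}_{\cdot,U}\,\mathbf{b}=[\mathbf{Q}\;\mathbf{1}_P]\,\mathbf{e}=\mathbf{1}_P$; that is, $\mathbf{1}_P$ lies in the column span of $\mathbf{W}_{\cdot,U}$, and this is precisely the vector $\mathbf{b}$ solved for in the decoder.

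The only mildly delicate point is the well-definedness of the $\mathbf{q}_j$, which I reduce above to invertibility of a square Vandermonde block in distinct $P$th roots of unity; everything else is bookkeeping on the factorization. I would also record that the threshold $|U|\ge P-2s$ is the one the argument delivers and the only one the downstream uses actually need (the honest set always has size $\ge P-s$): with $|U|=P-2s-1$ the matrix $[\mathbf{C}_L]_{\cdot,U}$ has rank $<P-2s$, step (b) fails, and the conclusion can genuinely fail --- already at $s=0$, where $\mathbf{W}$ is a nonsingular diagonal matrix and any $P-1$ of its columns span a coordinate hyperplane missing $\mathbf{1}_P$.
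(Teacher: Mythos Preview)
Your argument is correct and follows the same route as the paper's proof: both read the support claim directly from $\mathbf{W}_{j,k}=[\mathbf{q}_j,1]\,[\mathbf{C}_L]_{\cdot,k}$ and the defining relation for $\mathbf{q}_j$, and both obtain $\mathbf{1}_P$ in the column span by solving $[\mathbf{C}_L]_{\cdot,U}\,\mathbf{b}=\mathbf{e}$ (the paper does this by restricting to the first $P-2s$ indices $\bar U\subseteq U$ and inverting the square Vandermonde block $[\mathbf{C}_L]_{\cdot,\bar U}$, which is just a concrete instance of your full-row-rank step). Your added checks---the one-dimensionality of the left null space and the nonvanishing of its last coordinate---fill a small gap the paper glosses over, and your observation that the argument (and the paper's own proof, which extracts $P-2s$ indices from $U$) only delivers the threshold $|U|\ge P-2s$, together with the $s=0$ counterexample, correctly flags a typo in the stated bound $|U|\ge P-(2s{+}1)$; the downstream use in the decoder has $|U|=P-s$, so nothing in the paper's development is affected.
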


The $\phi(\cdot)$ function works as follows.
Given the $d\times P$ matrix $\mathbf{R}^{Cyc}$ received from the compute nodes, we first generate a $1\times d$ random vector $ \mathbf{f} \sim \mathcal{N}(\mathbf{1}_{1\times d},\,\mathbf{I}_{d})$,
and then compute $[h_{P-2s}, h_{P-2s-1},\cdots, h_{P-1}] \triangleq \mathbf{f} \mathbf{R} \mathbf{C}_{R}^\dag$\footnote{$\dag$ denotes transpose conjugate.}.
We then obtain a vector $\mathbf{\beta} = [\beta_0,\beta_1,\cdots, \beta_{s-1}]^T$ by solving 
	\begin{align*}
	\begin{matrix}
	\begin{bmatrix}
	h_{P-s-1}      & h_{P-s} & \dots & h_{P-2} \\
	h_{P-s-2}       & h_{P-s-1} & \dots & h_{P-3} \\
	\hdots & \hdots & \ddots &\vdots \\
	h_{P-2s}  & h_{P-s+1} & \dots & h_{P-s+1}
	\end{bmatrix}
	\begin{bmatrix}
	\beta_{0}\\
	\beta_{1}\\
	\vdots\\
	\beta_{s-1}
	\end{bmatrix}
	=
	\begin{bmatrix}
	h_{P-1}\\
	h_{P-2}\\
	\vdots\\
	h_{P-s}
	\end{bmatrix}
	\end{matrix}.
	\end{align*}We then compute $h_{\ell} = \sum_{u=0}^{s-1} \beta_{u} h_{\ell+u-s}$, where $\ell = 0,1, \cdots,P-2s-1$ and $h_{\ell} = h_{P+\ell}$.
Once the vector $\mathbf{h} \triangleq [h_0,h_1,\cdots, h_{P-1}]$ is obtained, we can compute the IDFT of $\mathbf{h}$, denoted by $\mathbf{t}\triangleq[t_0,t_1,\cdots, t_{P-1}]$.
The returned index set $V=\{j:t_{j+1}\not=0\}$.
The following lemma shows the correctness of $\phi(\cdot)$.

\begin{lemma}\label{Thm:CycCodeDetection}
	Suppose $\mathbf{N} = [\mathbf{n}_1,\mathbf{n}_2,\cdots, \mathbf{n}_P]$ satisfies $\left\vert{\{j:\|\mathbf{n}_{j}\|_0 \not=0\}}\right\vert \leq s$. 
	Then $\phi(\mathbf{R}^{Cyc}) = \phi(\mathbf{G}\mathbf{W} +\mathbf{N}) = \{j:\|\mathbf{n}_{j}\|_0 \not=0\}$ with probability 1. 
\end{lemma}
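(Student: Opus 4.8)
The plan is to prove that the random linear functional used inside $\phi(\cdot)$ annihilates the gradient part $\mathbf{G}\mathbf{W}$ of $\mathbf{R}^{Cyc}$, leaving a ``syndrome'' that encodes exactly the locations of the nonzero columns of $\mathbf{N}$, and that the Prony-style extrapolation in $\phi(\cdot)$ then recovers those locations with probability $1$.

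\emph{Reducing to the noise.} I would first use that the IDFT matrix $\mathbf{C}$ is unitary, so its rows are orthonormal and $\mathbf{C}_L\mathbf{C}_R^{\dag}=\mathbf{0}$; since $\mathbf{W}=[\mathbf{Q}\;\;\mathbf{1}_P]\mathbf{C}_L$, this gives $\mathbf{W}\mathbf{C}_R^{\dag}=\mathbf{0}$, hence $\mathbf{R}^{Cyc}\mathbf{C}_R^{\dag}=(\mathbf{G}\mathbf{W}+\mathbf{N})\mathbf{C}_R^{\dag}=\mathbf{N}\mathbf{C}_R^{\dag}$ and $[h_{P-2s},\ldots,h_{P-1}]=\mathbf{f}\mathbf{N}\mathbf{C}_R^{\dag}$ depends only on $\mathbf{N}$ and the random vector $\mathbf{f}$. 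Writing $\mathbf{v}\eqdef\mathbf{f}\mathbf{N}\in\mathbb{C}^{1\times P}$, its $j$-th entry $\mathbf{f}\mathbf{n}_j$ vanishes when $\mathbf{n}_j=\mathbf{0}$ and, when $\mathbf{n}_j\neq\mathbf{0}$, is a nondegenerate Gaussian (variance $\|\mathbf{n}_j\|_2^2>0$), hence nonzero with probability $1$; a union bound over the at most $s$ adversarial indices gives $\supp(\mathbf{v})=\mathcal{E}\eqdef\{j:\mathbf{n}_j\neq\mathbf{0}\}$ almost surely, with $|\mathcal{E}|\le s$.

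\emph{The Fourier/Prony step.} Unwinding the IDFT normalization, $h_{P-2s},\ldots,h_{P-1}$ are $2s$ consecutive terms of the period-$P$ sequence $h_\ell=\tfrac{1}{\sqrt P}\sum_{j\in\mathcal{E}}v_j\rho_j^{\ell}$ with $\rho_j=e^{-2\pi i(j-1)/P}$ --- a sum of $e\eqdef|\mathcal{E}|\le s$ exponentials with distinct nonzero bases on the unit circle; equivalently $(h_\ell)$ obeys the linear recurrence with characteristic polynomial $\Lambda(x)=\prod_{j\in\mathcal{E}}(x-\rho_j)$, of degree $e$, with $\Lambda(0)\neq0$. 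I would then check two things about the $s\times s$ Hankel system solved in $\phi(\cdot)$: (i) it is consistent --- the coefficients of $x^{s-e}\Lambda(x)$ form a solution, since the true sequence satisfies $\Lambda$'s recurrence on the known window; and (ii) \emph{any} solution $\beta$ extrapolates correctly. For (ii) I would introduce the defect $b_\ell\eqdef h_{\ell+s}-\sum_{u=0}^{s-1}\beta_u h_{\ell+u}$: the $s$ equations give $b_0=\cdots=b_{s-1}=0$, and since $(h_\ell)$ satisfies the order-$e$ recurrence $\Lambda$ so does $(b_\ell)$; because $\Lambda(0)\neq0$ that recurrence runs both forward and backward, so $e\le s$ consecutive zeros force $b_\ell\equiv0$, meaning the order-$s$ $\beta$-recurrence holds for all $\ell$ on the true sequence. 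Hence extrapolating forward from the $2s$ known values reconstructs $(h_0,\ldots,h_{P-1})$ exactly; its IDFT returns $\mathbf{v}$ (up to the index shift built into the definition of $\phi$), so $V=\{j:t_{j+1}\neq0\}=\mathcal{E}=\{j:\|\mathbf{n}_j\|_0\neq0\}$ with probability $1$.

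The hard part is item (ii): when $e<s$ the Hankel system is rank-deficient and $\beta$ is highly non-unique, so one must argue the non-uniqueness cannot spoil the extrapolation --- the defect-sequence argument does exactly this, and it is the one place the property $\Lambda(0)\neq0$ (automatic because the $\rho_j$ are powers of a primitive $P$-th root of unity, hence never zero) is essential. The remaining work --- verifying $\mathbf{C}_L\mathbf{C}_R^{\dag}=\mathbf{0}$, matching the IDFT conventions, and tracking the cyclic index shift so that the recovered support is exactly $\mathcal{E}$ rather than a rotation of it --- is routine bookkeeping.
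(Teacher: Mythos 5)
Your proof is correct, and its overall skeleton matches the paper's: use unitarity of $\mathbf{C}$ to get $\mathbf{C}_L\mathbf{C}_R^{\dag}=\mathbf{0}$ and hence $\mathbf{R}^{Cyc}\mathbf{C}_R^{\dag}=\mathbf{N}\mathbf{C}_R^{\dag}$ (the paper's Lemma \ref{Lemma:Orthogonal}), identify the support of $\mathbf{f}\mathbf{N}$ with the adversarial set almost surely via nondegenerate Gaussians (Lemma \ref{Lemma:CycProb}), run a Prony-type recovery, and finish with the IDFT. Where you genuinely diverge is in the key technical step --- showing that \emph{every} solution $\beta$ of the (possibly rank-deficient) Hankel system extrapolates the full length-$P$ sequence. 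The paper's Lemma \ref{Lemma:DFT} does this with a double induction: it constructs the explicit locator-polynomial solution $\hat\beta^*$ and then inductively propagates the identity $\hat h_{\ell}=\sum_u\hat\beta_u\hat h_{\ell+u-s}$ by repeatedly multiplying by $[\theta_{s-1},\ldots,\theta_0]$, followed by a second induction in the main proof to conclude $h_\ell=\hat h_\ell$. Your defect-sequence argument replaces this: $b_\ell=h_{\ell+s}-\sum_u\beta_u h_{\ell+u}$ satisfies the order-$e$ minimal recurrence $\Lambda$, vanishes on $s\ge e$ consecutive indices, and since $\Lambda(0)\neq 0$ the recurrence runs in both directions, forcing $b\equiv 0$. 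This is shorter, makes the role of $\Lambda(0)\neq 0$ explicit, and cleanly handles the degenerate case $e<s$ (via the padded solution $x^{s-e}\Lambda(x)$), which the paper glosses over by tacitly assuming exactly $s$ nonzero locations; the paper's approach, in exchange, is entirely self-contained hands-on linear algebra and produces the explicit solution it needs for the consistency part of the lemma. Both yield the same conclusion $\mathbf{t}=\gamma$ and hence $V=\{j:\|\mathbf{n}_j\|_0\neq 0\}$ with probability $1$.
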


Finally we can show that the cyclic code can tolerate any $s$ adversaries and also achieves redundancy ratio and has linear-time encoding and decoding.

\begin{theorem}\label{Thm:CycCodeOptimality}
	The cyclic code  $(\mathbf{A}^{\mathit{Cyc}}, E^{\mathit{Cyc}}, D^{Cyc})$ can tolerate any $s$ adversaries with probability 1 and achieves the redundancy ratio lower bound. For $d\gg P$, its encoding and decoding achieve linear-time computational complexity.
\end{theorem}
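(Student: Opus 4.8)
The plan is to assemble three facts already established — the redundancy lower bound of Theorem~\ref{Thm:RedundancyRatioBound}, the column‑span property of $\mathbf{W}$ from Lemma~\ref{Thm:CycCodeColumnSpan}, and the correctness of the adversary‑locating map $\phi$ from Lemma~\ref{Thm:CycCodeDetection} — and then to count operations stage by stage for the complexity part. Optimality of the redundancy ratio is immediate: each row of $\mathbf{A}^{\mathit{Cyc}}$ is a cyclic window of exactly $2s+1$ ones, so $\|\mathbf{A}^{\mathit{Cyc}}\|_0 = P(2s+1)$ and hence $r = \frac{1}{P}\|\mathbf{A}^{\mathit{Cyc}}\|_0 = 2s+1$, which meets the lower bound $r \ge 2s+1$ from Theorem~\ref{Thm:RedundancyRatioBound}.

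For tolerance of $s$ adversaries I would fix any $\mathbf{N}$ with at most $s$ nonzero columns, let $J = \{j : \|\mathbf{n}_j\|_0 \neq 0\}$ be the true adversarial set, and trace Algorithm~\ref{Alg:DecoderFunction.}. By Lemma~\ref{Thm:CycCodeDetection}, with probability $1$ the decoder obtains $V = \phi(\mathbf{R}^{\mathit{Cyc}}) = J$, so $U = \{1,\dots,P\}\setminus V$ is exactly the honest set and $|U| = P - |J| \ge P - s \ge P - (2s+1)$. Lemma~\ref{Thm:CycCodeColumnSpan} then says $\mathbf{1}_P$ lies in the column span of $\mathbf{W}_{\cdot,U}$, so the step solving $\mathbf{W}_{\cdot,U}\mathbf{b} = \mathbf{1}_P$ is consistent; and since $\mathbf{n}_j = \mathbf{0}$ for all $j \in U$ we have $\mathbf{R}^{\mathit{Cyc}}_{\cdot,U} = (\mathbf{G}\mathbf{W} + \mathbf{N})_{\cdot,U} = \mathbf{G}\mathbf{W}_{\cdot,U}$, whence $\mathbf{u}^{\mathit{Cyc}} = \mathbf{R}^{\mathit{Cyc}}_{\cdot,U}\mathbf{b} = \mathbf{G}\mathbf{W}_{\cdot,U}\mathbf{b} = \mathbf{G}\mathbf{1}_P$. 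That is exactly the definition of tolerating any $s$ adversaries, and it holds with probability $1$.

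The complexity claim I would break down by stage, reading ``linear‑time'' (for $d \gg P$) as: cost linear in $d$ with the factor in front depending only on $P$ and $s$. Encoding: node $j$ outputs $\mathbf{G}\mathbf{W}_{\cdot,j}$, a combination of the at most $2s+1$ gradients in its support (Lemma~\ref{Thm:CycCodeColumnSpan}), i.e.\ $\mathcal{O}((2s+1)d)$ work. Decoding: inside $\phi$, forming $\mathbf{f}\mathbf{R}^{\mathit{Cyc}}$ costs $\mathcal{O}(Pd)$, the right‑multiplication by $\mathbf{C}_R^{\dagger}$ costs $\mathcal{O}(Ps)$, solving the $s\times s$ system for $\beta$ costs $\mathcal{O}(s^3)$, generating the remaining $h_\ell$ by the stated recursion costs $\mathcal{O}(Ps)$, and the length‑$P$ IDFT costs $\mathcal{O}(P\log P)$ via an FFT; outside $\phi$, solving $\mathbf{W}_{\cdot,U}\mathbf{b} = \mathbf{1}_P$ depends only on $P$ and $s$ (and can be brought down to $\mathrm{poly}(P)$ by exploiting $\mathbf{W} = [\mathbf{Q}\ \mathbf{1}_P]\mathbf{C}_L$ and the Vandermonde structure of $\mathbf{C}_L$), while forming $\mathbf{R}^{\mathit{Cyc}}_{\cdot,U}\mathbf{b}$ costs $\mathcal{O}(Pd)$. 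Summing, the pipeline is $\mathcal{O}(Pd) + g(P,s)$ with $g$ independent of $d$, which is linear in $d$ once $d \gg P$.

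The main obstacle is making the running‑time statement sharp rather than vacuous: with $P$ and $s$ treated as fixed the bound is easy, but if one insists on a bound linear in the true input size $Pd$, then the $\phi$‑external solve $\mathbf{W}_{\cdot,U}\mathbf{b} = \mathbf{1}_P$ — naively $\mathcal{O}(P^3)$ — must be replaced by a structured Vandermonde solve, since $P^3$ is dominated by $Pd$ only when $d \gg P^2$. Pinning down which regime of $d$ versus $P$ the phrase ``linear‑time'' is meant to cover, and supplying that fast structured solve if the stronger reading is intended, is where the real care goes. The probability‑$1$ qualifier, by contrast, carries over verbatim from Lemma~\ref{Thm:CycCodeDetection} — the Gaussian vector $\mathbf{f}$ misses a measure‑zero set of bad directions almost surely — and needs nothing further here.
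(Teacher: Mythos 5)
Your proposal is correct and follows essentially the same route as the paper: correctness via Lemma~\ref{Thm:CycCodeDetection} (to get $U$ equal to the honest set with probability 1) plus Lemma~\ref{Thm:CycCodeColumnSpan} (to solve $\mathbf{W}_{\cdot,U}\mathbf{b}=\mathbf{1}_P$ and recover $\mathbf{G}\mathbf{1}_P$), optimal redundancy from the $(2s+1)$-wide cyclic support, and a stage-by-stage operation count for the complexity. Your worry about the $\mathcal{O}(P^3)$ solve for $\mathbf{b}$ is exactly how the paper handles it too: it tallies $\mathcal{O}(dP+P^3+P\log P)$ and explicitly reads ``$d\gg P$'' as $d=\Omega(P^2)$, so the $P^3$ term is absorbed into $\mathcal{O}(dP)$ without needing a structured Vandermonde solver.
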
  	

Note that the cyclic code requires transmitting complex vectors $\mathbf{G}\mathbf{W}$ which potentially doubles the bandwidth requirement.
To handle this problem, one can transform the original real gradient $\mathbf{G} \in \mathbb{R}^{d\times P}$ into a complex gradient $\hat{\mathbf{G}} \in \mathbb{C}^{\ceil{d/2} \times P}$ by letting its $i$th component have real part $\mathbf{G}_i$ and complex part $\mathbf{G}_{\ceil{d/2}+i}$. 
Then the compute nodes only need to send $\mathbf{\hat{G}}\mathbf{W}$.
Once the PS recovers $\mathbf{\hat{u}}^{Cyc} \triangleq \hat{\mathbf{G}} \mathbf{1}_P$, it can simply sum the real and imaginary parts to form the true gradient summation, \ie $\mathbf{u}^{Cyc} = \operatorname{Re}(\mathbf{\hat{u}}^{Cyc}) + \operatorname{Im}(\mathbf{\hat{u}}^{Cyc}) = \mathbf{G} \mathbf{1}_P $.
\section{Experiments}\label{Sec:Experiment}
In this section we present an empirical study of \draco{} and compare it to the median-based approach in  \cite{ByzantineML_SIGMETRICS18} under different adversarial models and real distributed environments. 
The main findings are as follows: 1) For the same training accuracy, \draco{} is up to orders of magnitude faster compared to the GM-based approach; 2) In some instances, the GM approach \cite{ ByzantineML_SIGMETRICS18} does not converge, while \draco{} converges in all of our experiments, regardless of which dataset, machine learning model, and adversary attack model we use; 3) Although \draco{} is faster than GM-based approaches, its runtime can sometimes scale linearly with the number of adversaries due to the algorithmic redundancy needed to defend against adversaries.

\paragraph{Implementation and Setup}
We compare vanilla mini-batch SGD to both \draco{}-based mini-batch SGD and GM-based mini-batch SGD \cite{ ByzantineML_SIGMETRICS18}.
In mini-batch SGD, there is no data replication and each compute node only computes gradients sampled from its partition of the data. 
The PS then averages all received gradients and updates the model.
In GM-based mini-batch SGD, the PS uses the geometric median instead of average to update the model.
We have implemented all of these in PyTorch \cite{paszke2017automatic} with MPI4py \cite{MPI4PY} deployed on the m4.2/4/10xlarge instances in Amazon EC2 \footnote{\url{https://github.com/hwang595/Draco}}.
We conduct our experiments on various adversary attack models, datasets,  learning problems and neural network models.

\paragraph{Adversarial Attack Models}
We consider two adversarial models.
First, we consider  the ``reversed gradient" adversary, where adversarial nodes that were supposed to send ${\bf g}$ to the PS instead send  $-c {\bf g}$, for some $c>0$.
Next, we consider a ``constant adversary" attack, where adversarial nodes always send a constant multiple $\kappa$ of the all-ones vector to the PS with dimension equal to that of the true gradient.
In our experiments, we set $c=100$ for the reverse gradient adversary, and $\kappa =  -100$ for the constant adversary.
In either setup, at each iteration, $s$ nodes are randomly selected to act as adversaries.
 
\paragraph{End-to-end Convergence Performance}
\begin{figure*}[ht]
	\vspace{-0.2cm}
	\centering
	\includegraphics[width=1\linewidth]{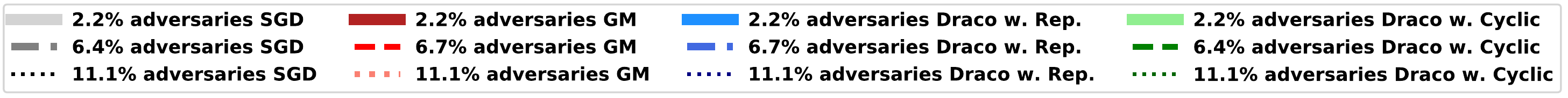}\\
	\subfigure[MNIST,FC,Rev Grad]{\includegraphics[width=0.24\linewidth]{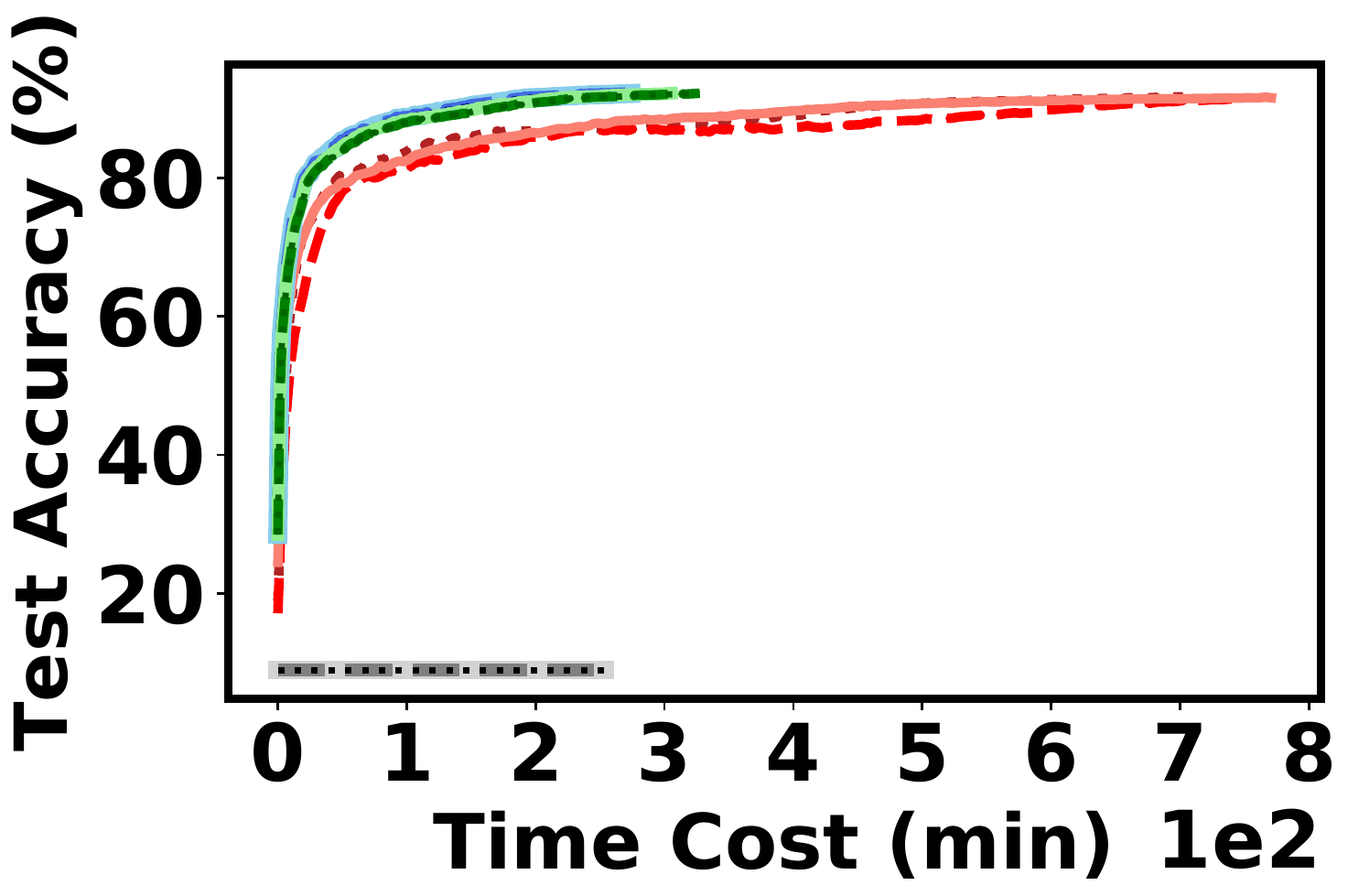}}
	\subfigure[MNIST,LeNet,Rev Grad]{\includegraphics[width=0.24\linewidth]{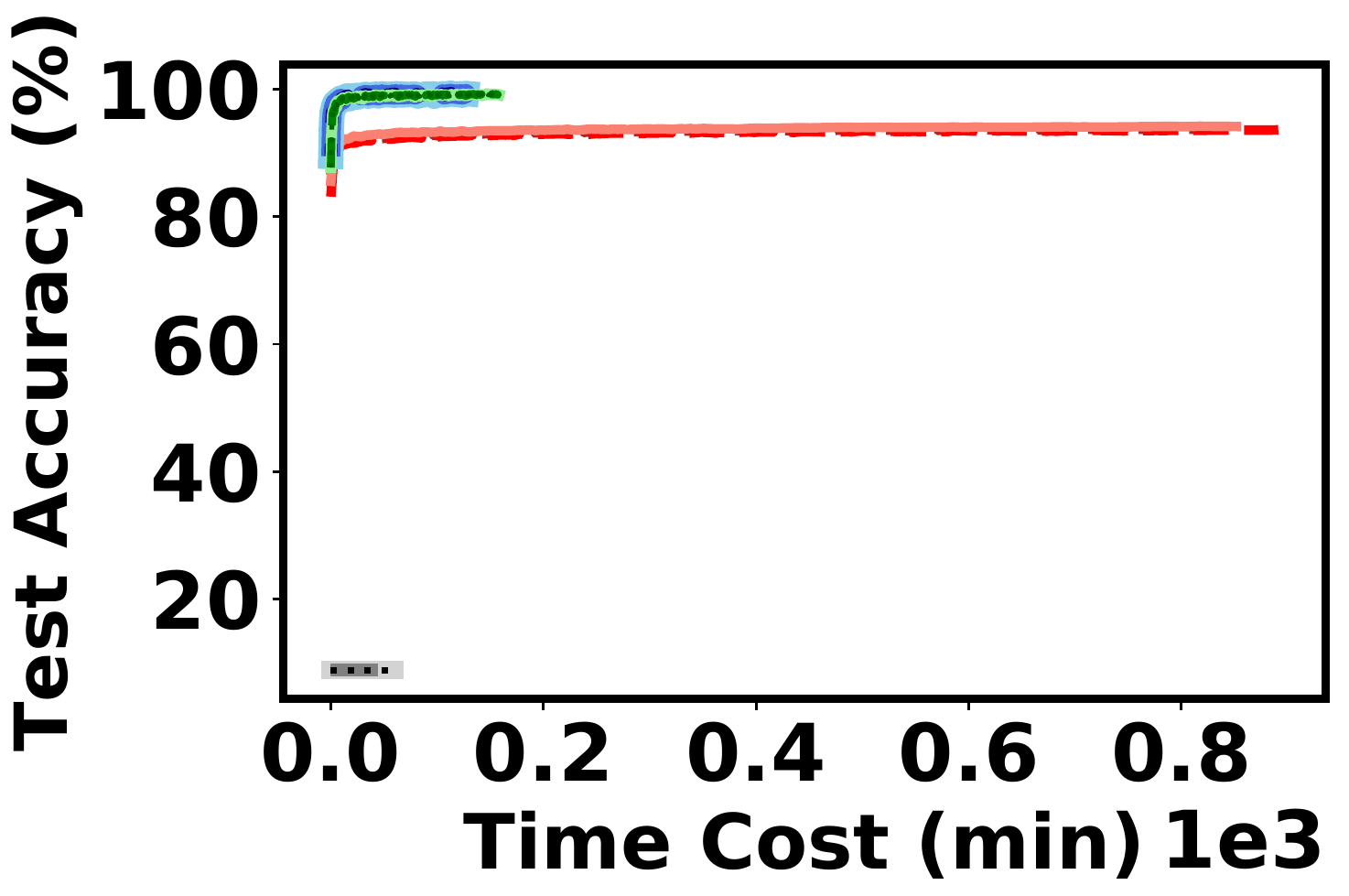}}
	\subfigure[CIFAR10,ResNet18,Rev Grad]{\includegraphics[width=0.24\linewidth]{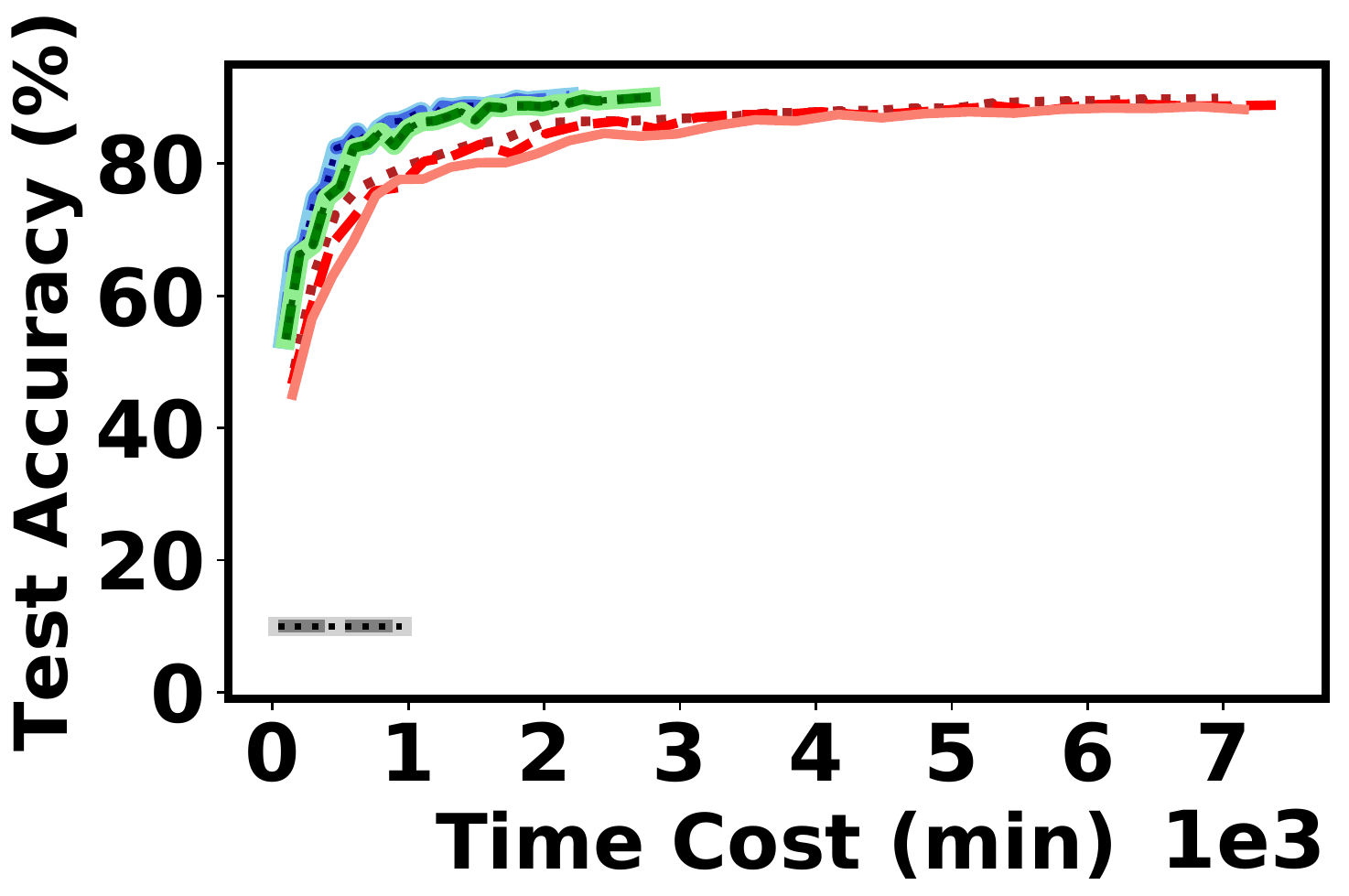}}
	\subfigure[MR,CRN,Rev Grad]{\includegraphics[width=0.24\linewidth]{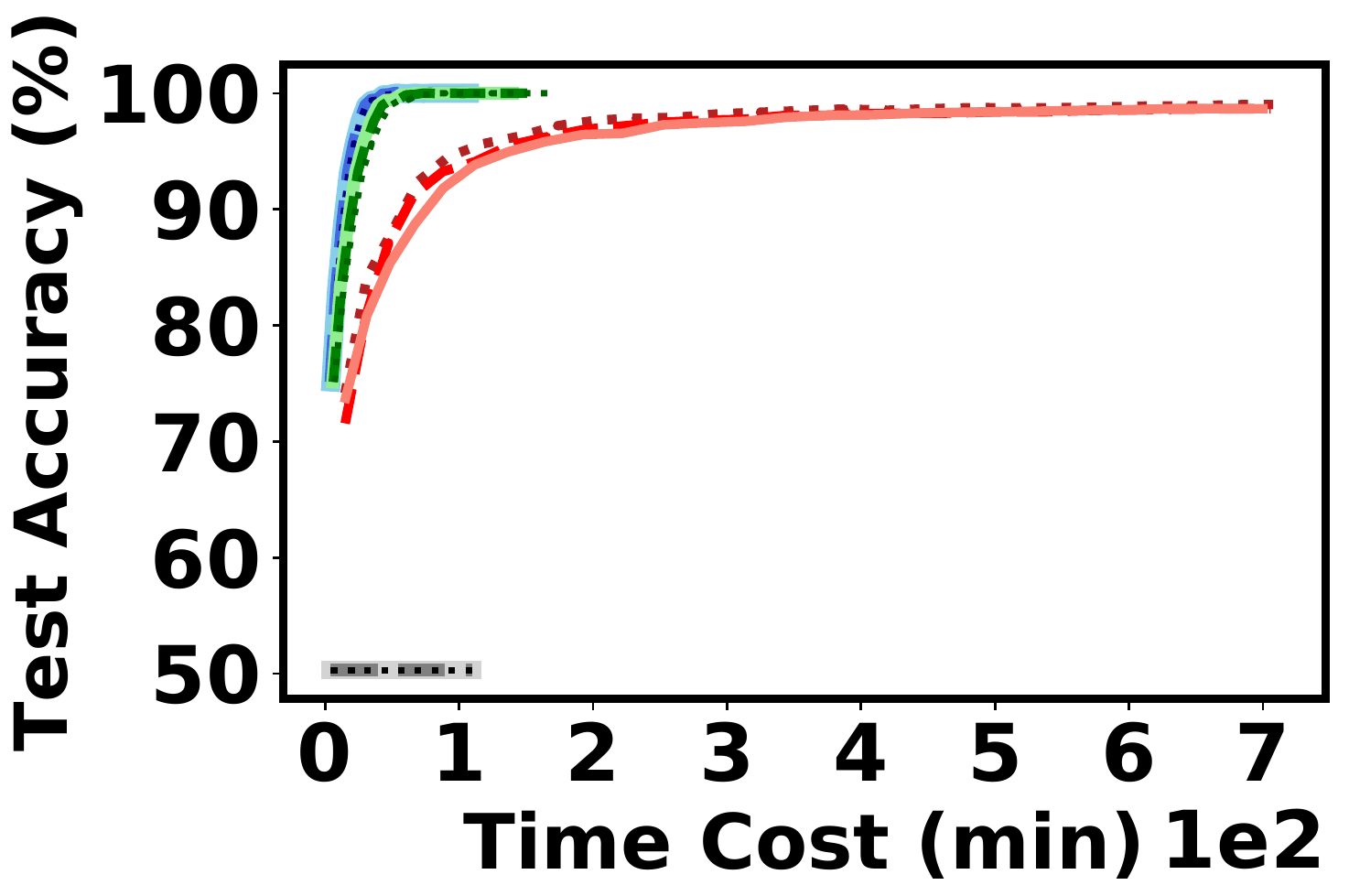}}
	\\
	\subfigure[ MNIST,FC,Const]{\includegraphics[width=0.24\linewidth]{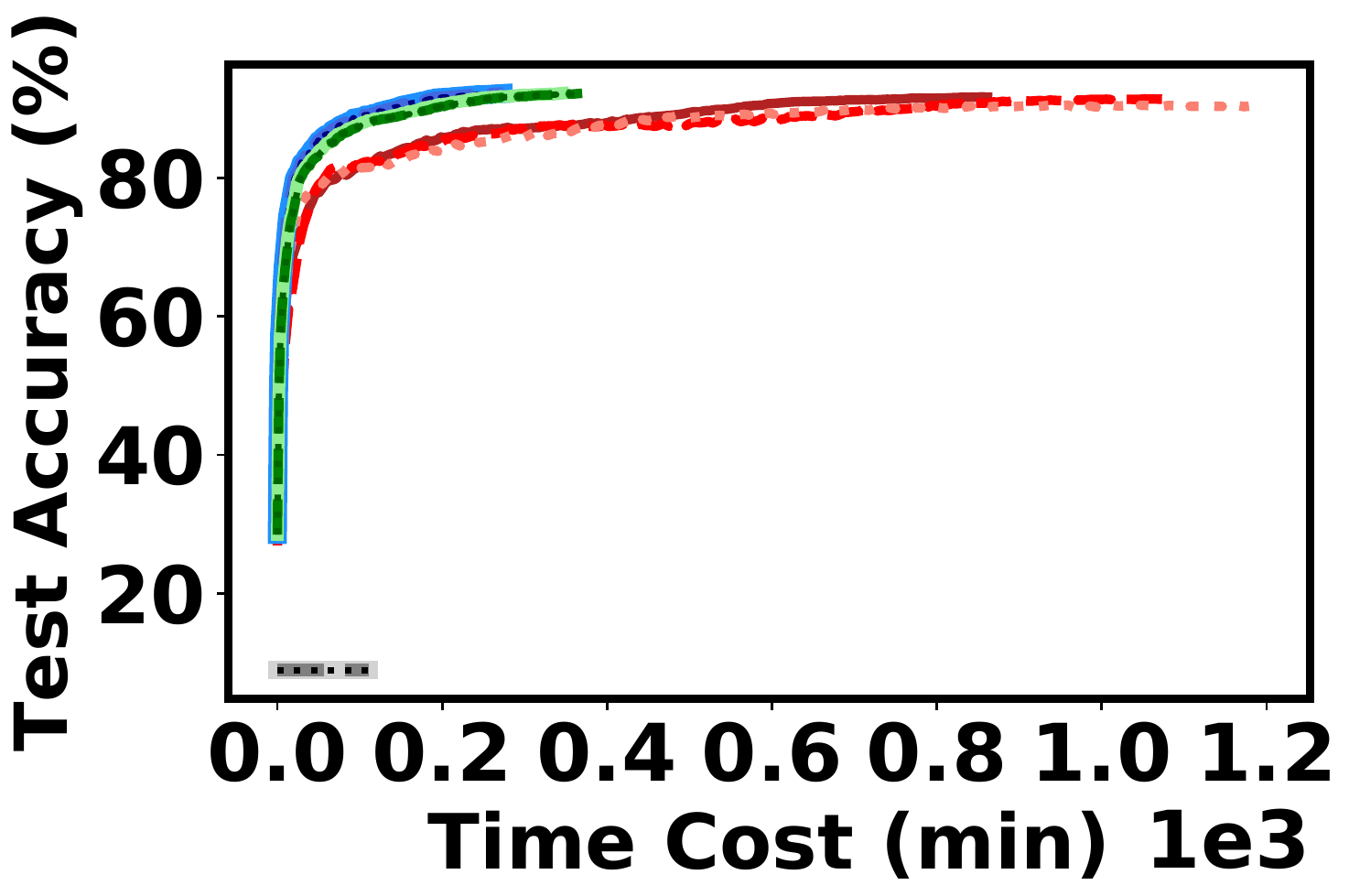}}
	\subfigure[ MNIST,LeNet,Const]{\includegraphics[width=0.24\linewidth]{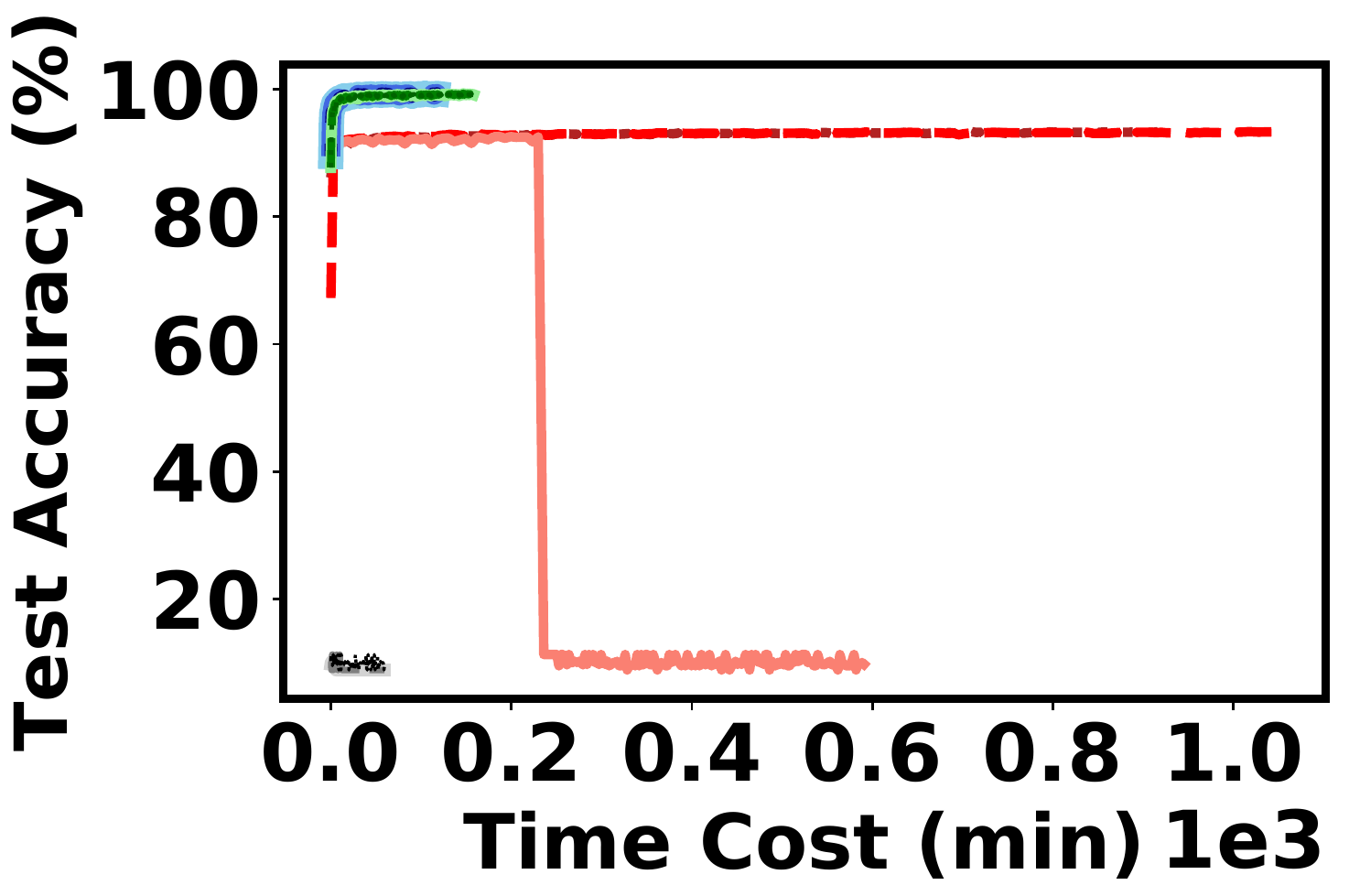}}
	\subfigure[ CIFAR10,ResNet18,Const]{\includegraphics[width=0.24\linewidth]{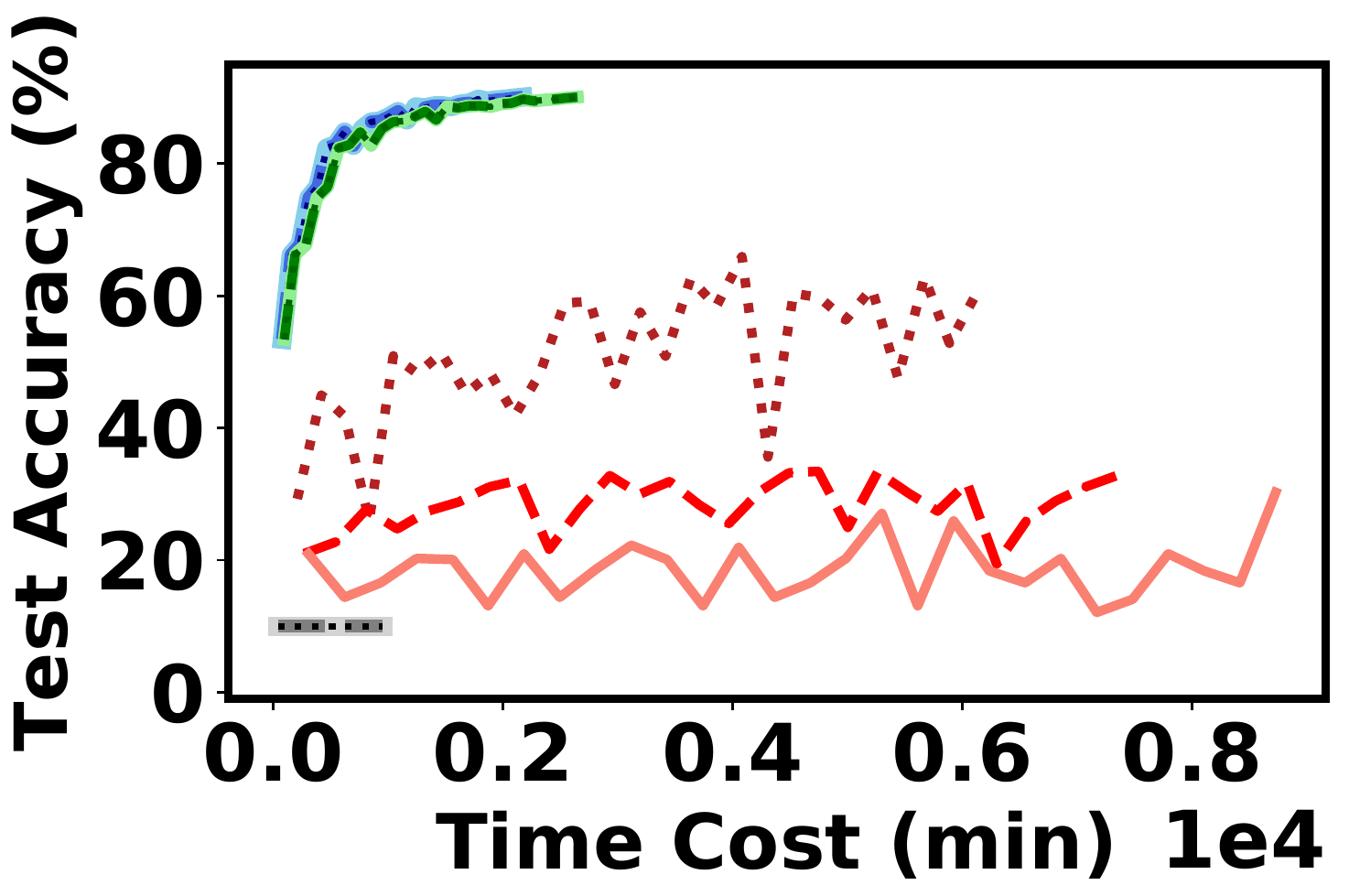}}
	\subfigure[MR,CRN,Const]{\includegraphics[width=0.24\linewidth]{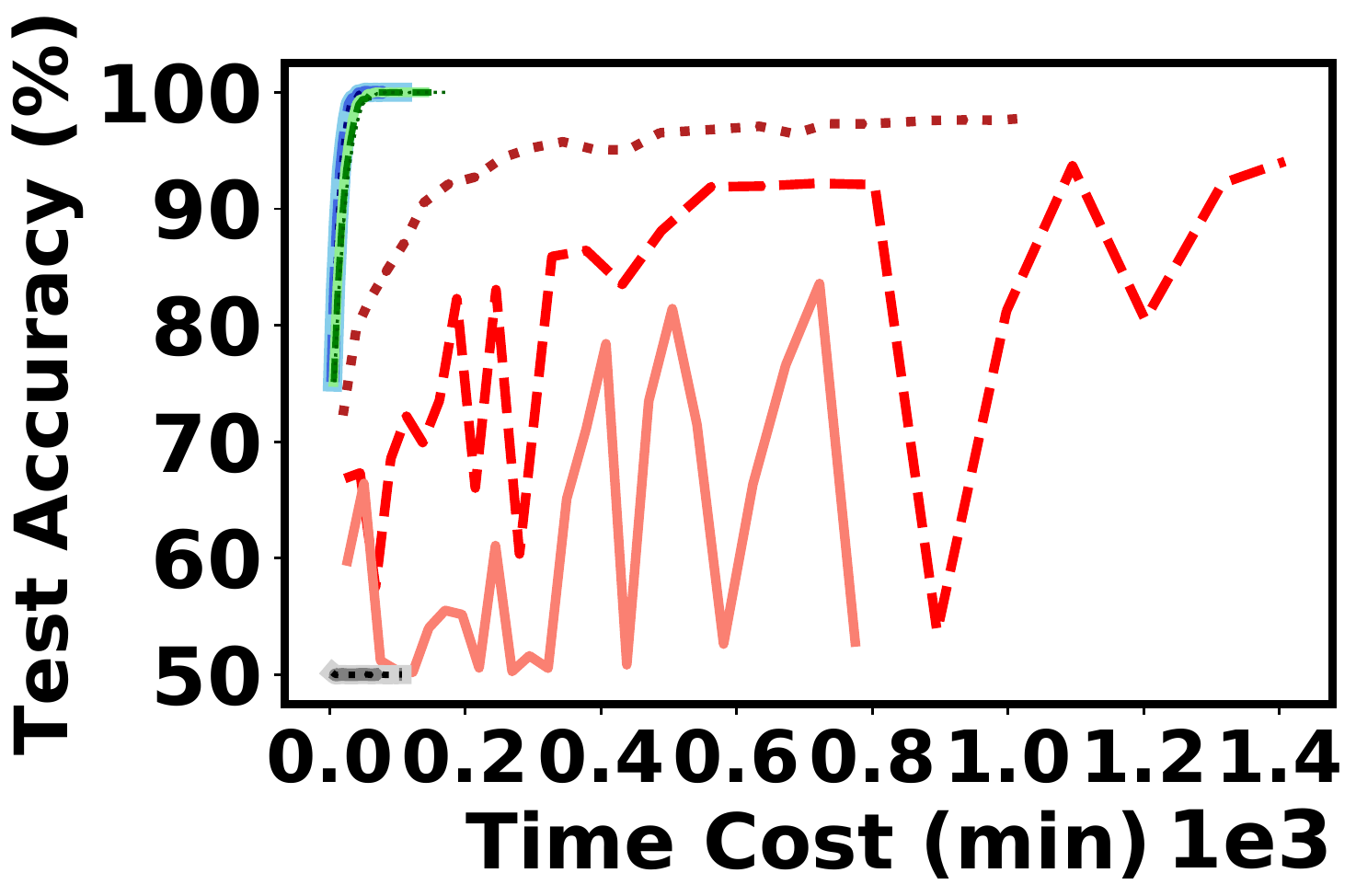}}	
	\caption{Convergence rates of \draco{}, GM, and vanilla mini-batch SGD, on (a) MNIST on FC, (b) MNIST on LeNet, (c) CIFAR10 on ResNet-18, and (d) MR on CRN, all with reverse gradient adversaries; (e) MNIST on FC, (f) MNIST on LeNet,  (g) CIFAR10 on ResNet-18, and (h) MR on CRN, all with constant adversaries.}
	\label{fig:ConvergeModel}
\end{figure*}
We first evaluate the end-to-end convergence performance of \draco{}, using both the repetition and cyclic codes, and compare it to ordinary mini-batch SGD as well as the GM approach.
\begin{table}[ht]
	\centering
	\caption{The datasets used, their associated learning models and corresponding parameters.}
	\vspace{0.3 cm}
	\begin{tabular}{|c|c|c|c|}
		\hline Dataset
		& MNIST & Cifar10  & MR \bigstrut\\
		\hline
		\# data points & 70,000 & 60,000 & 10,662 \bigstrut\\
		\hline
		Model & FC/LeNet & ResNet-18 & CRN \bigstrut\\
		\hline
		\# Classes & 10 & 10 & 2 \bigstrut\\
		\hline
		\# Parameters & 1,033k / 431k & 1,1173k & 154k \bigstrut\\
		\hline
		Optimizer & SGD & SGD & Adam \bigstrut\\
		\hline
		Learning Rate & 0.01 / 0.01 & 0.1 & 0.001 \bigstrut\\
		\hline
		Batch Size & 720 / 720 & 180 & 180 \bigstrut\\
		\hline
	\end{tabular}%
	\label{Tab:DataStat}%
\end{table}
The datasets and their associated learning models are summarized in Table \ref{Tab:DataStat}. 
We use fully connected (FC) neural networks and LeNet \cite{LeNet} for MNIST, ResNet-18 \cite{ResNet} for Cifar 10 \cite{Cifar10}, and CNN-rand-non-static (CRN) model in \cite{DBLP:conf/emnlp/Kim14} for Movie Review (MR) \cite{MR_Data}.

The experiments were run on a cluster of 45 compute nodes instantiated on m4.2xlarge instances. 
At each iteration, we randomly select $s=1,3,5$ (2.2\%, 6.7\%, 11.1\% of all compute nodes) nodes as adversaries.
All three methods are trained for 10,000 distributed iterations.
Figure \ref{fig:ConvergeModel} shows how the testing accuracy varies with training time.
Tables \ref{tab:SpeedupMNIST}, \ref{tab:SpeedupCIFAR10}, and \ref{tab:SpeedupCRM} give a detailed account of the speedups of \draco{} compared to the GM approach, where we run both systems until they achieve the same designated testing accuracy.

\begin{table}[ht]
	\centering
	\caption{Speedups (\ie $X$ times faster) of \draco{} (Repetition/Cyclic Codes) over GM when using a fully-connected neural network on the MNIST dataset. We run both methods until they reach the same specified testing accuracy. In the table `const' and `rev grad' refer to the two types of adversarial updates.}
	\vspace{0.3 cm}

	\begin{tabular}{|c|c|c|c|c|}
		\hline Test Accuracy
		& 80\% & 85\%  & 88\%  & 90\%  \bigstrut\\
		\hline
				\hline
		2.2\% const & \textbf{3.4}/\textbf{2.7}  & \textbf{3.5}/\textbf{2.8} & \textbf{4.8}/\textbf{3.9} & \textbf{4.1}/\textbf{3.1} \bigstrut\\
		\hline
		6.7\% const & \textbf{2.7}/\textbf{2.0} & \textbf{4.1}/\textbf{3.1} & \textbf{6.0}/\textbf{4.6} & \textbf{5.6}/\textbf{4.1} \bigstrut\\
		\hline
		11.1\% const & \textbf{2.9}/\textbf{2.2} & \textbf{4.8}/\textbf{3.7} & \textbf{6.1}/\textbf{4.7} & \textbf{5.3}/\textbf{3.8} \bigstrut\\
		\hline
		2.2\% rev grad & \textbf{2.2}/\textbf{1.9} & \textbf{2.4}/\textbf{2.2} & \textbf{4.1}/\textbf{3.7} & \textbf{3.2}/\textbf{2.9} \bigstrut\\
		\hline
		6.7\% rev grad & \textbf{3.1}/\textbf{2.5} & \textbf{3.3}/\textbf{3.1} & \textbf{5.5}/\textbf{4.8} & \textbf{4.5}/\textbf{3.7} \bigstrut\\
		\hline
		11.1\% rev grad & \textbf{2.7}/\textbf{2.3} & \textbf{3.0}/\textbf{2.6} & \textbf{3.1}/\textbf{2.7} & \textbf{3.1}/\textbf{2.6} \bigstrut\\
		\hline
	\end{tabular}%
	\label{tab:SpeedupMNIST}%
\end{table}%
First, as expected, ordinary mini-batch may not converge even if there is only one adversary.
Second, under the \textit{reverse gradient adversary} model, \draco{} converges several times faster than the GM approach, using both the repetition and cyclic codes.
In fact, as shown in the speedup tables, both the repetition and the cyclic code versions of \draco{} achieve up to more than an order of magnitude speedup compared to the GM approach.
We suspect that this is because the computation of the GM is extremely expensive compared to the encoding and decoding overhead of \draco{}.

\begin{table}[htbp]
	\centering
	\caption{Speedups of \draco{} (with both repetition and cyclic codes) over GM when using ResNet-18 on Cifar10. We run both methods until they reach the same specified testing accuracy. Here $\infty$ means that the GM approach failed to converge to the same accuracy reached by \draco{}. }

	\begin{tabular}{|c|c|c|c|c|}
		\hline Test Accuracy
		& 80\% & 85\%  & 88\%  & 90\%  \bigstrut\\
		\hline
						\hline
		2.2\% rev grad & \textbf{2.6}/\textbf{2.0} & \textbf{3.3}/\textbf{2.6} & \textbf{4.2}/\textbf{3.3} & \textbf{$\infty$}/\textbf{$\infty$} \bigstrut\\
		\hline
		6.7\% rev grad & \textbf{2.8}/\textbf{2.2} & \textbf{3.4}/\textbf{2.7} & \textbf{4.3}/\textbf{3.4} & \textbf{$\infty$}/\textbf{$\infty$} \bigstrut\\
		\hline
		11.1\% rev grad & \textbf{4.1}/\textbf{3.3} & \textbf{4.2}/\textbf{3.2} & \textbf{5.5}/\textbf{4.4} & \textbf{$\infty$}/\textbf{$\infty$} \bigstrut\\
		\hline
	\end{tabular}%

	\label{tab:SpeedupCIFAR10}%
\end{table}%

\begin{table}[htbp]	
	\centering
	\caption{Speedups of \draco{} (with both repetition and cyclic codes) over GM when using CRM on MR. We run both methods until they reach the same specified testing accuracy.}

	\begin{tabular}{|c|c|c|c|c|}
		\hline Test Accuracy
		& 95\% & 96\%  & 98\%  & 98.5\%  \bigstrut\\
		\hline
						\hline
		2.2\% rev grad & \textbf{5.4}/\textbf{4.2} & \textbf{5.6}/\textbf{4.3} & \textbf{9.7}/\textbf{7.4} & \textbf{12}/\textbf{9.0} \bigstrut\\
		\hline
		6.7\% rev grad & \textbf{6.4}/\textbf{4.5} & \textbf{6.3}/\textbf{4.5} & \textbf{11}/\textbf{8.1} & \textbf{19}/\textbf{13} \bigstrut\\
		\hline
		11.1\% rev grad & \textbf{7.5}/\textbf{4.7} & \textbf{7.4}/\textbf{4.6} & \textbf{12}/\textbf{8} & \textbf{19}/\textbf{12} \bigstrut\\
		\hline
	\end{tabular}
	\label{tab:SpeedupCRM}%
\end{table}%

Under the \textit{constant adversary} model, the GM approach sometimes failed to converge while \draco{} still converged in all of our experiments.
This reflects our theory, which shows that \draco{} always returns a model identical to the model trained by the ordinary algorithm in an adversary-free environment.
One reason why the GM approach may fail to converge is that by using the geometric median, it is actually losing information about a subset of the gradients.
Under the constant adversary model, the PS effectively gains no information about the gradients computed by the adversarial nodes, and cannot recover the desired optimal model.

Another reason that GM may not converge may be because theoretical convergence guarantees of GM require certain assumptions on the underlying models, such as convexity. Since neural networks are generally non-convex, we have no guarantees that GM converges in these settings.
It is worth noting that GM may also not converge if we were to use an algorithm such as L-BFGS or accelerated gradient descent, as the choice of algorithm is separate from the underlying properties of the neural network models.
Nevertheless, \draco{} still converges for such algorithms.

\paragraph{Per iteration cost of \draco{}}
\begin{figure*}[ht]
	\centering
	\includegraphics[width=\linewidth]{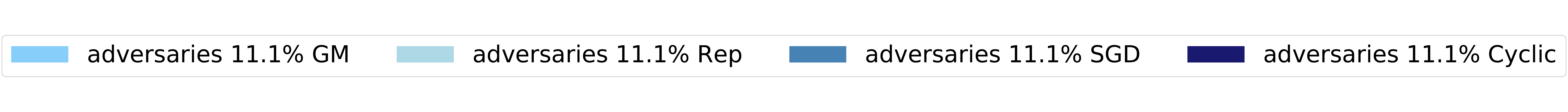}\\
	\subfigure[ResNet-152, Rev Grad]{\includegraphics[width=0.3\linewidth]{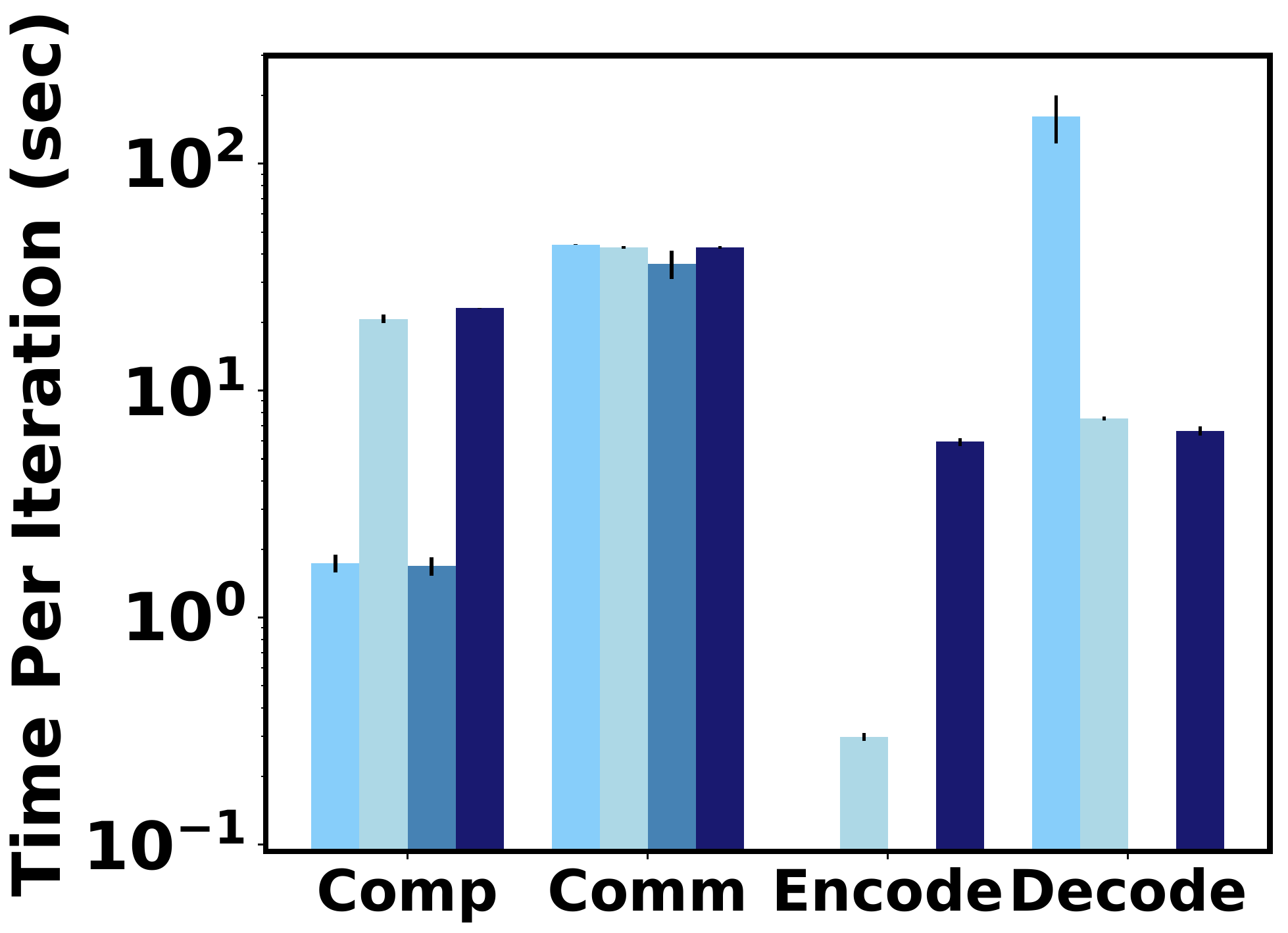}}
	\subfigure[VGG-19, Rev Grad]{\includegraphics[width=0.3\linewidth]{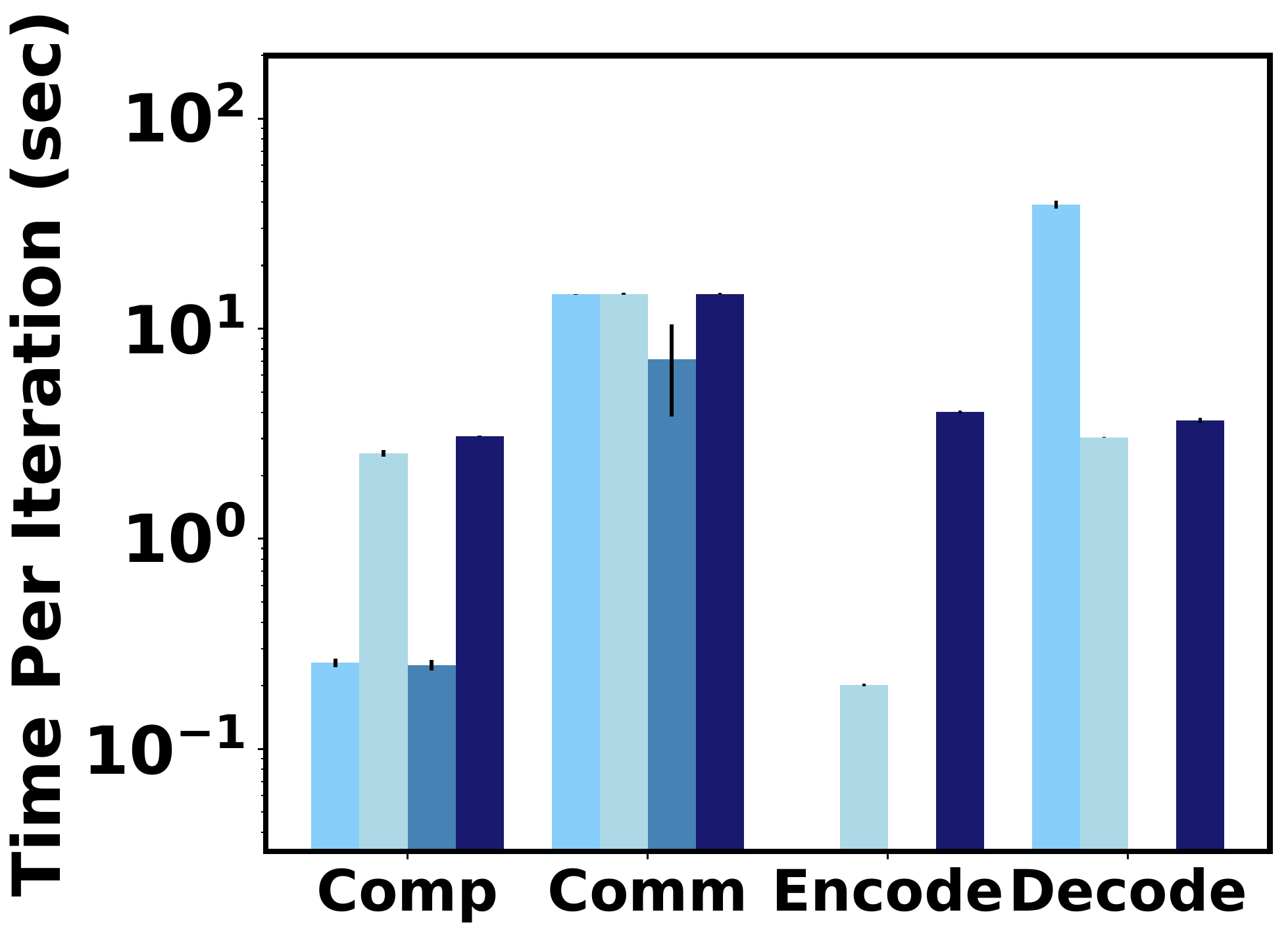}}
	\subfigure[AlexNet, Rev Grad]{\includegraphics[width=0.3\linewidth]{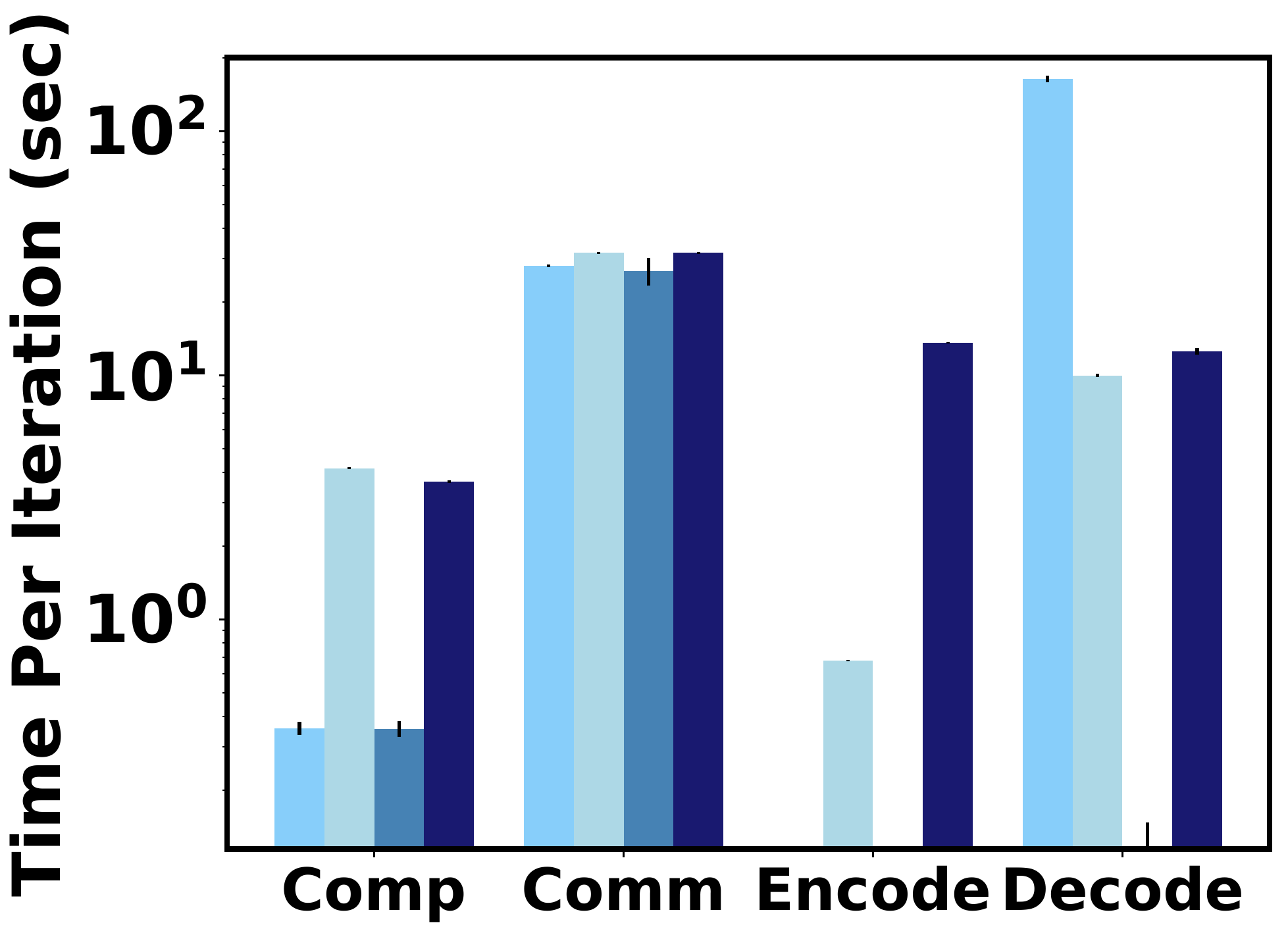}}
	\\
	\subfigure[ResNet-152, Const]{\includegraphics[width=0.3\linewidth]{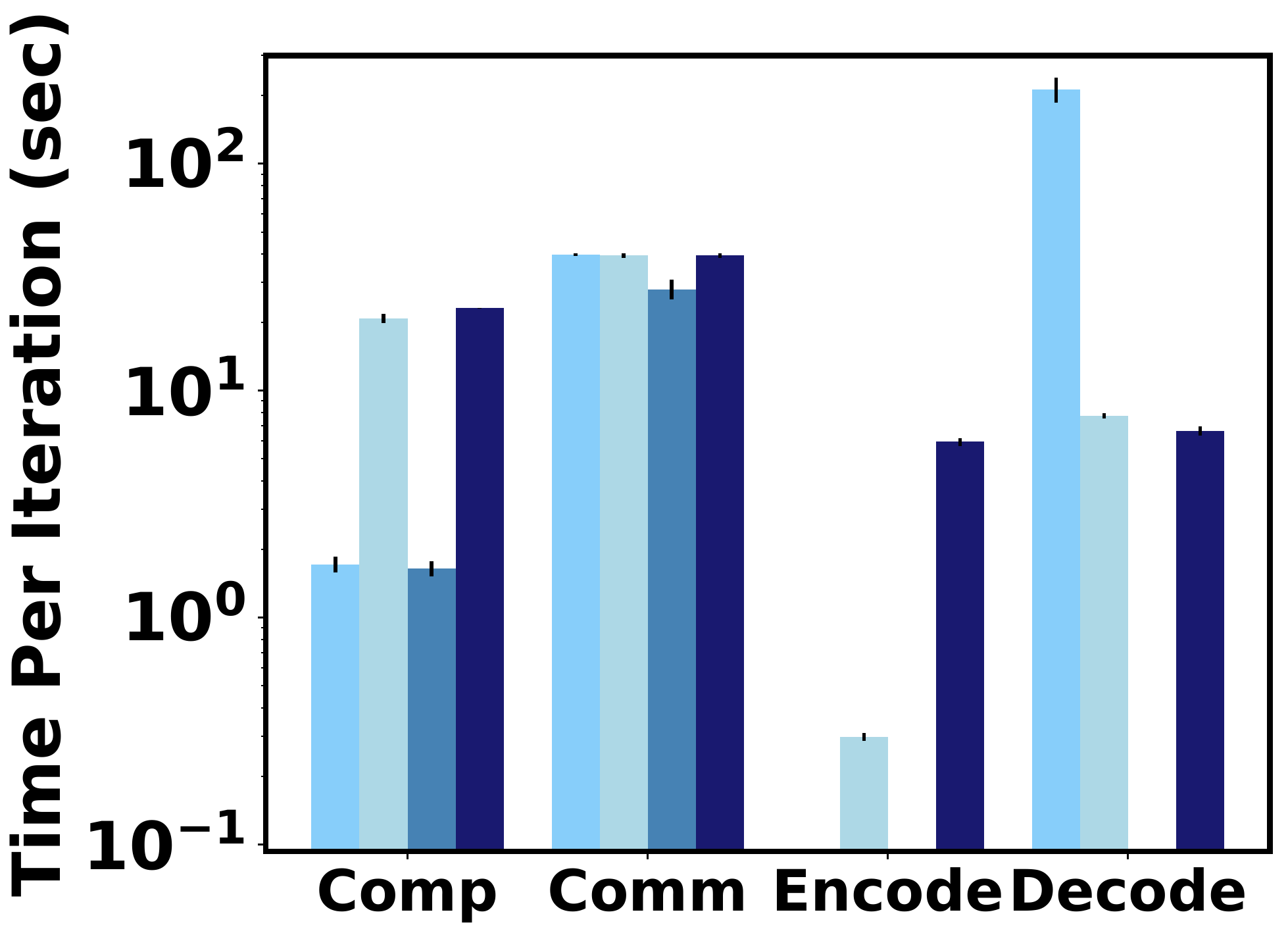}}
	\subfigure[VGG-19, Const]{\includegraphics[width=0.3\linewidth]{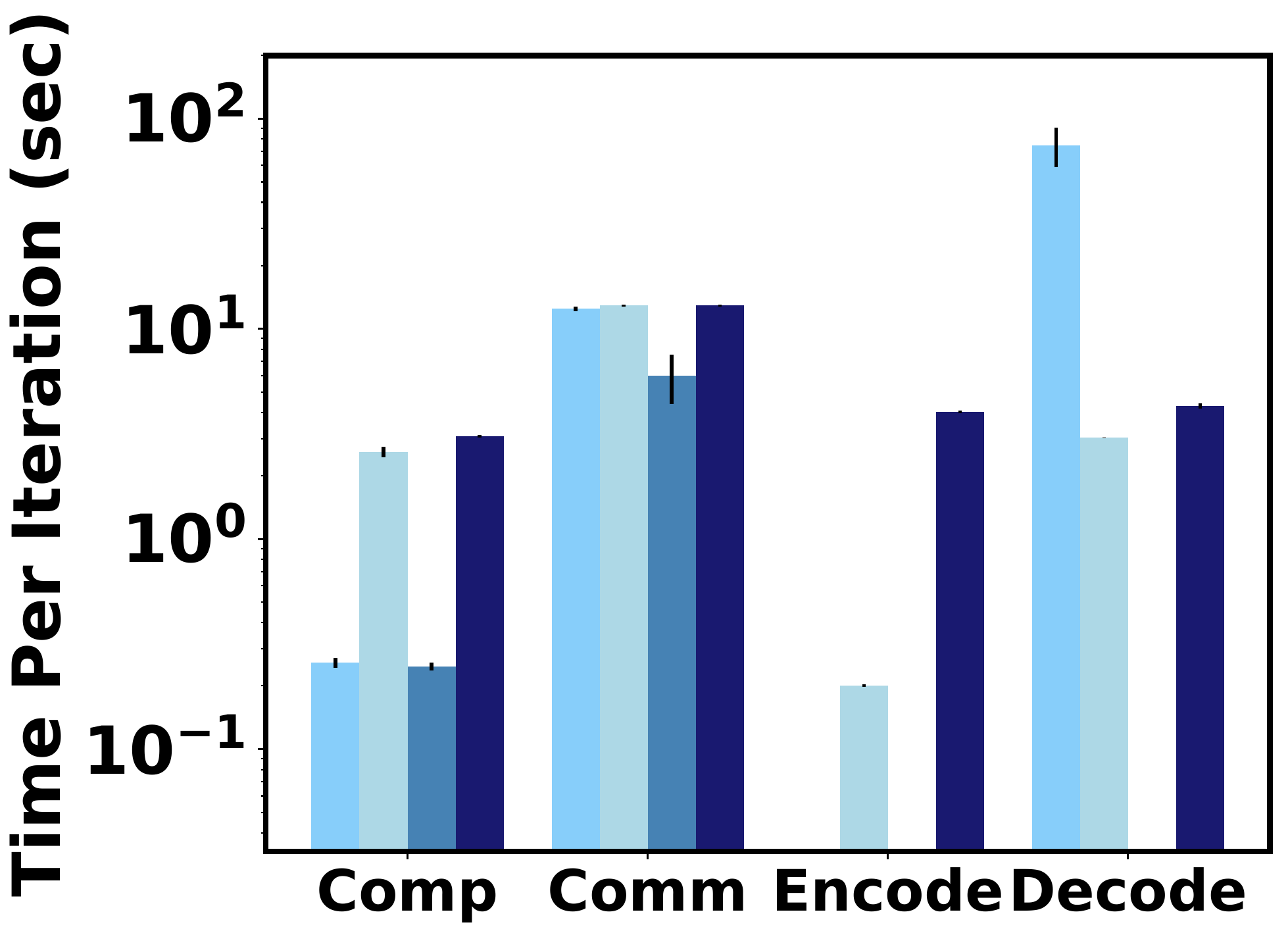}}
	\subfigure[AlexNet, Const]{\includegraphics[width=0.3\linewidth]{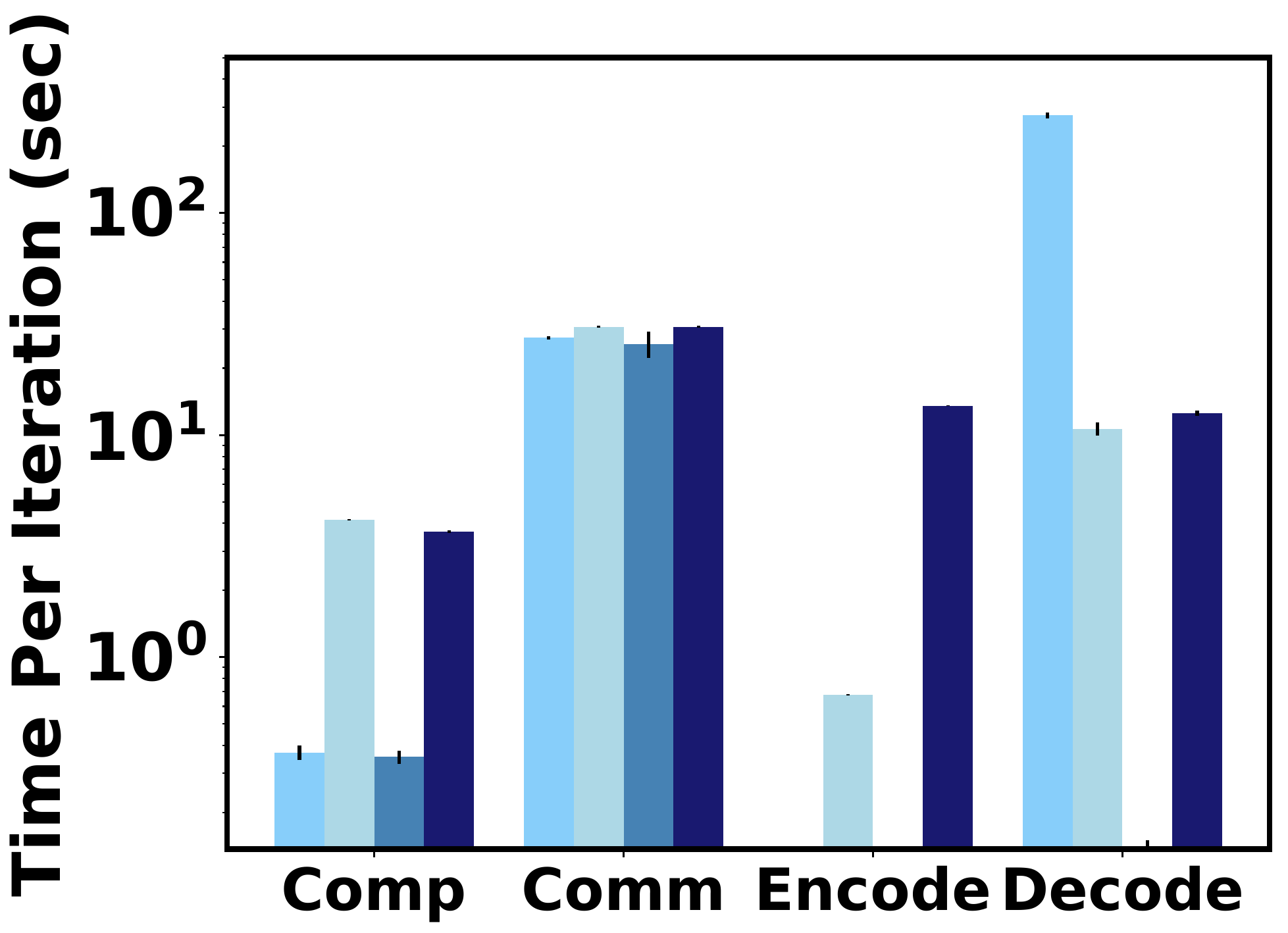}}
	\vspace{1mm}
	\caption{Empirical Per Iteration Time Cost on Large Models with 11.1\% adversarial nodes (a): reverse gradient adversary on ResNet-152, (b): reverse gradient adversary on VGG-19, (c): reverse gradient adversary on AlexNet, (d): constant adversary on ResNet-152, (e): constant adversary on VGG-19,  (f): constant adversary on AlexNet}
	\label{fig:3}
	\vspace{-2mm}
\end{figure*}

We provide empirical per iteration costs of applying \draco{} to three large state-of-the-art deep networks, ResNet-152, VGG-19, and AlexNet \cite{ResNet,VGG,AlexNet}.
The experiments provided here are run  on 46 real instances (45 compute nodes with 1 PS) on AWS EC2. For ResNet-152 and VGG-19, m4.4xlarge (equipped with 16 cores with 64 GB memory) instances are used while AlexNet experiments are run on m4.10xlarge (40 cores with 160 GB memory) instances given  the high memory cost during training. We use a batch size of $B=180$ and split the data among compute nodes. Therefore, each compute node is assigned $\frac{B}{n}=4$ data points per iteration. We use the Cifar10 dataset for all the aforementioned networks. For networks like AlexNet that were not designed for small images, we resize the Cifar10 images to fit the network.
As shown in Figure \ref{fig:3}, with $s=5$, the encoding and decoding time of \draco{} can be several times larger than the computation time of ordinary SGD, though SGD may not converge in adversarial settings. 
Nevertheless, \draco{} is still several times faster than GM. 
\begin{table}[htp]
	\centering
	\caption{Averaged Per Iteration Time Costs on ResNet-152 with 11.1\% adversary}
	\vspace{0.3 cm}

	\begin{tabular}{|c|c|c|c|c|}
		\hline Time Cost (sec)
		& Comp & Comm  & Encode  & Decode  \bigstrut\\
		\hline
		GM const & 1.72 & 39.74 & 0 & 212.31 \bigstrut\\
		\hline
		Rep const & 20.81 & 39.36 & 0.24 & 7.74 \bigstrut\\
		\hline
		SGD const & 1.64 & 27.99 & 0 & 0.09 \bigstrut\\
		\hline
		Cyclic const & 23.08 & 39.36 & 5.94 & 6.64 \bigstrut\\
		\hline
		GM rev grad & 1.73 & 43.98 & 0 & 161.29 \bigstrut\\
		\hline
		Rep rev grad & 20.71 & 42.86 & 0.29 & 7.54 \bigstrut\\
		\hline
		SGD rev grad & 1.69 & 36.27 & 0 & 0.09 \bigstrut\\
		\hline
		Cyclic rev grad & 23.08 & 42.86 & 5.95 & 6.65 \bigstrut\\
		\hline
		\end{tabular}%
		\label{tab:ResNetTuntime}%
		\end{table}%
		
		\begin{table}[htp]
			\centering
			\caption{Averaged Per Iteration Time Costs on VGG-19 with 11.1\% adversary}
			\vspace{0.3 cm}

			\begin{tabular}{|c|c|c|c|c|}
				\hline Time Cost (sec)
				& Comp & Comm  & Encode  & Decode  \bigstrut\\
				\hline
				GM const & 0.26 & 12.47 & 0 & 74.63 \bigstrut\\
				\hline
				Rep const & 2.59 & 12.91 & 0.20 & 3.03 \bigstrut\\
				\hline
				SGD const & 0.25 & 6.9 & 0 & 0.03 \bigstrut\\
				\hline
				Cyclic const & 3.08 & 12.91 & 4.01 & 4.30 \bigstrut\\
				\hline
				GM rev grad & 0.26 & 14.57 & 0 & 39.02 \bigstrut\\
				\hline
				Rep rev grad & 2.55 & 14.66 & 0.20 & 3.04 \bigstrut\\
				\hline
				SGD rev grad & 0.25 & 7.15 & 0 & 0.03 \bigstrut\\
				\hline
				Cyclic rev grad & 3.07 & 14.66 & 4.02 & 3.65 \bigstrut\\
				\hline
				\end{tabular}%

				\label{tab:VGGTuntime}%
				\end{table}%

Table \ref{tab:ResNetTuntime}, \ref{tab:VGGTuntime} and \ref{tab:AlexNetTuntime} provide the detailed cost of the runtime of each component of the algorithm in training ResNet-152, VGG-19 and AlexNet, respectively. 
While the communication cost is high in both \draco{} and the GM method, the decoding time of the GM approach, i.e., its geometric median update at the PS, is   prohibitively high. 
Meanwhile, the  encoding and decoding overhead of \draco{} is relatively negligible in these cases.

\begin{table}[htp]
	\centering
	\caption{Averaged Per Iteration Time Costs on AlexNet with 11.1\% adversarial nodes.}
	\vspace{0.3 cm}

	\begin{tabular}{|c|c|c|c|c|}
		\hline Time Cost (sec)
		& Comp & Comm  & Encode  & Decode  \bigstrut\\
		\hline
		GM const & 0.37 & 27.40 & 0 & 275.08 \bigstrut\\
		\hline
		Rep const & 4.16 & 30.71 & 0.67 & 10.65 \bigstrut\\
		\hline
		SGD const & 0.35 & 25.72 & 0 & 0.14 \bigstrut\\
		\hline
		Cyclic const & 3.67 & 30.71 & 13.55 & 12.54 \bigstrut\\
		\hline
		GM rev grad & 0.36 & 28.10 & 0 & 163.48 \bigstrut\\
		\hline
		Rep rev grad & 4.15 & 31.76 & 0.67 & 9.98 \bigstrut\\
		\hline
		SGD rev grad & 0.35 & 26.76 & 0 & 0.11 \bigstrut\\
		\hline
		Cyclic rev grad & 3.66 & 31.755 & 13.55 & 12.54 \bigstrut\\
		\hline
	\end{tabular}%
	
	\label{tab:AlexNetTuntime}%
\end{table}%
\paragraph{Effects of number of adversaries }

We also analyze how the number of adversaries affects the performance of \draco{}. We ran Cifar10 on ResNet-18 with 15 compute nodes, varying the number of adversaries $s$ from 1 to 7. 
For these experiments, we used the constant adversary model.
 For the repetition code, we adapted the group size based on $s$ while in the cyclic code we always took $2s+1$. Figure \ref{fig:4} shows the total runtime cost of \draco{} does not increase significantly as the number of adversaries increase. This is likely due to the fact that even at $s = 7$, the communication cost (which is not affected by the number of stragglers) is the dominant cost of the algorithm. 

\begin{figure}[htbp]
	\centering
	\subfigure[Repetition Code]{\includegraphics[width=0.4\linewidth]{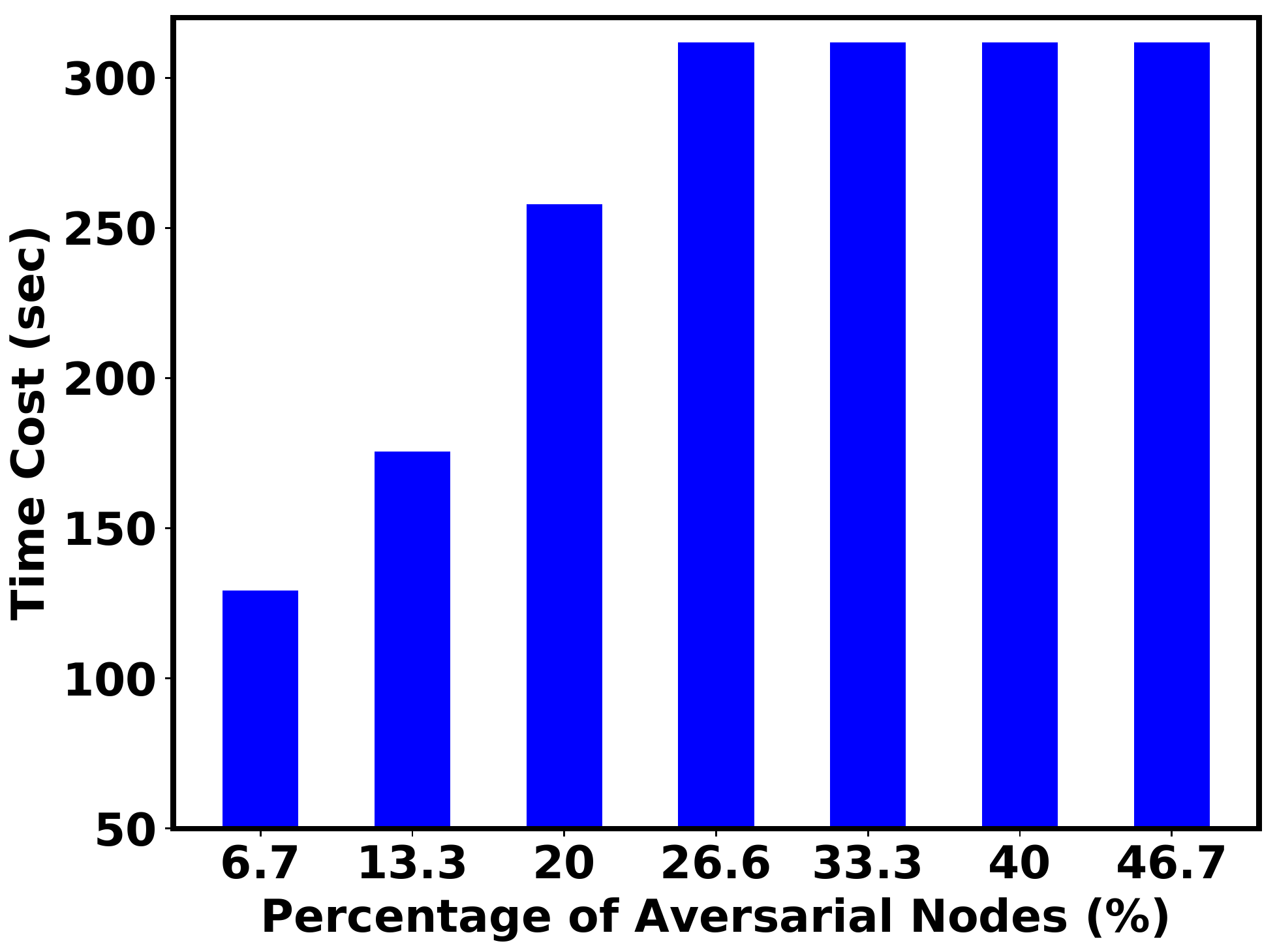}}
	\subfigure[Cyclic Code]{\includegraphics[width=0.4\linewidth]{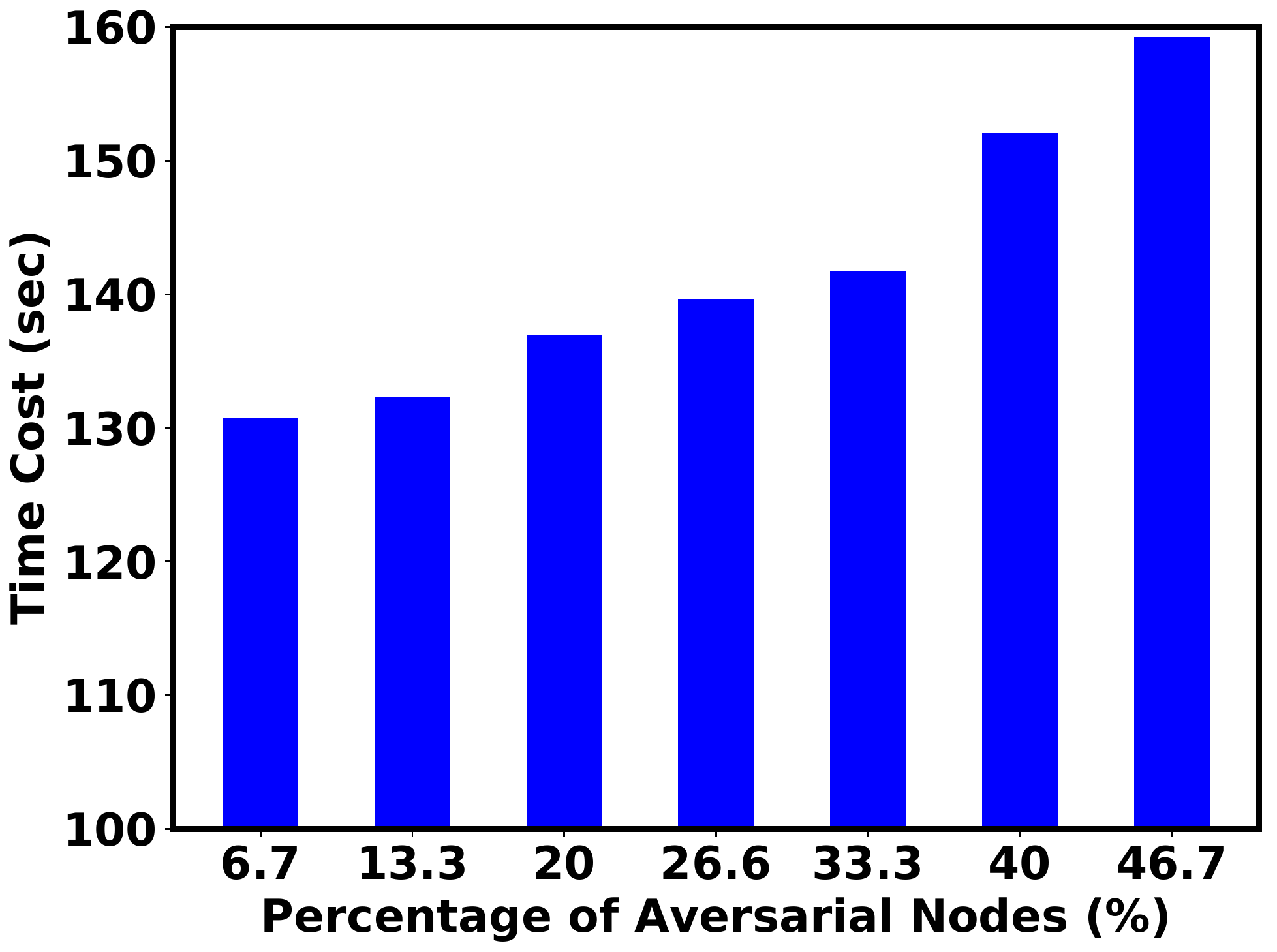}}
	\vspace{1mm}
	\caption{Time Cost to Reach 70\% Test set Accuracy with Cifar10 dataset run with ResNet-18 on cluster 15 computation nodes varying Percentage of Adversarial Nodes from 6.7\% to 46.7\% with Constant Adversary (a) Repetition Code and (b) Cyclic Code}
	\label{fig:4}
	\vspace{-2mm}
\end{figure}
\section{Conclusion and Open Problems}\label{Sec:Conclusion}
In this work we presented \draco{}, a framework for robust distributed training via algorithmic redundancy. 
\draco{} is robust to arbitrarily malicious compute nodes, while being orders of magnitude faster than state-of-the-art robust distributed systems. We give information--theoretic lower bounds on how much redundancy is required to resist adversaries while maintaining the correct update rule, and show that \draco{} achieves this lower bound. There are several interesting future directions.

First, \draco{} is designed to output the same model with or without adversaries. However, slightly inexact model updates often do not decrease performance noticeably. Therefore, we might ask whether we can either (1) tolerate more stragglers or (2) reduce the computational cost of \draco{} by only {\it approximately} recovering the desired gradient summation. Second, while we give two relatively efficient methods for encoding and decoding, there may be others that are more efficient for use in distributed setups. 
\section*{Acknowledgement}\label{Sec:Aknowledge}
This work was supported in part by a gift from Google and  AWS Cloud Credits for Research from Amazon.
We thank Jeffrey Naughton and Remzi Arpaci-Dusseau for  invaluable discussions and feedback on earlier drafts of this paper.
\bibliography{CodedML}
\bibliographystyle{alpha}

\newpage
\appendix
﻿﻿
\appendix

\section{Proofs}
\subsection{Proof of Theorem \ref{Thm:RedundancyRatioBound} }
For simplicity of proof, let us define a valid $s$-attack first.
\begin{definition}
	$\mathbf{N} =[\mathbf{n}_1,\mathbf{n}_2,\cdots, \mathbf{n}_P]$ is a valid $s$ attack if and only if $\left\vert{\{j: \|\mathbf{n}_{j}\|_0\not= 0 \}}\right\vert \leq s$. 
\end{definition}
Now we prove theorem \ref{Thm:RedundancyRatioBound}.
Suppose $(\mathbf{A}, E, D)$ can resist $s$ adversaries.
The goal is to prove 
$\|A\|_0 \geq P(2s+1)$.
    In fact we can prove a slightly stronger version: $\|\mathbf{A}_{\cdot,i}\|_0 \geq  \left(2s+1\right), i = 1,2,\cdots, B$.
	Suppose for some $i$,  $\|\mathbf{A}_{\cdot,i}\|_0 = \tau <  \left(2s+1\right)$.
	Without loss of generality, assume that $\mathbf{A}_{1, i}, \mathbf{A}_{2, i}, \mathbf{A}_{\tau,i} $ are non-zero. Let $\mathbf{G}_{-i} = [\mathbf{g}_1,\mathbf{g}_2,\cdots,\mathbf{g}_{i-1},\mathbf{g}_{i+1},\cdots, \mathbf{g}_P]$. 
	Since $(\mathbf{A},E,D)$ can protect against $s$ adversaries,  we have for any $\mathbf{G}$, 	\begin{align*}
		D(\mathbf{Z}^{\mathbf{A},E,\mathbf{G}}+\mathbf{N} ) = \mathbf{G} \mathbf{1}=  \mathbf{G}_{-i} \mathbf{1} + \mathbf{g}_i,
	\end{align*}
	for any valid $s$-attack $\mathbf{N}$.
	In particular, let $\mathbf{g}^1_i = \mathbf{1}_{d}$,  $\mathbf{g}^i_2 = -\mathbf{1}_{d}$,
	$\mathbf{G}^1 = [\mathbf{g}_1,\mathbf{g}_2,\cdots,\mathbf{g}_{i-1},\mathbf{g}_{i}^1,\mathbf{g}_{i+1},\cdots, \mathbf{g}_P]$, and $\mathbf{G}^2 = [\mathbf{g}_1,\mathbf{g}_2,\cdots,\mathbf{g}_{i-1},\mathbf{g}_{i}^2,\mathbf{g}_{i+1},\cdots, \mathbf{g}_P]$.  
	Then for any valid $s$ attack $\mathbf{N}^1, \mathbf{N}^2$,
	\begin{align*}
		D(\mathbf{Z}^{\mathbf{A},E,\mathbf{G}^1}+\mathbf{N}^1 ) =  \mathbf{G}_{-i} \mathbf{1}_{P-1} + \mathbf{1}_d.
	\end{align*}	
	and
	\begin{align*}
		D(\mathbf{Z}^{\mathbf{A},E,\mathbf{G}^2}+\mathbf{N}^2 ) =  \mathbf{G}_{-i} \mathbf{1}_{P-1} - \mathbf{1}_d.
	\end{align*}	
	Our goal is to find $\mathbf{N}^1, \mathbf{N}^2$ such that $D(\mathbf{Z}^{\mathbf{A},E,\mathbf{G}^1}+\mathbf{N}^1 ) = D(\mathbf{Z}^{\mathbf{A},E,\mathbf{G}^2}+\mathbf{N}^2 )$ which then will lead to a contradiction.
	Construct $\mathbf{N}^1$ and $\mathbf{N}^2$ by
	\begin{equation*}
	\mathbf{N}^1_{\ell,j}= 
	\begin{cases}
	\left[\mathbf{Z}^{\mathbf{A},E, \mathbf{N}^2}\right]_{\ell,j} - 	\left[\mathbf{Z}^{\mathbf{A},E, \mathbf{N}^1}\right]_{\ell,j} ,&j=1,2,\cdots, \ceil{\frac{\tau-1}{2}}\\
	0,              & \text{otherwise}
	\end{cases}
	\end{equation*}
	and
	\begin{equation*}
	\mathbf{N}^2_{\ell,j}= 
	\begin{cases}
		\left[\mathbf{Z}^{\mathbf{A},E, \mathbf{N}^1}\right]_{\ell,j} - 	\left[\mathbf{Z}^{\mathbf{A},E, \mathbf{N}^2}\right]_{\ell,j}, & j= \ceil{\frac{\tau-1}{2}}, \ceil{\frac{\tau-1}{2}}+1,\cdots, \tau\\
	0,              & \text{otherwise}
	\end{cases}
	\end{equation*}
	One can easily verify that $\mathbf{N}^1, \mathbf{N}^2$ are both valid $s$ attack. Meanwhile, we have
	\begin{align*}
		\left[\mathbf{Z}^{\mathbf{A},E,\mathbf{G}^1}\right]_{\ell,j}+\mathbf{N}^1_{{\ell,j}} = \left[\mathbf{Z}^{\mathbf{A},E,\mathbf{G}^2}\right]_{\ell,j}+\mathbf{N}^2_{{\ell,j}}, j=1,2,\cdots, \tau
	\end{align*}	 
	due to the above construction of $\mathbf{N}^1, \mathbf{N}^2$.
	Note that $\mathbf{A}_{j,i}=0$ for all $j>\tau$, which implies that for all compute nodes with index $j>\tau$, their encoder functions do not depend on the $i$th gradient. Since $\mathbf{G}^1$ and $\mathbf{G}^2$ only differ in the $i$th gradient, the encoder function of any compute node with index $j>\tau$ should have the same output.
	Thus, we have  
	\begin{align*}
		\left[\mathbf{Z}^{\mathbf{A},E,\mathbf{G}^1}\right]_{\ell,j}+\mathbf{N}^1_{{\ell,j}} = \left[\mathbf{Z}^{\mathbf{A},E,\mathbf{G}^1}\right]_{\ell,j} = \left[\mathbf{Z}^{\mathbf{A},E,\mathbf{G}^2}\right]_{\ell,j} = \left[\mathbf{Z}^{\mathbf{A},E,\mathbf{G}^2}\right]_{\ell,j}+\mathbf{N}^2_{{\ell,j}}, j> \tau
	\end{align*}
	Hence, we have 		
	\begin{align*}
		\left[\mathbf{Z}^{\mathbf{A},E,\mathbf{G}^1}\right]_{\ell,j}+\mathbf{N}^1_{{\ell,j}} =  \left[\mathbf{Z}^{\mathbf{A},E,\mathbf{G}^2}\right]_{\ell,j}+\mathbf{N}^2_{{\ell,j}}, \forall j
	\end{align*}
	which means
	\begin{align*}
		\mathbf{Z}^{\mathbf{A},E,\mathbf{G}^1}+\mathbf{N}^1 = \mathbf{Z}^{\mathbf{A},E,\mathbf{G}^2}+\mathbf{N}^2
	\end{align*}
Therefore, we have 
\begin{align*}
    D(\mathbf{Z}^{\mathbf{A},E,\mathbf{G}^1}+\mathbf{N}^1 ) = D(\mathbf{Z}^{\mathbf{A},E,\mathbf{G}^2}+\mathbf{N}^2 )
\end{align*}
	and thus
\begin{align*}
    \mathbf{G}_{-1} \mathbf{1}_{P-1} + \mathbf{1}_d = D(\mathbf{Z}^{\mathbf{A},E,\mathbf{G}^1}+\mathbf{N}^1 ) = D(\mathbf{Z}^{\mathbf{A},E,\mathbf{G}^2}+\mathbf{N}^2 ) = \mathbf{G}_{-1} \mathbf{1}_{P-1} - \mathbf{1}_d 
\end{align*}
This gives us a contradiction. Hence, the assumption is not correct and we must have $\|\mathbf{A}_{\cdot,i}\|_0 \geq  \left(2s+1\right), i = 1,2,\cdots, P$.	
Thus, we must have $\|A\|_0 \geq (2s+1)P$. \qed
	
A direct but interesting corollary of this theorem is a bound on the number of adversaries \draco{} can resist.

\begin{corollary}\label{Cor:AdversarialBound}
	$(\mathbf{A}, E, D)$ can resist at most $\frac{P-1}{2}$ adversarial nodes.
\end{corollary}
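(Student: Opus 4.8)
The plan is to derive the bound directly from Theorem \ref{Thm:RedundancyRatioBound} by combining it with the trivial upper bound on how much redundancy the allocation matrix $\mathbf{A}$ can possibly carry. Recall that $\mathbf{A}$ is a $P \times P$ matrix (since $B = P$), so each column $\mathbf{A}_{\cdot,i}$ has at most $P$ nonzero entries, i.e. $\|\mathbf{A}_{\cdot,i}\|_0 \le P$ for every $i$. Summing over the $P$ columns gives $\|\mathbf{A}\|_0 \le P^2$, hence the redundancy ratio satisfies $r = \frac{1}{P}\|\mathbf{A}\|_0 \le P$.

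The key step is then to invoke Theorem \ref{Thm:RedundancyRatioBound}: if $(\mathbf{A}, E, D)$ can tolerate $s$ adversarial nodes, then $r \ge 2s+1$. Chaining the two inequalities yields $2s + 1 \le r \le P$, so $s \le \frac{P-1}{2}$. Since $s$ is an integer, this is exactly the claimed bound; there is nothing further to do.

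I would write it in two or three lines: first state $\|\mathbf{A}_{\cdot,i}\|_0 \le P$ and conclude $r \le P$; then apply Theorem \ref{Thm:RedundancyRatioBound} to get $2s+1 \le r$; then combine. The only thing to be slightly careful about is the implicit assumption $B = P$ used throughout Section \ref{sec:draco}, which makes $\mathbf{A}$ square and gives the $\|\mathbf{A}_{\cdot,i}\|_0 \le P$ bound; with that in hand there is no real obstacle, as the corollary is a one-step consequence of the already-established information-theoretic lower bound.

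\begin{proof}
Since $B = P$, the allocation matrix $\mathbf{A}$ is $P \times P$, so every column satisfies $\|\mathbf{A}_{\cdot,i}\|_0 \le P$. Summing over $i = 1, \ldots, P$ gives $\|\mathbf{A}\|_0 \le P^2$, and therefore the redundancy ratio obeys $r = \tfrac{1}{P}\|\mathbf{A}\|_0 \le P$. On the other hand, if $(\mathbf{A}, E, D)$ can resist $s$ adversarial nodes, Theorem \ref{Thm:RedundancyRatioBound} gives $r \ge 2s + 1$. Combining the two bounds, $2s + 1 \le r \le P$, so $s \le \tfrac{P-1}{2}$.
\end{proof}
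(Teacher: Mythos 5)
Your proof is correct and follows essentially the same route as the paper: invoke Theorem \ref{Thm:RedundancyRatioBound} to get $2s+1$ as a lower bound, note that $P$ compute nodes cap the achievable redundancy, and conclude $2s+1 \le P$, i.e.\ $s \le \frac{P-1}{2}$. The only cosmetic difference is that you chain the bounds through the averaged redundancy ratio $r$, whereas the paper phrases it as each data point being replicated at least $2s+1$ but at most $P$ times; the content is the same.
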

\begin{proof}
According to Theorem \ref{Thm:RedundancyRatioBound}, the redundancy ratio is at least $2s+1$, meaning that every data point must be replicated by at least $2s+1$. 
Since there are  $P$ compute node in total, we must have $2s+1\leq P$, which implies $s\leq \frac{P-1}{2}$. 
Thus, $(\mathbf{A},E,D)$ can resist at most $\frac{P-1}{2}$ adversaries.
\end{proof}
In other words, at least a majority of the compute nodes must be non-adversarial. \qed

\subsection{Proof of Theorem \ref{Thm:RepCodeOptimality} }
Since there are at most $s$ adversaries, there are at least $2s+1-s=s+1$ non-adversarial compute nodes in each group.
Thus, performing majority vote on each group returns the correct gradient, and thus the repetition code guarantees that the result is correct.
The complexity at each compute node is clearly $\mathcal{O}((2s+1)d)$ since each of them only computes the sum of $(2s+1)$ $d$-dimensional gradients.
For the decoder at the PS, within each group of $(2s+1)$ machine, it takes $\mathcal{O}((2s+1) d )$ computations to find the majority. Since there are $\frac{P}{(2s+1)}$ groups, it takes in total $\mathcal{O}((2s+1) d \frac{P}{(2s+1)} ) = \mathcal{O}(P d )$ computations. 
Thus, this achieves linear-time encoding and decoding. \qed

\subsection{Proof of Lemma \ref{Thm:CycCodeColumnSpan} }
We first prove that $\mathbf{A}_{j,k} = 0 \Rightarrow \mathbf{W}_{j,k} = 0$.

Suppose $\mathbf{A}_{j,k} = 0$ for some $j,k$.
Then by definition $k \in \alpha_j$.
By $\mathbf{0} = \begin{matrix} \begin{bmatrix}
\mathbf{q}_j & 1
\end{bmatrix}\end{matrix} \cdot  \left[\mathbf{C}_{L}\right]_{\cdot,\alpha_j} $ we have $ 0 =\begin{matrix} \begin{bmatrix}
\mathbf{q}_j & 1
\end{bmatrix}\end{matrix} \left[\mathbf{C}_L\right]_{\cdot,k} =\mathbf{W}_{j,k}$. 

Next we prove that for any index set $U$ such that $|U|\geq P-(2s+1)$, the column span of $\mathbf{W}_{\cdot,U}$ contains $\mathbf{1}$.
This is equivalent to that for any index set $U$ such that $|U|\geq P-(2s+1)$, there exists a vector $\mathbf{b}$ such that $\mathbf{W}_{\cdot,U} b = \mathbf{1}$. 
Now we show such $b$ exists.
Note that $\mathbf{C}_{L}$ is a $(P-2s)\times P$ full rank Vandermonde matrix and thus any $P-2s$ columns of 
$\mathbf{C}_{L}$ are linearly independent.
Let $\bar{U}$ be the first $P-2s$ elements in $U$.
Then all columns of $\left[\mathbf{C}_{L}\right]_{\cdot,\bar{U}}$ are linearly independent and thus $\left[\mathbf{C}_{L}\right]_{\cdot, \bar{U}}$ is invertible.
Let $\mathbf{b}_{\bar{U}} \triangleq \mathbf{\bar{b}} = \left(C^L_{\bar{U}}\right)^{-1} \begin{matrix} \begin{bmatrix}
0 & 0 & \cdots & 0 & 1
\end{bmatrix}\end{matrix}^T $.
For any $j \not\in \bar{U}$, let
 $\mathbf{b}_{j} = 0$.
Then we have 

\begin{equation*}
\begin{split}
\mathbf{W}_{U}  \mathbf{b} & = \begin{matrix} \begin{bmatrix}
		\mathbf{Q} & \mathbf{1}
	\end{bmatrix}\end{matrix} \times  \left[\mathbf{C}_L\right]_{\cdot,U} \mathbf{b} \\
	& =	\begin{matrix} \begin{bmatrix}
			\mathbf{Q} & \mathbf{1}
		\end{bmatrix}\end{matrix} \times  \left[\mathbf{C}_L\right]_{\cdot,\bar{U}} \mathbf{\bar{b}} \\	
	& = \begin{matrix} \begin{bmatrix}
			\mathbf{Q} & \mathbf{1}
		\end{bmatrix}\end{matrix} \times  \left[\mathbf{C}_L\right]_{\cdot,\bar{U}} \times \left[\mathbf{C}_L\right]_{\cdot,\bar{U}}^{-1} \begin{matrix} \begin{bmatrix}
			0 & 0 & \cdots & 0 & 1
		\end{bmatrix}\end{matrix}^T\\	
	& = \begin{matrix} \begin{bmatrix}
			\mathbf{Q} & \mathbf{1}
		\end{bmatrix}\end{matrix} \begin{matrix} \begin{bmatrix}
		0 & 0 & \cdots & 0 & 1
	\end{bmatrix}\end{matrix}^T\\
	& = \mathbf{1}.\\				
\end{split}
\end{equation*}
This completes the proof. \qed

\subsection{Proof of Lemma \ref{Thm:CycCodeDetection} }
We need a few lemmas first.
\begin{lemma}\label{Lemma:CycProb}
	Let a $P$-dimensional vector $\mathbf{\gamma} \triangleq \left[\gamma_1, \gamma_2,\cdots, \gamma_P \right]^T = \left(\mathbf{f} \mathbf{N}\right)^T$. Then we have 
	\begin{align*}
	\textit{Pr}( \{ j : \gamma_j \not= 0 \} = \{j: \| \mathbf{N}_{\cdot,j} \|_0 \not= 0 \}) = 1.
	\end{align*}
\end{lemma}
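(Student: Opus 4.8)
\textbf{Proof proposal for Lemma \ref{Lemma:CycProb}.}
The plan is to analyze each column of $\mathbf{N}$ separately and use the fact that $\mathbf{f}$ is a continuous random vector. Fix a column index $j$. We have $\gamma_j = \mathbf{f}\,\mathbf{N}_{\cdot,j}$, which is a fixed linear functional of the Gaussian vector $\mathbf{f}$. There are two cases. If $\|\mathbf{N}_{\cdot,j}\|_0 = 0$, i.e. $\mathbf{N}_{\cdot,j} = \mathbf{0}$, then deterministically $\gamma_j = 0$, so $j \notin \{k : \gamma_k \neq 0\}$ and $j \notin \{k : \|\mathbf{N}_{\cdot,k}\|_0 \neq 0\}$, consistent with the claimed event. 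If $\|\mathbf{N}_{\cdot,j}\|_0 \neq 0$, then $\mathbf{N}_{\cdot,j}$ is a nonzero vector, and $\gamma_j = \mathbf{f}\,\mathbf{N}_{\cdot,j}$ is a (non-degenerate) Gaussian random variable with nonzero variance $\|\mathbf{N}_{\cdot,j}\|_2^2 > 0$ (since $\mathbf{f} \sim \mathcal{N}(\mathbf{1}_{1\times d}, \mathbf{I}_d)$). Hence $\Pr(\gamma_j = 0) = 0$, so almost surely $\gamma_j \neq 0$, putting $j$ in both index sets.

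Next I would combine these per-column statements via a union bound. Let $S = \{j : \|\mathbf{N}_{\cdot,j}\|_0 \neq 0\}$; this is a fixed set of size at most $P$. For each $j \in S$, the event $\{\gamma_j = 0\}$ has probability zero, so $\Pr\!\big(\bigcup_{j \in S} \{\gamma_j = 0\}\big) = 0$. On the complement of this null set, $\gamma_j \neq 0$ for every $j \in S$, and $\gamma_j = 0$ for every $j \notin S$ by the deterministic first case; therefore $\{j : \gamma_j \neq 0\} = S$ with probability $1$, which is exactly the statement.

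I do not expect any serious obstacle here: the only subtlety is confirming that $\mathbf{f}\,\mathbf{N}_{\cdot,j}$ genuinely has a non-degenerate distribution when $\mathbf{N}_{\cdot,j} \neq \mathbf{0}$ — this follows because an affine image of a full-rank Gaussian along a nonzero direction is a one-dimensional Gaussian with strictly positive variance, and such a law assigns zero mass to any single point. One should note that $\mathbf{N}$ is treated as fixed (adversarial but not adapted to the realization of $\mathbf{f}$, since $\mathbf{f}$ is generated fresh by the PS at decode time), which is what makes the per-column argument valid; I would state this independence assumption explicitly. The finiteness of $S$ makes the union bound trivial, so no measure-theoretic care beyond countable (here finite) additivity of null sets is needed.
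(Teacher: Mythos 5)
Your proposal is correct and follows essentially the same route as the paper's proof: a column-by-column case analysis showing $\gamma_j = \mathbf{f}\,\mathbf{N}_{\cdot,j}$ is deterministically zero when $\mathbf{N}_{\cdot,j}=\mathbf{0}$ and is a non-degenerate Gaussian (variance $\|\mathbf{N}_{\cdot,j}\|_2^2>0$) hence almost surely nonzero otherwise, then combining over the finitely many columns. Your explicit union bound and the remark that $\mathbf{N}$ is fixed independently of the freshly drawn $\mathbf{f}$ only make explicit what the paper leaves implicit.
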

\begin{proof}
Let us prove that 
	\begin{align*}
		\textit{Pr}( \mathbf{N}_{\cdot,j} \not= 0 \} | \gamma_j \not = 0 ) = 1.
	\end{align*}
	and 
	\begin{align*}
		\textit{Pr}( \gamma_j \not = 0 | \mathbf{N}_{\cdot,j} \not= 0 \} ) = 1.
	\end{align*}
	for any $j$.	
	Combining those two equations we prove the lemma.
	
	The first equation is straightforward. Suppose $\mathbf{N}_{\cdot,j} = 0$.
	Then we immediately have $\gamma_j = \mathbf{f} \mathbf{N}_{\cdot,j} = 0$.
	For the second one, note that $\mathbf{f}$ has entries drawn independently from the standard normal distribution. Therefore we have that $\gamma_j = \mathbf{f} \mathbf{N}_{\cdot,j} \sim \mathcal{N}(\mathbf{1}^T \mathbf{N}_{\cdot,j}, \|\mathbf{N}_{\cdot,j}\|_2^2 )$.
	Since $\gamma_j$ is a random variable with normal distribution, the probability of it being any particular value is $0$.
	In particular, 
	\begin{align*}
		\textit{Pr}( \gamma_j = 0 | \mathbf{N}_{\cdot,j} \not= 0 \} ) = 0,
	\end{align*}	
	and thus 
	\begin{align*}
		\textit{Pr}( \gamma_j \not= 0 | \mathbf{N}_{\cdot,j} \not= 0 \} ) = 1
	\end{align*}	
	which proves the second equation and finishes the proof.
\end{proof}

\begin{lemma}\label{Lemma:Orthogonal}
	$\mathbf{R}^{Cyc} \mathbf{C}_{R}^\dag = \mathbf{N} \mathbf{C}_{R}^\dag$.
\end{lemma}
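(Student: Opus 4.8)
\textbf{Proof proposal for Lemma \ref{Lemma:Orthogonal}.}
The plan is to exploit the fact that the full IDFT matrix $\mathbf{C}$ is unitary, so its rows are orthonormal, and then to use the structure $\mathbf{R}^{Cyc} = \mathbf{G}\mathbf{W} + \mathbf{N}$ with $\mathbf{W} = \begin{bmatrix}\mathbf{Q} & \mathbf{1}_P\end{bmatrix}\mathbf{C}_L$ to see that the $\mathbf{G}\mathbf{W}$ part is annihilated upon right-multiplication by $\mathbf{C}_R^\dag$.

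First I would record the key orthogonality fact: since $\mathbf{C}$ is a $P\times P$ IDFT matrix, $\mathbf{C}\mathbf{C}^\dag = \mathbf{I}_P$, and in particular any row of $\mathbf{C}_L$ (one of the first $P-2s$ rows of $\mathbf{C}$) is orthogonal to any row of $\mathbf{C}_R$ (one of the last $2s$ rows), giving $\mathbf{C}_L \mathbf{C}_R^\dag = \mathbf{0}_{(P-2s)\times 2s}$. Next I would substitute $\mathbf{R}^{Cyc} = \mathbf{G}\mathbf{W} + \mathbf{N}$ and expand $\mathbf{W}$, so that
\begin{align*}
\mathbf{R}^{Cyc}\mathbf{C}_R^\dag
= \mathbf{G}\mathbf{W}\mathbf{C}_R^\dag + \mathbf{N}\mathbf{C}_R^\dag
= \mathbf{G}\begin{bmatrix}\mathbf{Q} & \mathbf{1}_P\end{bmatrix}\mathbf{C}_L\mathbf{C}_R^\dag + \mathbf{N}\mathbf{C}_R^\dag
= \mathbf{G}\begin{bmatrix}\mathbf{Q} & \mathbf{1}_P\end{bmatrix}\mathbf{0} + \mathbf{N}\mathbf{C}_R^\dag
= \mathbf{N}\mathbf{C}_R^\dag,
\end{align*}
which is exactly the claim.

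There is no real obstacle here; the only thing to be careful about is the bookkeeping of which rows of $\mathbf{C}$ are partitioned into $\mathbf{C}_L$ versus $\mathbf{C}_R$, and confirming that the normalization $\tfrac{1}{\sqrt P}$ in the definition of $\mathbf{C}_{jk}$ indeed makes $\mathbf{C}$ unitary (so that distinct rows are genuinely orthogonal, not merely orthogonal up to a scalar, though even a scalar would suffice). I would state the unitarity of $\mathbf{C}$ as a one-line standard fact about the (scaled) DFT and then the rest is immediate matrix algebra.
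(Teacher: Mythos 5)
Your proof is correct and follows exactly the same route as the paper's: substitute $\mathbf{R}^{Cyc} = \mathbf{G}\mathbf{W} + \mathbf{N}$ with $\mathbf{W} = \begin{bmatrix}\mathbf{Q} & \mathbf{1}_P\end{bmatrix}\mathbf{C}_L$ and use unitarity of the IDFT matrix to conclude $\mathbf{C}_L\mathbf{C}_R^\dag = \mathbf{0}_{(P-2s)\times 2s}$. No gaps; your extra remark about the $\tfrac{1}{\sqrt{P}}$ normalization is a fine (if unnecessary) sanity check.
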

\begin{proof}
By definition, $\mathbf{R}^{Cyc} \mathbf{C}_{R}^\dag = \left( \mathbf{G} \mathbf{W} + \mathbf{N} \right) \mathbf{C}_{R}^\dag = \left( \mathbf{G} \begin{matrix} \begin{bmatrix}
\mathbf{Q} & \mathbf{1}
\end{bmatrix}\end{matrix}  \mathbf{C}_L + \mathbf{N} \right) \mathbf{C}_{R}^\dag = \mathbf{G} \begin{matrix} \begin{bmatrix}
\mathbf{Q} & \mathbf{1}
\end{bmatrix}\end{matrix}   \mathbf{C}_L \mathbf{C}_{R}^\dag + \mathbf{N} \mathbf{C}_{R}^\dag  = \mathbf{N} \mathbf{C}_{R}^\dag$.
In the last equation we use the fact that IDFT matrix is unitary and thus $\mathbf{C}_L \mathbf{C}_{R}^\dag = \mathbf{0}_{(P-2s)\times(2s)}$.
\end{proof}

\begin{lemma}\label{Lemma:DFT}
	Let a $P$-dimensional vector $\hat{\mathbf{h}} \triangleq [\hat{h}_0, \hat{h}_1,\cdots, \hat{h}_{P-1}]^T$ be the discrete Fourier transformation (DFT) of a $P$-dimensional vector $\hat{\mathbf{t}} \triangleq [\hat{t}_{1},\hat{t}_{2},\cdots,\hat{t}_{P-1}]^T$ which has at most $s$ non-zero elements, i.e., $\hat{\mathbf{h}} = \mathbf{C}^{\dag} \hat{\mathbf{t}}$ and $\| \mathbf{t} \|_0 \leq s$.
	Then there exists a $s$-dimensional vector $\hat{\beta} \triangleq  [\hat{\beta}_{0},\hat{\beta}_{1},\cdots, \hat{\beta}_{s-1}]^T$, such that
	\begin{align}\label{eq:RecoverLinearSystem}
	\begin{matrix}
		\begin{bmatrix}
			\hat{h}_{P-s-1}      & \hat{h}_{P-s} & \dots & \hat{h}_{P-2} \\
			\hat{h}_{P-s-2}       & \hat{h}_{P-s-1} & \dots & \hat{h}_{P-3} \\
			\hdots & \hdots & \ddots &\vdots \\
			\hat{h}_{P-2s}  & \hat{h}_{P-s+1} & \dots & \hat{h}_{P-s-1}
		\end{bmatrix}
		\hat{\beta}
		=
		\begin{bmatrix}
			\hat{h}_{P-1}\\
			\hat{h}_{P-2}\\
			\vdots\\
			\hat{h}_{P-s}
		\end{bmatrix}
	\end{matrix}.
\end{align}
Furthermore, for any $\hat{\beta}$ satisfying the above equations, 
\begin{align}\label{eq:InductionEquation}
	\hat{h}_{\ell} = \sum_{u=0}^{s-1} \hat{\beta}_{u} \hat{h}_{\ell+u-s}, 
\end{align}
always holds for all $\ell$, where $\hat{h}_{\ell} = \hat{h}_{P+\ell}$.
\end{lemma}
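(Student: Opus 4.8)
This is the classical Prony / Berlekamp--Massey fact---a sum of few exponentials obeys a short linear recurrence whose characteristic polynomial is the ``error--locator'' polynomial---specialized to the cyclic, length-$P$ setting. I would prove both assertions by first writing $\hat{\mathbf h}$ as a sum of at most $s$ exponentials, then exhibiting the locator polynomial explicitly for the existence claim, and finally running a Vandermonde argument for the ``furthermore'' claim.

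\emph{Setting up.} Let $\{k_1,\dots,k_m\}=\operatorname{supp}(\hat{\mathbf t})$, so $m=\|\hat{\mathbf t}\|_0\le s$. Since $\hat{\mathbf h}=\mathbf C^{\dag}\hat{\mathbf t}$ and $\mathbf C$ is, up to unitary scaling, the $P$-point Fourier matrix, one obtains
\begin{align*}
\hat h_\ell=\sum_{p=1}^{m}c_p\, z_p^{\,\ell},\qquad z_p\triangleq\exp\!\Big(-\tfrac{2\pi i}{P}(k_p-1)\Big),\quad c_p\triangleq\tfrac{1}{\sqrt{P}}\,\hat t_{k_p}\neq 0,
\end{align*}
with $z_1,\dots,z_m$ \emph{distinct} nonzero $P$-th roots of unity. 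Read as a function of $\ell\in\mathbb Z$, the sequence $(\hat h_\ell)$ is $P$-periodic, which is exactly the cyclic convention $\hat h_\ell=\hat h_{P+\ell}$ of the statement; from now on indices are taken modulo $P$. For a candidate $\hat\beta=(\hat\beta_0,\dots,\hat\beta_{s-1})$ I would set $B(x)=x^{s}-\sum_{u=0}^{s-1}\hat\beta_u x^{u}$ and introduce the residual $\rho_{\ell'}\triangleq\hat h_{\ell'+s}-\sum_{u=0}^{s-1}\hat\beta_u\,\hat h_{\ell'+u}$. Substituting the exponential form and using $z_p^{\ell'+s}-\sum_u\hat\beta_u z_p^{\ell'+u}=z_p^{\ell'}B(z_p)$ gives $\rho_{\ell'}=\sum_{p=1}^{m}\big(c_p B(z_p)\big)z_p^{\,\ell'}$, again a combination of the same $m\le s$ exponentials. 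Unwinding the index bookkeeping, \eqref{eq:RecoverLinearSystem} is exactly the assertion ``$\rho_{\ell'}=0$ for the $s$ consecutive indices $\ell'=P-2s,\dots,P-s-1$'' (all of which lie in $\{0,\dots,P-1\}$, so no wrap-around is involved), while \eqref{eq:InductionEquation} for index $\ell$ is exactly ``$\rho_{\ell-s}=0$.''

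\emph{The two claims.} For existence, I would take $B(x)=x^{\,s-m}\prod_{p=1}^{m}(x-z_p)$, a monic polynomial of degree $s$; then $B(z_p)=0$ for every $p$, so $\rho\equiv 0$, and in particular this $B$'s coefficient vector $\hat\beta$ satisfies \eqref{eq:RecoverLinearSystem}. For the converse, let $\hat\beta$ be \emph{any} solution of \eqref{eq:RecoverLinearSystem}, i.e.\ $\rho_{\ell'}=0$ for $\ell'=P-2s,\dots,P-s-1$. Writing $a_p:=c_pB(z_p)$ and keeping only the first $m$ of these $s$ indices, the relations $\sum_{p}a_p z_p^{\ell'}=0$ become, after factoring $z_p^{P-2s}$ out of the $p$-th term, a homogeneous Vandermonde system in the distinct nodes $z_1,\dots,z_m$; invertibility forces $a_p=0$ for all $p$, hence $\rho_{\ell'}=0$ for \emph{every} $\ell'$, which is \eqref{eq:InductionEquation} for all $\ell$, the cyclic indexing being automatic from $P$-periodicity. (The degenerate case $\hat{\mathbf t}=\mathbf 0$, $m=0$, is vacuous.)

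\emph{Main obstacle.} I expect the only substantive step to be the converse direction---promoting ``$s$ consecutive equations hold'' to ``the recurrence holds everywhere.'' The leverage is that the residual $\rho$ still lies in the span of only $m\le s$ exponentials, so $m$ consecutive zeros---rather than the $s+1$ one might naively need---already kill it via a Vandermonde matrix. Everything else---matching the somewhat awkward index ranges of \eqref{eq:RecoverLinearSystem} to $\rho_{P-2s}=\dots=\rho_{P-s-1}=0$, and checking that the $x^{s-m}$ padding keeps the locator polynomial monic of degree $s$ without introducing spurious constraints---is routine and essentially a bookkeeping exercise.
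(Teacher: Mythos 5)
Your proof is correct, and while the existence half coincides with the paper's argument, the ``furthermore'' half takes a genuinely different and cleaner route. For existence, both you and the paper exhibit the locator polynomial whose roots are the active exponentials $z_p$ and read off $\hat\beta$ from its coefficients; your $x^{s-m}$ padding is just a more careful version of the paper's construction, which tacitly assumes $\hat{\mathbf t}$ has exactly $s$ nonzero entries. For the converse, the paper fixes the reference solution $\hat\beta^*$ and runs an induction on $\ell$, using Hankel-matrix manipulations (multiplying by $[\theta_{s-1},\dots,\theta_0]$ to shift rows) to push the recurrence forward one index at a time; you instead observe that the residual $\rho_{\ell'}=\hat h_{\ell'+s}-\sum_u\hat\beta_u\hat h_{\ell'+u}$ is itself a combination of the same $m\le s$ exponentials with coefficients $c_pB(z_p)$, so the $s$ consecutive zeros encoded in \eqref{eq:RecoverLinearSystem} (indeed just $m$ of them) force all these coefficients to vanish by invertibility of an $m\times m$ Vandermonde system in the distinct nonzero nodes $z_p$, whence $\rho\equiv0$ and \eqref{eq:InductionEquation} holds for every $\ell$ by $P$-periodicity. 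Your approach buys a shorter argument with no induction bookkeeping, makes the mechanism (few exponentials obey a short recurrence, and few consecutive zeros kill such a sequence) explicit, and handles the degenerate case $\|\hat{\mathbf t}\|_0<s$ uniformly; the paper's induction, while more laborious, is self-contained at the level of linear-algebra identities and does not require rewriting $\rho$ in exponential form. Your index matching ($\ell'=\ell-s$, rows of \eqref{eq:RecoverLinearSystem} corresponding to $\ell=P-1,\dots,P-s$, i.e.\ $\ell'=P-2s,\dots,P-s-1$) is consistent with the intended Hankel structure of the displayed matrix, whose printed entries contain minor typos in the paper.
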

\begin{proof}
Let $i_1,i_2,\cdots, i_s$ be the index of the non-zero elements in $\hat{\mathbf{t}}$.
Let us define the location polynomial $p(\omega) = \prod_{k=1}^{s}(\omega-e^{-\frac{2\pi i}{P}i_k}  )\triangleq \sum_{k=0}^{s} \theta_k \omega^k$, where $\theta_s = 1$.
Let a $s$-dimensional vector $\hat{\beta}^* \triangleq -[\theta_0,\theta_1,\cdots, \theta_{s-1}]^T$.

Now we prove that  $\hat{\beta}=\hat{\beta}^*$ is a solution to the system of linear equations \eqref{eq:RecoverLinearSystem}.
To see this, note that by definition, for any $\lambda$, we have $0 = p(e^{-\frac{2\pi i}{P}i_\lambda}) = \sum_{k=0}^{s} \theta_k e^{-\frac{2\pi i}{P}i_\lambda k}$.
Multiply both  side by $\hat{t}_{i_\lambda} e^{-\frac{2\pi i}{P}i_\lambda \eta }$, we have
\begin{align*}
0 &= \hat{t}_{i_\lambda} e^{-\frac{2\pi i}{P}i_\lambda \eta} \sum_{k=0}^{s} \theta_k e^{-\frac{2\pi i}{P}i_\lambda k}\\
&=\hat{t}_{i_\lambda} \sum_{k=0}^{s} \theta_k e^{-\frac{2\pi i}{P}i_\lambda (k+\eta)}.
\end{align*}
Summing over $\lambda$, we have 
\begin{align*}
	0 &=\sum_{\lambda = 1}^{s}\hat{t}_{i_\lambda} \sum_{k=0}^{s} \theta_k e^{-\frac{2\pi i}{P}i_\lambda (k+\eta)}\\
	&=\sum_{k=0}^{s} \theta_k \sum_{\lambda = 1}^{s}\hat{t}_{i_\lambda}   e^{-\frac{2\pi i}{P}i_\lambda (k+\eta)}.
\end{align*}
By definition, $\hat{h}_j = \mathbf{C}_{j,\cdot} \hat{\mathbf{t}} = \frac{1}{\sqrt{P}}\sum_{k=0}^{P-1} e^{-\frac{2\pi i }{P} j k} \hat{t}_k  =  \frac{1}{\sqrt{P}}\sum_{\lambda=1}^{s} \hat{t}_{i_\lambda} e^{-\frac{2\pi i}{P}  i_\lambda j} $.
Hence, the above equation becomes
\begin{align*}
	0 &=\sum_{k=0}^{s} \theta_k \sqrt{P} \hat{h}_{k+\eta}
\end{align*}
which is equivalent to
\begin{align*}
	\hat{h}_{s+\eta} &=\sum_{k=0}^{s-1} -\theta_k \hat{h}_{k+\eta}
\end{align*}
due to the fact that $\theta_s = 1$. 
By setting $\eta=-s+P-1,-s+P-2,\cdots,-s+P-s$, one can easily see that the above equation becomes identical to the system of linear equations in \eqref{eq:RecoverLinearSystem} with $\hat{\beta} =\hat{\beta}^*=-[\theta_0,\theta_1,\cdots, \theta_{s-1}]^T$.

Now let us prove for any $\hat{\beta}$ that satisfies equation \eqref{eq:RecoverLinearSystem}, we have \eqref{eq:InductionEquation}. 
Note that an equivalent form of \eqref{eq:InductionEquation} is that the following system of linear equations

	\begin{align}\label{eq:CycInduction}
		\begin{matrix}
			\begin{bmatrix}
				\hat{h}_{P-s-1+\ell}      & \hat{h}_{P-s+\ell} & \dots & \hat{h}_{P-2+\ell} \\
				\hat{h}_{P-s-2+\ell}       & \hat{h}_{P-s-1+\ell} & \dots & \hat{h}_{P-3+\ell} \\
				\hdots & \hdots & \ddots &\vdots \\
				\hat{h}_{P-2s+\ell}  & \hat{h}_{P-s+1+\ell} & \dots & \hat{h}_{P-s-1+\ell}
			\end{bmatrix}
			\hat{\beta}
			=
			\begin{bmatrix}
				\hat{h}_{P-1+\ell}\\
				\hat{h}_{P-2+\ell}\\
				\vdots\\
				\hat{h}_{P-s+\ell}
			\end{bmatrix}
		\end{matrix}
	\end{align}
	holds for $\ell=0,1,2\cdots,P-1$.
	We prove this by induction.
	When $\ell=1$, this is true since $\hat{\beta}$ satisfies the system of linear equations in \eqref{eq:RecoverLinearSystem}.
	Assume it holds for $\ell = \mu$, i.e.,
	\begin{align*}
		\begin{matrix}
			\begin{bmatrix}
				\hat{h}_{P-s-1+\mu}      & \hat{h}_{P-s+\mu} & \dots & \hat{h}_{P-2+\mu} \\
				\hat{h}_{P-s-2+\mu}       & \hat{h}_{P-s-1+\mu} & \dots & \hat{h}_{P-3+\mu} \\
				\hdots & \hdots & \ddots &\vdots \\
				\hat{h}_{P-2s+\mu}  & \hat{h}_{P-s+1+\mu} & \dots & \hat{h}_{P-s-1+\mu}
			\end{bmatrix}
			\hat{\beta}
			=
			\begin{bmatrix}
				\hat{h}_{P-1+\mu}\\
				\hat{h}_{P-2+\mu}\\
				\vdots\\
				\hat{h}_{P-s+\mu}
			\end{bmatrix}
		\end{matrix}
	\end{align*}	
	Now we need to prove it also holds  when $\ell=\mu+1$, \ie 
	\begin{align*}
		\begin{matrix}
			\begin{bmatrix}
				\hat{h}_{P-s-1+\mu+1}      & \hat{h}_{P-s+\mu+1} & \dots & \hat{h}_{P-2+\mu+1} \\
				\hat{h}_{P-s-2+\mu+1}       & \hat{h}_{P-s-1+\mu+1} & \dots & \hat{h}_{P-3+\mu+1} \\
				\hdots & \hdots & \ddots &\vdots \\
				\hat{h}_{P-2s+\mu+1}  & \hat{h}_{P-s+1+\mu+1} & \dots & \hat{h}_{P-s-1+\mu+1}
			\end{bmatrix}
			\hat{\beta}
			=
			\begin{bmatrix}
				\hat{h}_{P-1+\mu+1}\\
				\hat{h}_{P-2+\mu+1}\\
				\vdots\\
				\hat{h}_{P-s+\mu+1}
			\end{bmatrix}
		\end{matrix}.
	\end{align*}
	First, since both $\hat{\beta}, \hat{\beta}^*$ satisfy the induction assumption,	we must have 
	\begin{align*}
		\begin{matrix}
			\begin{bmatrix}
				\hat{h}_{P-s-1+\mu}      & \hat{h}_{P-s+\mu} & \dots & \hat{h}_{P-2+\mu} \\
				\hat{h}_{P-s-2+\mu}       & \hat{h}_{P-s-1+\mu} & \dots & \hat{h}_{P-3+\mu} \\
				\hdots & \hdots & \ddots &\vdots \\
				\hat{h}_{P-2s+\mu}  & \hat{h}_{P-s+1+\mu} & \dots & \hat{h}_{P-s-1+\mu}
			\end{bmatrix}
			\end{matrix}
			(\hat{\beta} -\hat{\beta}^*)
			=\mathbf{0}_{s}.
	\end{align*}	
	Due to the induction assumption, one can verify that 
	\begin{align*}
		[\theta_{s-1},\theta_{s-2},\cdots, \theta_{0}]
		\begin{matrix}
			\begin{bmatrix}
				\hat{h}_{P-s-1+\mu}      & \hat{h}_{P-s+\mu} & \dots & \hat{h}_{P-2+\mu} \\
				\hat{h}_{P-s-2+\mu}       & \hat{h}_{P-s-1+\mu} & \dots & \hat{h}_{P-3+\mu} \\
				\hdots & \hdots & \ddots &\vdots \\
				\hat{h}_{P-2s+\mu}  & \hat{h}_{P-s+1+\mu} & \dots & \hat{h}_{P-s-1+\mu}
			\end{bmatrix}
		\end{matrix}
		= 
		\begin{matrix}
			\begin{bmatrix}
		\hat{h}_{P-s+\mu}  & \hat{h}_{P-s+\mu+1} & \cdots
		\hat{h}_{P-2+\mu+1} 
			\end{bmatrix}
		\end{matrix},
	\end{align*}	
	and thus we have 
	\begin{align*}
		&\begin{matrix}
			\begin{bmatrix}
				\hat{h}_{P-s+\mu}  & \hat{h}_{P-s+\mu+1} & \cdots
				\hat{h}_{P-2+\mu+1} 
			\end{bmatrix}
		\end{matrix}
		(\hat{\beta} -\hat{\beta}^*)
		\\ = 	&[\theta_{s-1},\theta_{s-2},\cdots, \theta_{0}]
		\begin{matrix}
			\begin{bmatrix}
				\hat{h}_{P-s-1+\mu}      & \hat{h}_{P-s+\mu} & \dots & \hat{h}_{P-2+\mu} \\
				\hat{h}_{P-s-2+\mu}       & \hat{h}_{P-s-1+\mu} & \dots & \hat{h}_{P-3+\mu} \\
				\hdots & \hdots & \ddots &\vdots \\
				\hat{h}_{P-2s+\mu}  & \hat{h}_{P-s+1+\mu} & \dots & \hat{h}_{P-s-1+\mu}
			\end{bmatrix}
		\end{matrix}
		(\hat{\beta} -\hat{\beta}^*)
		=0.
	\end{align*}		
Hence,
\begin{align*}
		& \begin{matrix}
			\begin{bmatrix}
				\hat{h}_{P-s+\mu}      & \hat{h}_{P-s-1+\mu} & \dots & \hat{h}_{P-1+\mu} \\
			\end{bmatrix}
		\end{matrix}
			\hat{\beta}\\
			= & 	\begin{matrix}
				\begin{bmatrix}
					\hat{h}_{P-s+\mu}      & \hat{h}_{P-s-1+\mu} & \dots & \hat{h}_{P-1+\mu} \\
				\end{bmatrix}
			\end{matrix}
			\hat{\beta}^* + \begin{matrix}
				\begin{bmatrix}
					\hat{h}_{P-s+\mu}      & \hat{h}_{P-s-1+\mu} & \dots & \hat{h}_{P-1+\mu} \\
				\end{bmatrix}
			\end{matrix}
			(\hat{\beta} - \hat{\beta}^*) 
			=\hat{h}_{P+\mu} = \hat{h}_{P-1+\mu+1}.
	\end{align*}	
	Furthermore, by induction assumption, we have
	\begin{align*}
		& \begin{matrix}
			\begin{bmatrix}
				\hat{h}_{P-s-2+\mu+1}      & \hat{h}_{P-s-1+\mu+1} & \dots & \hat{h}_{P-3+\mu+1} \\
				\hat{h}_{P-s-3+\mu+1}       & \hat{h}_{P-s-2+\mu+1} & \dots & \hat{h}_{P-4+\mu+1} \\
				\hdots & \hdots & \ddots &\vdots \\
				\hat{h}_{P-2s+\mu+1}  & \hat{h}_{P-s+1+\mu+1} & \dots & \hat{h}_{P-s+1+\mu+1}
			\end{bmatrix}
			\hat{\beta}
		\end{matrix}	
		= \begin{matrix}
			\begin{bmatrix}
				\hat{h}_{P-s-1+\mu}      & \hat{h}_{P-s-2+\mu} & \dots & \hat{h}_{P-2+\mu} \\
				\hat{h}_{P-s-2+\mu}       & \hat{h}_{P-s-1+\mu} & \dots & \hat{h}_{P-3+\mu} \\
				\hdots & \hdots & \ddots &\vdots \\
				\hat{h}_{P-(2s-1)+\mu}  & \hat{h}_{P-s+\mu} & \dots & \hat{h}_{P-s+\mu}
			\end{bmatrix}
		\hat{\beta}
		\end{matrix}\\
		=&
			\begin{matrix}
			\begin{bmatrix}
				\hat{h}_{P-1+\mu}\\
				\hat{h}_{P-2+\mu}\\
				\vdots\\
				\hat{h}_{P-(s-1)+\mu}
			\end{bmatrix}
		\end{matrix}
		=
		\begin{matrix}
			\begin{bmatrix}
				\hat{h}_{P-2+(\mu+1)}\\
				\hat{h}_{P-3+(\mu+1)}\\
				\vdots\\
				\hat{h}_{P-s+(\mu+1)}
			\end{bmatrix}
		\end{matrix}.		
	\end{align*}	
	Combing those two result we have proved 
	\begin{align*}
		\begin{matrix}
			\begin{bmatrix}
				\hat{h}_{P-s-1+\mu+1}      & \hat{h}_{P-s+\mu+1} & \dots & \hat{h}_{P-2+\mu+1} \\
				\hat{h}_{P-s-2+\mu+1}       & \hat{h}_{P-s-1+\mu+1} & \dots & \hat{h}_{P-3+\mu+1} \\
				\hdots & \hdots & \ddots &\vdots \\
				\hat{h}_{P-2s+\mu+1}  & \hat{h}_{P-s+1+\mu+1} & \dots & \hat{h}_{P-s-1+\mu+1}
			\end{bmatrix}
			\hat{\beta}
			=
			\begin{bmatrix}
				\hat{h}_{P-1+\mu+1}\\
				\hat{h}_{P-2+\mu+1}\\
				\vdots\\
				\hat{h}_{P-s+\mu+1}
			\end{bmatrix}
		\end{matrix}.
	\end{align*}	
	By induction, the equation \ref{eq:CycInduction} holds for all $\ell=0,1,\cdots,P-1$.
	Equation \ref{eq:CycInduction} immediately finishes the proof.	
\end{proof}

Now we are ready to prove Lemma \ref{Thm:CycCodeDetection}.
By Lemma \ref{Lemma:CycProb}, for the $P$-dimensional vector $\gamma = \left(\mathbf{f} \mathbf{N} \right)^T$, we have  
\begin{align*}
	\textit{Pr}( \{ j : \gamma_j \not= 0 \} = \{j: \| \mathbf{N}_{\cdot,j} \|_0 \not= 0 \}) = 1,
\end{align*}

Since there are at most $s$ adversaries, the number of non-zero columns in $\mathbf{N}$ is at most $s$ and hence there are at most $s$ non-zero elements in $\gamma$, i.e., $\|\gamma\|_0\leq s$, with probability 1.
Now consider the case when $\|\gamma\|_0\leq s$.
First note that $[h_{P-2s}, h_{P-2s+1},\cdots, h_{P-1}] = \mathbf{f} \mathbf{R}^{Cyc} \mathbf{C}_{R}^{\dag} = \mathbf{f} \mathbf{N} \mathbf{C}_{R}^{\dag} = \gamma^T \mathbf{C}_{R}^{\dag}$, where the second equation is due to Lemma \ref{Lemma:Orthogonal}.
Now let us construct $\hat{\mathbf{h}} = [\hat{h}_0,\hat{h}_1,\cdots, \hat{h}_{P-1}]^T$ by $\hat{\mathbf{h}} = \mathbf{C}^{\dag} \gamma$.
Note that $\mathbf{C}$ is symmetric and thus $\mathbf{C}^{\dag} = \left[\mathbf{C}^{\dag}\right]^T$. One can easily verify that $\hat{h}_{\ell} = {h}_{\ell},\ell = P-2s, P-2s+1,\cdots,P-1$.
Therefore, the equation 
\begin{align*}
	\begin{matrix}
		\begin{bmatrix}
			h_{P-s-1}      & h_{P-s} & \dots & h_{P-2} \\
			h_{P-s-2}       & h_{P-s-1} & \dots & h_{P-3} \\
			\hdots & \hdots & \ddots &\vdots \\
			h_{P-2s}  & h_{P-s+1} & \dots & h_{P-s+1}
		\end{bmatrix}
		\begin{bmatrix}
			\beta_{0}\\
			\beta_{1}\\
			\vdots\\
			\beta_{s-1}
		\end{bmatrix}
		=
		\begin{bmatrix}
			h_{P-1}\\
			h_{P-2}\\
			\vdots\\
			h_{P-s}
		\end{bmatrix}
	\end{matrix}
\end{align*}
becomes 
\begin{align*}
	\begin{matrix}
		\begin{bmatrix}
			\hat{h}_{P-s-1}      & h_{P-s} & \dots & \hat{h}_{P-2} \\
			\hat{h}_{P-s-2}       & \hat{h}_{P-s-1} & \dots & \hat{h}_{P-3} \\
			\hdots & \hdots & \ddots &\vdots \\
			\hat{h}_{P-2s}  & \hat{h}_{P-s+1} & \dots & \hat{h}_{P-s+1}
		\end{bmatrix}
		\begin{bmatrix}
			\beta_{0}\\
			\beta_{1}\\
			\vdots\\
			\beta_{s-1}
		\end{bmatrix}
		=
		\begin{bmatrix}
			\hat{h}_{P-1}\\
			\hat{h}_{P-2}\\
			\vdots\\
			\hat{h}_{P-s}
		\end{bmatrix}
	\end{matrix}
\end{align*}
which always has a solution.
Assume we find one solution $\bar{\beta} = [\bar{\beta}_0, \bar{\beta}_1,\cdots, \bar{\beta}_{P-1}]^T$.
By the second part of Lemma \ref{Lemma:DFT}, we have 
\begin{align*}
	\hat{h}_{\ell} = \sum_{u=0}^{s-1} \bar{\beta}_{u} \hat{h}_{\ell+u-s}, \forall \ell. 
\end{align*}
Now we prove by induction that $h_\ell = \hat{h}_\ell, \ell=0,1,\cdots, P-1$.

When $\ell=0$, we have 
\begin{align*}
	\hat{h}_{0} = \sum_{u=0}^{s-1} \bar{\beta}_{u} \hat{h}_{u-s} = \sum_{u=0}^{s-1} \bar{\beta}_{u} {h}_{u-s} = h_0
\end{align*}
where the second equation is due to the fact that  	$[h_{P-2s}, h_{P-2s-1},\cdots, h_{P-1}] = [\hat{h}_{P-2s}, \hat{h}_{P-2s-1},\cdots, \hat{h}_{P-1}]$ and $\hat{h}_{P+\ell} = \hat{h}_{\ell}, {h}_{P+\ell} = {h}_{\ell}$ (by definition).

Assume that for $\ell \leq\mu$, $\hat{h}_\ell = h_\ell$. 

When $\ell=\mu+1$, we have
\begin{align*}
	\hat{h}_{\mu+1} = \sum_{u=0}^{s-1} \bar{\beta}_{u} \hat{h}_{\mu+1+u-s} = \sum_{u=0}^{s-1} \bar{\beta}_{u} {h}_{\mu+1+u-s} = h_{\mu+1}
\end{align*}
where the second equation is because of the induction assumption for $\ell \leq\mu$, $\hat{h}_\ell = h_\ell$.

Thus, we have $h_\ell=\hat{h}_\ell$ for all $\ell$, which means $\mathbf{h} = \hat{\mathbf{h}} = \mathbf{C}^{\dag} \gamma$.
Thus $\mathbf{t}$,  the IDFT of $\mathbf{h}$, becomes $\mathbf{t} = \mathbf{C} \mathbf{h} = \mathbf{C} \mathbf{C}^{\dag} \gamma = \gamma$. 
Then the returned Index Set $V=\{j:e_{j+1}\not=0\} = \{j:\gamma_{j}\not=0\}$.
By Lemma \ref{Lemma:CycProb}, with probability 1, 
$\{j:\gamma_{j}\not=0\} = \{j:\|\mathbf{n}_{j}\|_0\not=0\}$.
Therefore, we have with probability 1, 
$V = \{j:\|\mathbf{n}_{j}\|_0\not=0\}$, which finishes the proof. \qed

\subsection{Proof of Theorem \ref{Thm:CycCodeOptimality} }
We first prove the correctness of the cyclic code.
By Lemma \ref{Thm:CycCodeDetection}, the set $U$ contains the index of all non-adversarial compute nodes with probability 1.
By Lemma \ref{Thm:CycCodeColumnSpan}, there exists $\mathbf{b}$ such that $\mathbf{W}_{\cdot,U} b = \mathbf{1}$. 
Therefore, $\mathbf{u}^{Cyc} = \mathbf{R}_{\cdot, U}^{Cyc} \mathbf{b} =  (\mathbf{G} \mathbf{W}+\mathbf{N})_{\cdot,U} \mathbf{b} = \mathbf{G}\mathbf{W}_{\cdot,U} \mathbf{b} = \mathbf{G} \mathbf{1}_{P}$. 
Thus,  The cyclic code  $(\mathbf{A}^{\mathit{Cyc}}, E^{\mathit{Cyc}}, D^{Cyc})$ can recover the desired gradient and hence resist any $\leq s$ adversaries with probability 1.

Next we show the efficiency of the cyclic code.
By the construction of $\mathbf{A}^{Cyc}$ and $\mathbf{W}$, the redundancy ratio is $2s+1$ which reaches the lower bound.
Each compute node needs to compute a linear combination of the gradients of the data it holds, which needs $\mathcal{O}((2s+1)d)$ computations.
For the PS, the detection function $\phi(\cdot)$ takes $\mathcal{O}(d)$ (generating the random vector $\mathbf{f}$) + $\mathcal{O}(dP + 2 P s )$ (computing $\mathbf{f} \mathbf{R} \mathbf{C}_{R}^{\dag}$) + $\mathcal{O}(s^2)$ (solving the Toeplitz system of linear equations in \eqref{eq:RecoverLinearSystem} ) + $\mathcal{O}((P-2s)s)$ (computing $h_{\ell}, \ell=0,1,2,\cdots, P-2s-1$ ) +  $\mathcal{O}(P\log P)$ (computing the DFT of $\mathbf{h}$ ) + $\mathcal{O}(P)$ (examining the non-zero elements of $\mathbf{t}$ ) = $\mathcal{O}( d + dP + 2 P s +s^2 +(P-2s)s+ P\log P+P) = \mathcal{O}( dP + P s + P\log P) $.
Finding the vector $\mathbf{b}$ takes $\mathcal{O}(P^3)$ (by simply constructing $\mathbf{b}$ via $\left[\mathbf{C}_{L}\right]_{\cdot,\bar{U}}$, though better algorithms may exist).
The recovering equation $\mathbf{R}_{\cdot,U} \mathbf{b}$ takes $O(dP)$.
Thus, in total, the decoder at the PS takes $\mathcal{O}(dP + P^3 + P\log P)$.
When $d\gg P$, \ie $d = \Omega(P^2)$, this becomes $\mathcal{O}(dP)$.
Therefore, $(\mathbf{A}^{\mathit{Cyc}}, E^{\mathit{Cyc}}, D^{Cyc})$ also achieves linear-time encoding and decoding. \qed

\section{Streaming Majority Vote Algorithm}
In this section we present the Boyer–-Moore majority vote algorithm \cite{MajorityVote}, which is an algorithm that only needs computation linear in the size of the sequence.

\begin{algorithm}[H]
	\SetKwInOut{Input}{Input}
	\SetKwInOut{Output}{Output}
	\Input{$n$ items $I_1, I_2,\cdots, I_n$}
	\Output{The majority of the $n$ items}
	Initialize an element $\textit{Ma} = I_1$ and a counter $\textit{Counter} = 0$.\\
    \For{$i=1$ \KwTo $n$ }
    {
        \uIf{$\textit{Counter}==0$}
        {
             $Ma = I_i$.\\
             $\textit{Counter} = 1$.
        }
        \uElseIf{$Ma == I_i$}
        {
            $\textit{Counter} = \textit{Counter} + 1$.
        }
        \Else
        {
            $\textit{Counter} = \textit{Counter} - 1$.
        }
  }
  Return $Ma$.
	\caption{Streaming Majority Vote.}
	\label{Alg:MajorityVote.}
\end{algorithm}
Clearly this algorithm runs in linear time and it is known that if there is a majority item then the algorithm finally will return it \cite{MajorityVote}.

\end{document}